\newtheorem{definition}{Definition}
\newtheorem{proposition}{Proposition}
\newtheorem{theorem}{Theorem}
\newtheorem{lemma}{Lemma}
\newtheorem{remark}{Remark}
\newtheorem{assumption}{Assumption}
\def\eps{\varepsilon}
\newcommand{\ean}{\end{eqnarray*}}
\newcommand{\ban}{\begin{eqnarray*}}
\newcommand{\ea}{\end{eqnarray}}
\newcommand{\ba}{\begin{eqnarray}}
\newcommand{\EE}{\mathbb{E}}
\setlist[itemize]{leftmargin=.4cm} 
\setlist[enumerate]{leftmargin=.4cm} 
\newcommand{\PLH}{{\mkern-2mu\times\mkern-2mu}}
\title{Conditional score-based diffusion models for \\
Bayesian inference in infinite dimensions}
\newcommand\notsosmall{\@setfontsize\notsosmall{9.5}{11}}
\author{%
  Lorenzo Baldassari \\
  University of Basel \\
  \texttt{\texttt{\notsosmall lorenzo.baldassari@unibas.ch}} \\
  \And
  Ali Siahkoohi \\
  Rice University \\
  \texttt{\notsosmall alisk@rice.edu} \\
  \And
  Josselin Garnier\\
  Ecole Polytechnique, IP Paris\\
  \texttt{\notsosmall josselin.garnier@polytechnique.edu}\\
  \And
  Knut S\o{}lna\\
  University of California Irvine\\
  \texttt{\notsosmall ksolna@math.uci.edu}\\
  \And
  Maarten V. de Hoop\\
  Rice University\\
  \texttt{\notsosmall mvd2@rice.edu}\\
}
\begin{document}

	\maketitle

		\begin{abstract}
Since their initial introduction, score-based diffusion models (SDMs) have been successfully applied to solve a variety of linear inverse problems in finite-dimensional vector spaces due to their ability to efficiently approximate the posterior distribution. However, using SDMs for inverse problems in infinite-dimensional function spaces has only been addressed recently, primarily through methods that learn the unconditional score. While this approach is advantageous for some inverse problems, it is mostly heuristic and involves numerous computationally costly forward operator evaluations during posterior sampling. To address these limitations, we propose a {theoretically grounded method for sampling from the posterior} of infinite-dimensional Bayesian linear inverse problems based on {amortized conditional SDMs}. In particular, we prove that one of the most successful approaches for estimating the conditional score in finite dimensions---the conditional denoising estimator---can also be applied in infinite dimensions. A significant part of our analysis is dedicated to demonstrating that extending infinite-dimensional SDMs to the conditional setting requires careful consideration, as the conditional score typically blows up for small times, contrarily to the unconditional score. We conclude by presenting stylized and large-scale numerical examples that validate our approach, offer additional insights, and demonstrate that our method enables {large-scale}, {discretization-invariant} Bayesian inference.
	\end{abstract}



\section{Introduction}

Inverse problems seek to estimate unknown parameters using noisy observations or measurements. One of the main challenges is that they are often ill-posed. A problem is ill-posed if there are no solutions, or there are many (two or more) solutions, or the solution is unstable  in relation to small errors in the observations \cite{hadamard1923lectures}.  A common approach to transform the original ill-posed problem into a well-posed one is to formulate it as a least-squares  optimization problem that minimizes the difference between observed and predicted data. However, minimization of the data misfit alone negatively impacts the quality of the
obtained solution due to the presence of noise in the data and the inherent nullspace of the forward operator \cite{aster2018parameter, ito2014inverse}. Casting the inverse problem into a {Bayesian probabilistic framework} allows, instead, for a full characterization of all the possible solutions \cite{lehtinen1989linear,  stuart2010inverse, tarantola2005inverse}. The Bayesian approach consists of putting a prior probability distribution describing uncertainty in the parameters
of interest, and finding the posterior distribution over these parameters \cite{knapik2011bayesian}. The prior must be chosen
appropriately in order to mitigate the ill-posedness of the problem and facilitate computation of the posterior. By adopting the Bayesian formulation, rather than finding one single
solution to the inverse problem (e.g., the maximum a posteriori estimator \cite{bennett2005inverse}), a distribution of solutions---the {posterior}---is finally
obtained, whose samples are consistent with the observed data. The posterior distribution can then be
sampled to extract statistical information that allows for uncertainty quantification \cite{stuart2014uncertainty}.

Over the past few years, deep learning-based methods have been successfully applied to analyze linear inverse problems in a Bayesian fashion. In particular, recently introduced score-based diffusion models (SDMs) \cite{song2021scorebased} have become increasingly popular, due to their ability of producing approximating samples from the posterior distribution \cite{ kawar2021snips,song2022solving}. An SDM consists of a diffusion process, which gradually perturbs the data distribution toward  a tractable distribution according to a prescribed stochastic differential equation (SDE) by progressively injecting Gaussian noise, and a generative model, which entails a denoising process defined by approximating the time-reversal of the diffusion. Crucially, the denoising stage is also a diffusion process \cite{anderson1982reverse} whose drift depends on the logarithmic gradients of the noised data densities---the {scores}---which are estimated by \citet{song2021scorebased} using a neural network. Among the advantages of SDMs over other deep generative models is that they produce high-quality samples, matching the performance of generative adversarial networks \cite{dhariwal2021diffusion}, without suffering from training instabilities and mode-collapse \cite{gnaneshwar2022score, song2021scorebased}. Additionally, SDMs are not restricted to invertible architectures like normalizing flows \cite{lee2023convergence}, which often limits the complexity of the distributions that can be learned.
Finally, and most importantly to the scope of this work, SDMs have demonstrated superior performance in a variety of inverse problems, such as image inpainting \cite{song2019generative, song2021scorebased}, image colorization \cite{song2021scorebased}, compressing sensing, and medical imaging \cite{jalal2021robust, song2022solving}.

In the aforementioned cases, SDMs have been applied by assuming that the data distribution of interest is supported on a finite-dimensional vector space. However, in many inverse problems, especially those governed by partial differential equations (PDEs), the unknown parameters to be estimated are {functions} (e.g., coefficient functions, boundary and initial conditions, or source functions) that exist in a suitable function space, typically an {infinite-dimensional Hilbert space}. The inverse heat equation or the elliptic inverse source problem presented in \cite{dashti2013bayesian} are typical examples of ill-posed inverse problems that are naturally formulated in infinite-dimensional Hilbert spaces. In addition to these PDE-based examples, other interesting cases that are not PDE-based include geometric inverse problems (e.g., determining the Riemann metric from geodesic information or the background velocity map from travel time information in geophysics \cite{uhlmann2016inverse}) and inverse problems involving singular integral operators \cite{dynin1978inversion}.
A potential solution for all of these problems could be to discretize the input and output functions into finite-dimensional vectors and apply SDMs to sample from the posterior. However,  theoretical studies of current diffusion models suggest that performance guarantees do not generalize well on increasing dimensions \cite{chen2022sampling, de2022convergence, pidstrigach2023infinitedimensional}. This is precisely why Andrew Stuart's guiding principle to study a Bayesian inverse problem for functions---``{avoid discretization until the last possible moment}'' \cite{stuart2010inverse}---is  critical to the use of SDMs.

Motivated by Stuart's principle, in this work we \emph{define a conditional score in the infinite-dimensional setting}, a critical step for studying Bayesian inverse problems directly in function spaces through SDMs. In particular, we show that using this newly defined score as a reverse drift of the diffusion process yields a generative stage that samples, under specified conditions, from the correct target conditional distribution.
We carry out the analysis by focusing on two cases: the case of a Gaussian prior measure and the case of a general class of priors given as a density with respect to a Gaussian measure. Studying the model for a Gaussian prior measure provides illuminating insight, not only because it yields an analytic formula of the score, but also because it gives a full characterization of SDMs in the infinite-dimensional setting, showing under which conditions we are sampling from the correct target conditional distribution and how fast the reverse SDE converges to it. It also serves as a guide for the analysis in the case of a general class of prior measures.
Finally, we conclude this work by presenting, in Section \ref{sec:numerics}, stylized and large-scale numerical examples that demonstrate the applicability of our SDM. Specifically, we show that our SDM model (i) is able to \emph{approximate non-Gaussian multi-modal distributions}, a challenging task that poses difficulties for many generative models \cite{arora2018gans}; (ii) is \emph{discretization-invariant}, a property that is a consequence of our theoretical and computational framework being built on the infinite-dimensional formulation proposed by \citet{stuart2010inverse}; and (iii) is \emph{applicable to solve large-scale Bayesian inverse problems}, which we demonstrate by applying it to a large-scale problem in geophysics, i.e., the linearized wave-equation-based imaging via the Born approximation that involves estimating a $256\PLH256$-dimensional unknown parameter.

\paragraph{Related works}
Our work is primarily motivated by Andrew Stuart's comprehensive mathematical theory for studying PDE-governed inverse problems in a Bayesian fashion \cite{stuart2010inverse}. In particular, we are interested in the infinite-dimensional analysis \cite{knapik2011bayesian, lasanen2007measurements}, which emphasizes the importance of analyzing PDE-governed inverse problems {directly in function space before discretization}.

Our paper builds upon a rich and ever expanding body of theoretical and applied works dedicated to SDMs.  \citet{song2021scorebased} defined SDMs integrating both score-based (\citet{hyvarinen2005estimation}; \citet{song2019generative}) and diffusion (\citet{sohl2015deep}; \citet{ho2020denoising}) models into a single continuous-time framework based on stochastic differential equations. The generative stage in SDMs is based on a result from \citet{anderson1982reverse} proving that the denoising process is also a diffusion process whose drift depends on the scores. This result holds only in \emph{vector spaces}, which explains the difficulties to extend SDMs to more general function spaces. Initially, there have been attempts to project the input functions into a finite-dimensional feature space and then apply SDMs (\citet{dupont2022data}; \citet{phillips2022spectral}). However,  these approaches \emph{are not discretization-invariant}. It is only very recently that SDMs have been directly studied in function spaces, specifically {infinite-dimensional Hilbert spaces}.  \citet{kerrigan2022diffusion} generalized diffusion models to operate directly in function spaces, but they did not consider the time-continuous limit based on SDEs (\citet{song2021scorebased}). \citet{dutordoir2023neural} proposed a denoising diffusion generative model for performing Bayesian inference of functions. \citet{lim2023score} generalized score matching for trace-class noise corruptions that live in the Hilbert space of the data. However, as \citet{kerrigan2022diffusion} and \citet{dutordoir2023neural}, they did not investigate the connection to the forward and backward SDEs as  \citet{song2021scorebased} did in finite dimensions. Two recent works, \citet{pidstrigach2023infinitedimensional} and \citet{franzese2023continuous}, finally established such connection for the \emph{unconditional setting}. In particular, \citet{franzese2023continuous} used results from infinite-dimensional SDEs theory (\citet{follmer1986time}; \citet{millet1989time}) close to \citet{anderson1982reverse}.

Among the mentioned works, \citet{pidstrigach2023infinitedimensional} is the closest to ours. We adopt their formalism to establish theoretical guarantees for sampling from the conditional distribution. Another crucial contribution comes from \citet{batzolis2021conditional}, as we build upon their proof to show that the score can be estimated by using {a denoising score matching objective conditioned on the observed data \cite{vincent2011connection, song2019generative}. A key element in \citet{pidstrigach2023infinitedimensional}, emphasized also in our analysis,
is obtaining an estimate on the expected square norm of the score that needs to be \emph{uniform in time}. We explicitly compute the expected square norm of the conditional score in the case of a Gaussian prior measure, which shows that a uniform in time estimate is \emph{not always possible in the conditional setting}.
This is not surprising, given that the singularity in the conditional score as noise vanishes is a well-known phenomenon in finite dimensions and has been investigated in many works, both from a theoretical and a practical standpoint \cite{kim2021soft, dockhorn2021score}. In our paper, we provide a set of concrete conditions to be satisfied to ensure a uniform estimate in time for a general class of prior measures in infinite dimensions.

\citet{pidstrigach2023infinitedimensional} have also proposed a method for performing conditional sampling, building upon the approach introduced by \citet{song2022solving} in a finite-dimensional setting.
Like our approach, their method can be viewed as a contribution to the literature on {likelihood-free, simulation-based inference} \cite{cranmer2020frontier, lavin2021simulation}. Specifically, the algorithm proposed by \citet{pidstrigach2023infinitedimensional}  relies on a projection-type approach that incorporates the observed data into the unconditional sampling process via a proximal optimization step to generate intermediate samples  consistent with the measuring acquisition process. This allows \citet{pidstrigach2023infinitedimensional} \emph{to avoid defining the conditional score}\footnote{We note that, in a newer version of their paper submitted to arXiv on October 3, 2023 (four months after our submission to arXiv and NeurIPS 2023), \citet{pidstrigach2023infinitedimensional} abandoned the projection-type approach. Instead, they invoke the conditional score function to perform posterior sampling and solve an inverse problem in a similar fashion to ours (we refer to Section 8.3 of their paper for details). However, they still do not address the well-posedness of the forward-reverse conditional SDE and the singularity of the conditional score, and their implementation is based on UNets and, thus, is not discretization-invariant.}. While their method has been shown to work well with specific inverse problems, such as medical imaging \cite{song2022solving}, it is primarily heuristic, and its computational efficiency varies depending on the specific inverse problem at hand. Notably, their algorithm may require numerous computationally costly forward operator evaluations during posterior sampling. Furthermore, their implementation does not fully exploit the discretization-invariance property achieved by studying the problem in infinite dimensions since they employ a UNet to parametrize their score, limiting the evaluation of their score function to the training interval.
The novelty of our work is then twofold. First, we provide theoretically grounded guarantees for an approach that is not heuristic and can be implemented such that \emph{it is not constrained to the grid on which we trained our network}. As a result, we show that we effectively take advantage of the discretization-invariance property achieved by adopting the infinite-dimensional formulation proposed by \citet{stuart2010inverse}. Second, we perform discretization-invariant Bayesian inference by learning an \emph{amortized version of the conditional score}. This is done by making the score function {depending on the observations}. As a result, provided that we have access to high-quality training data, during sampling we can
input any new observation that we wish to condition on directly during simulation of the reverse SDE. In this sense, our method is \emph{data-driven}, as the information about the forward model is implicitly encoded in the data pairs used to learn the conditional score. This addresses a critical gap in the existing literature, as the other approach using infinite-dimensional SDM resorts to projections onto the measurement subspace for sampling from the posterior---a method that not only lacks theoretical interpretation but may also yield unsatisfactory performance due to costly forward operator computations. There are well-documented instances in the literature where amortized methods can be a preferred option in Bayesian inverse problems \cite{baptista2020adaptive, kim2018semi, kruse2021hint, radev2020bayesflow, siahkoohi2022wave, siahkoohi2023reliable}, as they reduce inference computational costs by incurring an offline initial training cost for a deep neural network that is capable of approximating the posterior for unseen observed data, provided that one has access to a set of data pairs that adequately represent the underlying joint distribution.

\paragraph{Main contributions}
The main contribution of this work is the \emph{analysis of conditional SDMs in infinite-dimensional Hilbert spaces}. More specifically,

\vspace*{-0.6em}
\begin{itemize}
    \setlength\itemsep{0.0em}
    \item We introduce the conditional score in an infinite-dimensional setting (Section \ref{sec:conditional-score}).
    \item We provide a comprehensive analysis of the forward-reverse conditional SDE framework in the case of a \emph{Gaussian prior measure}. We explicitly compute the expected square norm of the conditional score, which shows that a uniform in time estimate \emph{is not always possible for the conditional score}. We prove that as long as we start from the invariant distribution of the diffusion process, the reverse SDE converges to the target distribution exponentially fast (Section \ref{sec:gaussian}).
    \item We provide a set of conditions to be satisfied to ensure a uniform in time estimate for a general class of prior measures that \emph{are given as a density with respect to a Gaussian measure}. Under these conditions, the conditional score---used as a reverse drift of the diffusion process in SDMs---yields a generative stage that samples from the target conditional distribution (Section \ref{sec:general-case}).
    \item We prove that the conditional score can be estimated via a {conditional denoising score matching objective} in infinite dimensions (Section \ref{sec:general-case}).
    \item We present examples that validate our approach, offer additional insights, and demonstrate that our method enables \emph{large-scale}, \emph{discretization-invariant} Bayesian inference (Section \ref{sec:numerics}).
\end{itemize}

\section{Background}

Here, we review the definition of unconditional score-based diffusion models (SDMs) in infinite-dimensional Hilbert spaces proposed by \citet{pidstrigach2023infinitedimensional}, as we will adopt the same formalism to define SDMs for conditional settings. We refer to {Appendix A} for a brief introduction to key tools of probability theory in function spaces.

Let $\mu_\text{data}$ be the target measure, supported on a separable Hilbert space $(H,\langle \cdot, \cdot \rangle)$. Consider a forward infinite-dimensional diffusion process $(X_t)_{t\in [0,T]}$ for continuous time variable $t \in [0,T]$, where $X_0$ is the starting variable and $X_t$ its perturbation at time $t$. The diffusion process is defined by the following SDE: \begin{equation}
d X_t = -
\frac{1}{2} X_t dt + \sqrt{C} dW_t,
\label{eq:forward-sde}
\end{equation}
where $C:H\to H$ is a fixed trace class, positive-definite, symmetric covariance operator and $W_t$ is a Wiener process on $H$. Here and throughout the paper, the initial conditions 
and the driving Wiener processes in \eqref{eq:forward-sde} are assumed independent.

The forward SDE evolves $X_0 \sim \mu_0$ towards the Gaussian measure $ \mathcal{N}(0,C)$ as $t\to \infty$. The goal of score-based diffusion models is to convert the SDE in \eqref{eq:forward-sde} to a generative model by first sampling $X_T \sim \mathcal{N}(0,C)$, and then running the correspondent reverse-time SDE.
In the finite-dimensional case, \citet{song2021scorebased} show that the reverse-time SDE requires the knowledge of the score function $\nabla \log p_t(X_t)$, where $p_t(X_t)$ is the density of the marginal distribution of $X_t$ (from now on denoted $\mathbb{P}_t$) with respect to the Lebesgue measure. In infinite-dimensional Hilbert spaces, there is no natural analogue of the Lebesgue measure (for additional details, see \cite{da2006introduction}) and the density is thus no longer well defined. However, \citet[Lemma 1]{pidstrigach2023infinitedimensional} notice that, in the finite-dimensional setting where $H=\mathbb{R}^D$, the score can be expressed as follows:
\begin{equation}
C \nabla_x \log p_t(x)  = - (1-e^{-t})^{-1} \left(x-e^{-t/2} \mathbb{E}[X_0|X_t = x]\right),
\end{equation}
for $t>0$. Since the right-hand side of the expression above is also well-defined in infinite dimensions, \citet{pidstrigach2023infinitedimensional} formally define the score as follows:

\begin{definition}
	\label{def:score}
	In the infinite-dimensional setting, the score or reverse drift is defined by
\begin{equation}
S(t,x) := - (1-e^{-t})^{-1} \left(x-e^{-t/2} \mathbb{E}[X_0|X_t = x]\right).
\end{equation}
\end{definition}
%
Assuming that the expected square norm of the score is uniformly bounded in time, \citet[Theorem 1]{pidstrigach2023infinitedimensional} shows that the following SDE
\begin{equation}
	dZ_t = \frac{1}{2} Z_t dt + S(T-t,Z_t) \text{d}t + \sqrt{C}dW_t, \qquad Z_0 \sim \mathbb{P}_T,
	\label{eq:reverse-SDE}
\end{equation}
is the time-reversal of \eqref{eq:forward-sde} and the distribution of $Z_T$ is thus equal to $\mu_0$, proving that the forward-reverse SDE framework of \citet{song2021scorebased} generalizes to the infinite-dimensional setting. The reverse SDE requires the knowledge of this newly defined score, and one approach for estimating it is, similarly to \cite{song2021scorebased}, by using the denoising score matching loss \cite{vincent2011connection}
\begin{equation}
\mathbb{E} \left[\big\| \widetilde{S}(t,X_t) + (1-e^{-t})^{-1} (X_t - e^{-t/2}X_0)\big\|^2\right],
\end{equation}
where $\widetilde{S}(t,X_t)$ is typically approximated by training a neural network.


%

\section{The conditional score in infinite dimensions} \label{sec:conditional-score}

Analogous to the score function relative to the unconditional SDM in infinite dimensions, we now define the score corresponding to the reverse drift of an SDE when conditioned on observations. We consider a setting where $X_0$ is an $H$-valued random variable and $H$ is an infinite-dimensional Hilbert space. Denote by
\begin{equation}
Y = A X_0 + B,
\label{eq:inverse-problem}
\end{equation}
a noisy observation given by $n$ linear measurements, where the measurement acquisition process is represented by a linear operator $A:H \to \mathbb{R}^n$, and $B \sim \mathcal{N}(0,C_B)$ represents the noise, with $C_B$ a $n\times n$ nonnegative matrix. Within a Bayesian probabilistic framework, solving \eqref{eq:inverse-problem} amounts to putting an appropriately chosen prior probability distribution $\mu_0$ on $X_0$, and sampling from the conditional distribution of $X_0$ given $Y=y$.

To the best of our knowledge, the only existing algorithm which performs conditional sampling using infinite-dimensional diffusion models on Hilbert spaces is based on the work of \citet{song2022solving}. The idea, adapted to infinite dimensions by \citet{pidstrigach2023infinitedimensional}, is to incorporate the observations into the unconditional sampling process of the SDM via a proximal optimization step to generate intermediate samples that are consistent with the measuring acquisition process. Our method relies instead on \emph{utilizing the score of infinite-dimensional SDMs conditioned on observed data}, which we introduce in this work. We begin by defining the conditional score, by first noticing that, in finite dimensions, we have the following lemma:
\begin{lemma}
\label{lem:conditional-score}
In the finite-dimensional setting where $H=\mathbb{R}^D$, we can express the conditional score function for $t>0$ as
\begin{equation}
C \nabla_x \log p_t(x|y) =
-(1-e^{-t})^{-1} \left(  x  - e^{-t/2}\EE\left[ X_0 | Y=y,X_t=x\right]\right).
\label{eq:conditional-score-finite}
\end{equation}
\end{lemma}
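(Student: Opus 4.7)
The plan is to mimic the derivation of the unconditional formula recalled in the excerpt, which in finite dimensions rests on Tweedie's identity applied to the transition density of the Ornstein--Uhlenbeck process in \eqref{eq:forward-sde}. The forward SDE has an explicit Gaussian transition kernel: conditionally on $X_0 = x_0$, the variable $X_t$ is distributed as $\mathcal{N}(e^{-t/2} x_0,\, (1-e^{-t}) C)$. Consequently its density satisfies
\begin{equation*}
\nabla_x p_{t|0}(x|x_0) = -\frac{1}{1-e^{-t}}\, C^{-1}\bigl(x - e^{-t/2} x_0\bigr)\, p_{t|0}(x|x_0),
\end{equation*}
so premultiplying by $C$ will eventually produce the prefactor $-(1-e^{-t})^{-1}$ in \eqref{eq:conditional-score-finite}.

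The key structural step I would use is the conditional independence $X_t \perp Y \mid X_0$, which holds because $Y = A X_0 + B$ depends only on $X_0$ and on noise $B$ independent of the Wiener process driving \eqref{eq:forward-sde}. From this I get $p_{t|0,y}(x\mid x_0, y) = p_{t|0}(x|x_0)$, and hence
\begin{equation*}
p_t(x|y) = \int p_{t|0}(x|x_0)\, p_{0|y}(x_0|y)\, dx_0.
\end{equation*}
Differentiating under the integral sign, substituting the Tweedie identity above, and dividing by $p_t(x|y)$ yields
\begin{equation*}
\nabla_x \log p_t(x|y) = -\frac{1}{1-e^{-t}}\, C^{-1} \int \bigl(x - e^{-t/2} x_0\bigr)\, \frac{p_{t|0}(x|x_0)\, p_{0|y}(x_0|y)}{p_t(x|y)}\, dx_0.
\end{equation*}

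To finish, I would recognize the weight inside the integral as the posterior density of $X_0$ given $(X_t, Y)$: the conditional independence above implies that $p_{t|0}(x|x_0)\, p_{0|y}(x_0|y)$ equals the joint density of $(X_t, X_0)$ conditioned on $Y=y$, so dividing by the marginal $p_t(x|y)$ gives $p_{0|t,y}(x_0|x,y)$. The integral therefore equals $x - e^{-t/2}\, \EE[X_0 \mid X_t = x, Y = y]$, and multiplying on the left by $C$ produces \eqref{eq:conditional-score-finite}. The only delicate ingredient is the conditional independence argument; the differentiation under the integral is routine because the OU transition density is Gaussian with uniformly controlled tails and $p_{0|y}$ is a probability density, so dominated convergence applies.
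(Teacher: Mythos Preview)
Your argument is correct. The paper does not give a standalone proof of this lemma; it is stated as an observation extending \cite[Lemma~1]{pidstrigach2023infinitedimensional} to the conditional setting, and the implicit justification is exactly the Tweedie-type computation you carry out: write $p_t(x\mid y)=\int p_{t|0}(x\mid x_0)\,p_{0|y}(x_0\mid y)\,dx_0$ using the conditional independence $X_t\perp Y\mid X_0$, differentiate the Gaussian OU transition kernel, and recognize the resulting weight as $p_{0|t,y}(x_0\mid x,y)$. So your approach coincides with what the paper takes for granted.
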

Since the right-hand side of \eqref{eq:conditional-score-finite} is well-defined in infinite dimensions, by following the same line of thought of \citet{pidstrigach2023infinitedimensional} we formally define the score as follows:
\begin{definition}\label{def:conditional-score}
In the infinite-dimensional setting, the conditional score is defined by
\begin{equation}
	S(t,x,y) :=
	-(1-e^{-t})^{-1} \left(  x  - e^{-t/2}\EE\left[ X_0 | Y=y,X_t=x\right]\right).
	\label{eq:conditional-score-infinite}
\end{equation}
\end{definition}
\begin{remark}
It is possible to define the conditional score in infinite-dimensional Hilbert spaces by resorting to the results of \cite{follmer1986time, millet1989time},
see Appendix \ref{app:mill}.
\end{remark}
For Definition \ref{def:conditional-score} to make sense, we need to show that if we use \eqref{eq:conditional-score-infinite} as the drift of the time-reversal of the SDE in \eqref{eq:forward-sde} conditioned on $y$, then it will sample the correct conditional distribution of $X_0$ given $Y=y$ in infinite dimensions. In the next sections, we will carry out the analysis by focusing on two cases: the case of a Gaussian prior measure $\mathcal{N}(0,C_\mu)$, and the case where the prior of $X_0$ is given as a density with respect to a Gaussian measure, i.e.,
\begin{equation}
	X_0 \sim \mu_0, \qquad \frac{d\mu_0}{d\mu}(x_0)=
	\frac{e^{-\Phi(x_0)}}{\EE_\mu[e^{-\Phi(X_0)}]} ,\qquad \mu={\cal N}(0,C_\mu),
	\label{eq:mumu0}
\end{equation}
where $C_\mu$ is positive and trace class and $\Phi$ is bounded with $\EE_\mu [\|C_\mu \nabla_H \Phi(X_0)\|^2]<+\infty$.

\section{Forward-reverse conditional SDE framework for a Gaussian prior measure} \label{sec:gaussian}

We begin our analysis of the forward-reverse conditional SDE framework by examining the case where the prior of $X_0$ is a Gaussian measure. This case provides illuminating insight, not only because it is possible to get an analytic formula of the score, but also because it offers a full characterization of SDMs in the infinite-dimensional setting,  showing under which conditions we are sampling from the correct target conditional distribution and how fast the reverse SDE converges to it.
We also show that the conditional score can have a singular behavior at small times when the observations are noiseless, in contrast with the unconditional score under similar hypotheses.

We assume that $\Phi=0$ in \eqref{eq:mumu0}. All distributions in play are Gaussian:
\begin{align}
	\label{eq:mu0}
    X_ 0 &\sim \mathcal{N} \big(0 , C_\mu\big) ,\\
	\label{eq:mu0X0}
	X_t |X_0 &\sim {\cal N}\big(e^{-t/2} X_0 , (1-e^{-t}) C\big) ,\\
	\label{eq:mu00Y}
	X_0 |Y & \sim {\cal N}\big(M_o Y, C_o \big) ,\\
	X_t |Y  & \sim {\cal N}\big(e^{-t/2} M_o Y ,  e^{-t} C_o +(1-e^{-t}) C \big) ,
	\label{eq:mu0tY}
\end{align}
%
%
where $M_o = C_\mu A^* (AC_\mu A^*+C_B)^{-1}$ and $C_o =  C_\mu -C_\mu A^* (A C_\mu A^*+C_B)^{-1} A C_\mu$. By Mercer theorem \cite{mercer1909xvi}, there exist $(\mu_j)$ in $[0,+\infty)$ and an orthonormal basis $(v_j)$ in $H$ such that
$C_\mu v_j=\mu_j v_j$ $\forall j$.
We consider the infinite-dimensional case with $\mu_j>0$ $\forall j$.
We assume that $C_\mu$ is trace class so that $\sum_j \mu_j<+\infty$.
We assume that the functions $v_j$ are eigenfunctions of $C$ and we denote by $\lambda_j$ the corresponding eigenvalues.

We assume an observational  model  corresponding to observing a finite-dimensional
subspace of $H$ spanned by $v_{\eta(1)},\ldots,v_{\eta(n)}$  corresponding to $g_k=v_{\eta(k)}, ~k=1,\ldots,n$, where $g_j\in H$ is such that $(Af)_j = \left<g_j,f\right>$.
We denote ${\cal I}^{(n)}=\{\eta(1),\ldots,\eta(n)\}$.
We assume moreover
$C_B=\sigma_B^2 I_n$.
Let $Z_t$ be the solution of reverse-time SDE:
\begin{equation}
	dZ_t = \frac{1}{2} Z_t dt + S(T-t, Z_t,y) \text{d}t + \sqrt{C}dW_t, \quad Z_0 \sim X_T|Y=y.
	\label{eq:conditional-reverse-SDE}
\end{equation}
We want to show that the reverse SDE we have just formulated in \eqref{eq:conditional-reverse-SDE} indeed constitutes a reversal of the stochastic dynamics from the forward SDE in \eqref{eq:forward-sde} conditioned on $y$.
To this aim, we will  need the following lemma:
\begin{lemma}\label{lem:gauss1}
We define $Z^{(j)}= \langle v_j, Z\rangle$, $p^{(j)} ={\lambda_j}/{\mu_j}$ for all $j$.
We also define $y^{(j)} = y_{\eta(j)}$ for $j \in {\cal I}^{(n)}$ and $y^{(j)}=0$ otherwise,
and $q^{(j)} = {\mu_j}/{\sigma_B^2} $ for  $j \in {\cal I}^{(n)}$ and $q^{(j)}=0$ otherwise.
Then we can write for all $j$
\begin{equation}
dZ_t^{(j)} = \mu^{(x,j)}(T-t) Z_t^{(j)} dt + \mu^{(y,j)}(T-t) y^{(j)} dt + \sqrt{\lambda_j} dW^{(j)} ,
\label{eq:reverse-sde-projected}
\end{equation}
with $W^{(j)}$ independent and identically distributed standard Brownian motions,
\begin{align}
\mu^{(x,j)}(t) =  \frac{1}{2} - \frac{e^{t} p^{(j)} (1+q^{(j)})}{1+ (e^{t}-1)p^{(j)}(1+q^{(j)})}  , \qquad
\mu^{(y,j)}(t) =    \frac{e^{t/2} p^{(j)}q^{(j)}}{1+ (e^{t}-1)p^{(j)}(1+q^{(j)})}  .
\end{align}
\end{lemma}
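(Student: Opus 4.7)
The strategy is to diagonalise everything in the shared eigenbasis $(v_j)$ of $C_\mu$ and $C$. Since the prior $\mathcal{N}(0,C_\mu)$, the noise covariance $C$, and the measurement operator $A$ (which only reads off the coordinates $j\in\mathcal{I}^{(n)}$) are simultaneously diagonal in $(v_j)$, the scalar components $X_0^{(j)}:=\langle v_j,X_0\rangle$ evolve independently. Writing the cylindrical Wiener process as $W_t=\sum_j\beta_j(t)\,v_j$ with $(\beta_j)$ i.i.d.\ standard real Brownian motions, projecting \eqref{eq:forward-sde} onto $v_j$ yields the scalar Ornstein--Uhlenbeck equation $dX_t^{(j)}=-\tfrac12 X_t^{(j)}\,dt+\sqrt{\lambda_j}\,d\beta_j(t)$, consistent with \eqref{eq:mu0X0}. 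In the observation model $Y_k=X_0^{(\eta(k))}+B_k$ with $B_k\sim\mathcal{N}(0,\sigma_B^2)$ i.i.d.; using the lemma's convention $y^{(j)}=0$ for $j\notin\mathcal{I}^{(n)}$, the triples $(X_0^{(j)},Y^{(j)},X_t^{(j)})$ are jointly Gaussian and independent across $j$, so that $\EE[X_0^{(j)}\mid Y=y,X_t=x]=\EE[X_0^{(j)}\mid Y^{(j)}=y^{(j)},X_t^{(j)}=x^{(j)}]$.

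The second step is to evaluate these coordinate conditional expectations with the standard Gaussian conditioning formula. For $j\notin\mathcal{I}^{(n)}$, the observation is uninformative and Bayes' rule gives
\begin{equation*}
\EE[X_0^{(j)}\mid X_t^{(j)}=x^{(j)}]=\frac{e^{-t/2}\mu_j}{e^{-t}\mu_j+(1-e^{-t})\lambda_j}\,x^{(j)}.
\end{equation*}
For $j\in\mathcal{I}^{(n)}$ one inverts the $2\times 2$ covariance of $(Y^{(j)},X_t^{(j)})$, whose determinant factorises, in terms of $p^{(j)}=\lambda_j/\mu_j$ and $q^{(j)}=\mu_j/\sigma_B^2$, as $\mu_j\sigma_B^2 e^{-t}\bigl[1+(e^t-1)p^{(j)}(1+q^{(j)})\bigr]$; the resulting conditional mean is an explicit affine combination of $x^{(j)}$ and $y^{(j)}$.

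The third step is to substitute these expressions into Definition \ref{def:conditional-score} and project \eqref{eq:conditional-reverse-SDE} onto $v_j$. Since the conditional score is linear in $x$ and $y$ after the substitution, the projected drift splits as $\alpha(T-t)\,Z_t^{(j)}+\gamma(T-t)\,y^{(j)}$, while the diffusion reduces to $\sqrt{\lambda_j}\,dW^{(j)}$ with $W^{(j)}:=\beta_j$ independent across $j$. It remains to check algebraically that $\alpha(t)=\tfrac12-(1-e^{-t})^{-1}+(1-e^{-t})^{-1}\bigl[1+(e^t-1)p^{(j)}(1+q^{(j)})\bigr]^{-1}$ collapses to the claimed $\mu^{(x,j)}(t)$, and that the coefficient of $y^{(j)}$ collapses to $\mu^{(y,j)}(t)$; the case $j\notin\mathcal{I}^{(n)}$ is recovered by setting $q^{(j)}=0$.

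The only real obstacle is this final algebraic rearrangement---tedious but routine once the closed-form determinant above is in hand and the identity $(e^t-1)/(1-e^{-t})=e^t$ is used. No infinite-dimensional subtlety arises beyond the Karhunen--Lo\`eve representation $\sqrt{C}\,dW_t=\sum_j\sqrt{\lambda_j}\,d\beta_j(t)\,v_j$, which is standard because $C$ shares eigenfunctions with the trace-class $C_\mu$, so each projected SDE is a bona fide scalar linear SDE driven by independent noise.
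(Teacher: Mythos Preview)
Your proposal is correct and follows essentially the same route as the paper: diagonalise in the common eigenbasis, reduce to independent scalar triples $(X_0^{(j)},Y^{(j)},X_t^{(j)})$, compute the Gaussian conditional mean $\EE[X_0^{(j)}\mid X_t^{(j)},Y^{(j)}]$, and substitute into Definition~\ref{def:conditional-score}. The only cosmetic difference is that the paper obtains the affine coefficients $a,b$ by solving the two orthogonality (normal) equations for the $L^2$ projection, whereas you invert the $2\times 2$ covariance of $(Y^{(j)},X_t^{(j)})$; these are equivalent computations and lead to the same formulas.
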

\begin{proof}
The proof is a Gaussian calculation. It relies on computing $\langle S, v_j \rangle$, which yields an analytic formula.
See Appendix B.
\end{proof}
\vspace*{-1em}
Lemma \ref{lem:gauss1} enables us to discuss when we are sampling from the correct target conditional distribution $X_0|Y \sim \mathcal{N} \big( M_o Y,C_o\big)$. We can make a few remarks:
\vspace*{-0.6em}
\begin{itemize}
    \item In the limit ${T\to \infty}$, we get $\mu^{(x,j)}(T-t) \to -1/2$ and $\mu^{(y,j)} (T-t)\to 0 $.
    \item If $j \notin {\cal I}^{(n)}$ then we have the same mode dynamics as in the unconditional case. Thus we sample from the correct  target distribution if $T$ is large or if we start from
$Z_0^{(j)} \sim {\cal N}(0,\Sigma^{(j)}_0)$ for
$\Sigma^{(j)}_0 =\mu_j e^{-T} + \lambda_j (1-e^{-T})$, which is the distribution of $X_T^{(j)}= \left< X_T,v_j\right>$ given $Y=y$.
    \item If $j \in {\cal I}^{(n)}$ and we start from $Z_0^{(j)} \sim {\cal N}(\bar{z}_0^{(j)},\Sigma^{(j)}_0)$, then
we   find $Z_{T}^{(j)}   \sim   {\cal N}\big(\bar{z}_T^{(j)} , \Sigma_T^{(j)})$ with
\begin{align}
\Sigma_T^{(j)} &=  \Sigma_0^{(j)} \left(\frac{e^T}{(1+(e^T-1) p^{(j)}(1+q^{(j)}))^2}\right) + \frac{\mu_j}{1+q^{(j)}} \left( 1- \frac{1}{1+(e^T-1) p^{(j)}(1+q^{(j)})} \right) , \\
\bar{z}_T^{(j)} &=    \frac{\bar{z}_0^{(j)} e^{T/2}}{1+(e^T-1)p^{(j)}(1+q^{(j)})} +
\frac{y^{(j)} q^{(j)} }{1+q^{(j)}}  \left( 1- \frac{1}{1+(e^T-1)p^{(j)}(1+q^{(j)})} \right).
\end{align}
The distribution of $X_0^{(j)} = \left< X_0,v_j\right>$ given $Y=y$ is ${\cal N}(y^{(j)} q^{(j)}/(1+q^{(j)}),  \mu_j/(1+q^{(j)}) $. As $\bar{z}_T^{(j)}  \to y^{(j)} q^{(j)}/(1+q^{(j)})$  and $\Sigma_T^{(j)} \to \mu_j/(1+q^{(j)})$ as $T\to +\infty$, this shows that we sample from the exact target distribution (the one of $X_0$ given $Y=y$) for $T$ large.
    \item If we start the reverse-time SDE from the correct model
\begin{equation}
\bar{z}_0^{(j)} = \frac{e^{-T/2} y^{(j)} q^{(j)} }{1+q^{(j)}}, \qquad  \Sigma^{(j)}_0 =\frac{\mu_j e^{-T}}{1+q^{(j)}} + \lambda_j (1-e^{-T}),
\end{equation}
then indeed   $Z_T^{(j)} \sim {\cal N}(y^{(j)} q^{(j)} /(1+q^{(j)}),\mu_j/(1+q^{(j)}))$. This shows that,  for any $T$, $Z_T$ has the same distribution as $X_0$ given $Y=y$, which is the exact target distribution. We can show similarly that  $Z_{T-t}$ has the same distribution as $X_t$ given $Y=y$ for any $t \in [0,T]$.
    \item In the case that
$\sigma_B=0$ so that we observe the mode values perfectly for $j \in {\cal I}^{(n)}$, 
then
\ba
\mu^{(x,j)}(t) = \frac{1}{2} - \frac{e^{t}}{ e^{t}-1}  , \label{eq:gbound1} \qquad
\mu^{(y,j)}(t)  =    \frac{e^{t/2}}{e^{t}-1}  ,
\ea
and indeed $\lim_{t \uparrow T} Z_t^{(j)}=y^{(j)}$ a.s. Indeed the $t^{-1}$
singularity at the origin drives the process to the origin like in the Brownian bridge.
\end{itemize}

Our analysis shows that, as long as we start from the invariant distribution of the diffusion process, we are able to sample from the correct target conditional distribution and that happens exponentially fast. This proves that the score of Definition~\ref{def:conditional-score} is the reverse drift of the SDE in \eqref{eq:conditional-reverse-SDE}. Additionally, the analysis shows that the score is uniformly bounded, except when there is no noise in the observations, blowing up near $t=0$. 

\begin{remark}
Note that, for $q^{(j)}=0$, we obtain the unconditional model:
\ba
dZ_t^{(j)} = \mu^{(j)}(T-t) Z_t^{(j)} dt + \sqrt{\lambda_j} dW^{(j)} , \mbox{ with }
\mu^{(j)}(t) =  \frac{1}{2} - \frac{e^{t} p^{(j)}}{1+ (e^{t}-1)p^{(j)}} .  \label{eq:gbound2}
\ea
If $C=C_\mu$, the square expectation of the norm and the Lipschitz constant of the score are uniformly bounded in time:
$\sup_{j, t \in [0,T]} | \mu^{(j)}(t) | = {1}/{2}$.
\end{remark}

\begin{proposition}\label{prop:Sbound}
The score is $S(t,x,y)= \sum_j S_G^{(j)}(t,\left< x, v_j\right> ,y^{(j)}) v_j$, $S_G^{(j)}(t,x^{(j)} ,y^{(j)}) =
\left(\mu^{(x,j)}(T-t) - 1/2\right)  x^{(j)} + \mu^{(y,j)}(T-t) y^{(j)}  $ and it satisfies
\begin{equation}
\label{eq:bound1g}
 \EE[\|S(t,X_t,y)\|_H^2 |Y=y ]  =
 \sum_j  \frac{e^t (1+q^{(j)})}{1+(e^t-1)p^{(j)}(1+q^{(j)} )}
 \frac{\lambda_j^2}{\mu_j}   .
\end{equation}
\end{proposition}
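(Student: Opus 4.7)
The plan is to exploit the joint Gaussianity of $(X_0,X_t,Y)$ together with the common diagonalization of $C$ and $C_\mu$ in the basis $(v_j)$, which reduces the entire computation to independent scalar Gaussian modes. The first step is to read off the modal form of the score. By Definition \ref{def:conditional-score}, $S(t,x,y)$ is an affine function of $(x,y)$, because in the Gaussian setting $\EE[X_0\mid X_t=x, Y=y]$ is affine. Since the observation structure ($C_B=\sigma_B^2 I_n$, $g_k=v_{\eta(k)}$) acts diagonally on $(v_j)$, the coordinates decouple and $S(t,x,y)=\sum_j S_G^{(j)}(t,\langle x,v_j\rangle,y^{(j)})v_j$ with each $S_G^{(j)}$ affine. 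The simplest way to pin down the coefficients is to project the reverse SDE \eqref{eq:conditional-reverse-SDE} onto $v_j$,
\begin{equation*}
dZ_t^{(j)} = \Bigl[\tfrac{1}{2} Z_t^{(j)} + \bigl\langle S(T-t, Z_t, y),\, v_j\bigr\rangle\Bigr] dt + \sqrt{\lambda_j}\, dW^{(j)}_t,
\end{equation*}
and compare its drift, term by term, with the modal SDE \eqref{eq:reverse-sde-projected} established in Lemma \ref{lem:gauss1}; matching drifts yields the claimed expression for $S_G^{(j)}$ with coefficients $\mu^{(x,j)}(\cdot)-1/2$ and $\mu^{(y,j)}(\cdot)$.

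Next, by Parseval, $\|S(t,X_t,y)\|_H^2 = \sum_j\bigl(S_G^{(j)}(t,X_t^{(j)},y^{(j)})\bigr)^2$, so I would take conditional expectations mode by mode. The key observation that keeps the computation short is that $\EE[S(t,X_t,y)\mid Y=y]=0$ for every $t>0$: indeed, applying the tower property to Definition \ref{def:conditional-score} gives
\begin{equation*}
\EE\bigl[S(t,X_t,y)\mid Y=y\bigr] = -(1-e^{-t})^{-1}\Bigl(\EE[X_t\mid Y=y] - e^{-t/2}\,\EE[X_0\mid Y=y]\Bigr),
\end{equation*}
which vanishes because \eqref{eq:mu0X0} implies $\EE[X_t\mid Y=y]=e^{-t/2}\EE[X_0\mid Y=y]$. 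Each modal squared expectation therefore reduces to a variance:
\begin{equation*}
\EE\bigl[(S_G^{(j)}(t,X_t^{(j)},y^{(j)}))^2 \mid Y=y\bigr] = \bigl(\mu^{(x,j)}(t)-\tfrac{1}{2}\bigr)^2\,\mathrm{Var}\bigl(X_t^{(j)}\mid Y=y\bigr).
\end{equation*}

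Finally, I would compute the two factors from data already in hand. Using \eqref{eq:mu0tY} together with the diagonal reductions $(M_o y)^{(j)}=q^{(j)}y^{(j)}/(1+q^{(j)})$ and $C_o^{(j)}=\mu_j/(1+q^{(j)})$ (which follow from $C_B=\sigma_B^2 I_n$ and the aligned observation basis), one obtains $\mathrm{Var}(X_t^{(j)}\mid Y=y)=\mu_j e^{-t}[1+(e^t-1)p^{(j)}(1+q^{(j)})]/(1+q^{(j)})$. Substituting $\mu^{(x,j)}(t)-1/2=-e^t p^{(j)}(1+q^{(j)})/[1+(e^t-1)p^{(j)}(1+q^{(j)})]$ and using $\lambda_j=p^{(j)}\mu_j$, a single cancellation (one factor of $1+(e^t-1)p^{(j)}(1+q^{(j)})$ in the numerator against one in the denominator, and $(p^{(j)})^2\mu_j=\lambda_j^2/\mu_j$) collapses the product to the stated mode contribution, and summing over $j$ yields \eqref{eq:bound1g}. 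I expect the main obstacle to be purely bookkeeping: verifying the diagonal reductions of $M_o$ and $C_o$ carefully so that the conditional variance has the clean factored form, after which the algebraic simplification is mechanical; no new probabilistic input is required beyond the identities already proved in Lemma \ref{lem:gauss1}.
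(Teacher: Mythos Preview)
Your proposal is correct and follows essentially the same route as the paper. The paper's appendix proves Lemma~\ref{lem:gauss1} and Proposition~\ref{prop:Sbound} together by first computing $\EE[X_0^{(j)}\mid X_t^{(j)},Y^{(j)}]$ via Gaussian $L^2$ projection, then reading off both the reverse drift and the score; you simply reverse the packaging by invoking Lemma~\ref{lem:gauss1} and matching drifts to recover $S_G^{(j)}$, after which your variance computation (centering via the tower property, then $(\mu^{(x,j)}(t)-1/2)^2\,\mathrm{Var}(X_t^{(j)}\mid Y=y)$ with the factored form of the conditional variance) is identical to the paper's.
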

\begin{proof}
    The proof is a Gaussian calculation given in Appendix B.
\end{proof}

In the unconditional setting, we have $ \EE[\|S(t,X_t)\|_H^2 ]  =   \sum_j  \frac{e^t}{1+(e^t-1)p^{(j)}}
 \frac{\lambda_j^2}{\mu_j}  $ which is equal to $\sum_j \lambda_j$ when $C=C_\mu$. It is indeed uniformly bounded in time.\\
In the conditional and noiseless setting ($\sigma_B=0$), we have $\EE[\|S(t,X_t,y)\|_H^2 |Y=y ]  =  \sum_{ j \not\in {\cal I}^{(n)}}  \frac{e^t}{1+(e^t-1)p^{(j)}}
 \frac{\lambda_j^2}{\mu_j} + \sum_{j \in {\cal I}^{(n)}}  \frac{\lambda_j}{1-e^{-t}}$, which blows up as $1/t$ as $t\to 0$. This result shows that the extension of the score-based diffusion models to the conditional setting is not trivial.

\section{Well-posedness for the reverse SDE for a general class of prior measures} \label{sec:general-case}

We are now ready to consider the case of a general class of prior measures given as a density with respect to a Gaussian measure. The analysis of this case resembles the one of \citet{pidstrigach2023infinitedimensional} for the unconditional setting. The main challenge is the singularity of the score for small times, an event that in the Gaussian case was observed in the noiseless setting.
In this section we will provide a set of conditions to be satisfied by $\Phi$ in \eqref{eq:mumu0}, so that the conditional score is bounded uniformly in time. The existence of this bound is needed to make sense of the forward-reverse conditional SDE, and to prove the accuracy and stability of the conditional sampling.

We start the analysis by recalling that, in the infinite-dimensional case, the conditional score is (\ref{eq:conditional-score-infinite}).
It is easy to get a first estimate:
\begin{align}
\EE[ \| S(t,X_t,y) \|_H^2 |Y=y ]
&\leq (1-e^{-t})^{-1} {\rm Tr}(C)  .
\label{eq:bound2}
\end{align}
The proof follows from  Jensen inequality and the law of total expectation,
see Appendix C.
Note that (\ref{eq:bound2}) is indeed an upper bound of (\ref{eq:bound1g}) since ${\rm Tr}(C)=\sum_j \lambda_j$.

Note that the bound (\ref{eq:bound2}) is also valid for the unconditional score $S(t,x) = -(1-e^{-t})^{-1} \big( x - e^{-t/2}\EE[  X_0 |X_t=x ]\big)$.
We can observe that the upper bound (\ref{eq:bound2}) blows up in the limit of small times. We can make a few comments:
\vspace*{-0.6em}
\begin{itemize}
    \item The bound (\ref{eq:bound2}) is convenient for positive times, but the use of Jensen's inequality results in a very crude bound for small times. As shown in the previous section, we know that there exists a bound (\ref{eq:gbound2}) for the unconditional score in the Gaussian case that is uniform in time.
    \item The singular behavior as $1/t$ at small time $t$ is, however, not artificial. Such a behavior is needed in order to drive the state to the
deterministic initial condition when there are exact observations. This behavior has been exhibited by (\ref{eq:gbound1}) and (\ref{eq:bound1g}) in the Gaussian case when $\sigma_B=0$. This indicates that the following assumption (\ref{eq:assumption1}) is not trivial in the conditional setting.
\end{itemize}

For Definition \ref{def:conditional-score} to make sense in the more general case where the prior of $X_0$ is given as a density with respect to a Gaussian measure, we will need to make the following assumption.

\begin{assumption}
\label{assumption1}
For any $y \in \mathbb{R}^n$, we have
\begin{equation}
\label{eq:assumption1}
    \sup_{t \in [0,T]} \EE \big[ \| S(t,X_t,y) \|_H^2 | Y=y \big] <\infty.
\end{equation}
\end{assumption}

We are now ready to state the analogous result to \citet[Theorem 1]{pidstrigach2023infinitedimensional}.

\begin{proposition}
Under Assumption \ref{assumption1},
the solution of the reverse-time SDE
\begin{equation}
dZ_t = \frac{1}{2}Z_t dt +S(T-t,Z_t,y) dt +\sqrt{C} dW_t , \qquad Z_0 \sim X_T |Y=y,
\end{equation}
satisfies $Z_T \sim X_0 | Y=y$.
\end{proposition}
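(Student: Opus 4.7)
The plan is to reduce the statement to the unconditional time-reversal result (Theorem~1 of \citet{pidstrigach2023infinitedimensional}) by conditioning on $Y=y$. The key observation is that the forward SDE~(\ref{eq:forward-sde}) has coefficients that do not depend on $Y$, and its driving Wiener process $W_t$ is independent of both $X_0$ and the observational noise $B$. Consequently, under the regular conditional probability $\mathbb{P}(\,\cdot\mid Y=y)$, the process $(X_t)_{t\in[0,T]}$ still solves the same SDE, only with the initial law $\mu_0$ replaced by the posterior $\mu_0^y:=\mathbb{P}(X_0\in\cdot\mid Y=y)$, while $W_t$ remains a $C$-Wiener process.

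Writing $\mathbb{E}^y$ for expectation under $\mathbb{P}(\,\cdot\mid Y=y)$, Definition~\ref{def:conditional-score} rewrites as
$$
S(t,x,y) = -(1-e^{-t})^{-1}\bigl(x - e^{-t/2}\,\mathbb{E}^y[X_0\mid X_t=x]\bigr),
$$
so $S(\,\cdot\,,\,\cdot\,,y)$, as a function of its first two arguments, is exactly the unconditional score of Definition~\ref{def:score} for the forward diffusion started from $\mu_0^y$. Assumption~\ref{assumption1} supplies precisely the uniform-in-time bound $\sup_{t\in[0,T]}\mathbb{E}^y[\|S(t,X_t,y)\|_H^2]<\infty$ required to invoke the unconditional theorem on the conditional probability space. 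Applying it gives that the SDE in the proposition, started from $Z_0\sim X_T\mid Y=y$, is the time-reversal of (\ref{eq:forward-sde}) under $\mathbb{P}(\,\cdot\mid Y=y)$, and hence $Z_T\sim X_0\mid Y=y$.

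The main obstacle is the measure-theoretic bookkeeping of the conditioning step: one needs the regular conditional distribution $\mathbb{P}(\,\cdot\mid Y=y)$ to be well-defined on the relevant path space (which holds since $H$ is a separable Hilbert space and $Y$ is $\mathbb{R}^n$-valued), $W_t$ to remain a $C$-Wiener process under this conditioning (a consequence of the independence of $W_t$ from $(X_0,B)$), and the remaining hypotheses of \citet[Theorem~1]{pidstrigach2023infinitedimensional} to transfer for $\mathbb{P}_Y$-almost every $y$. An equivalent, more hands-on route is to re-run their proof with $y$ treated as a frozen external parameter throughout, invoking Assumption~\ref{assumption1} wherever a uniform score bound is used; this avoids explicit use of regular conditional distributions, at the cost of duplicating the Girsanov-type time-reversal argument in the conditional setting.
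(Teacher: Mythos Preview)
Your proposal is correct and matches the paper's approach: both recognize that conditioning on $Y=y$ only replaces the initial law $\mu_0$ by the posterior $\mu_0^y$ while leaving the driving Wiener process unchanged, so that $S(\cdot,\cdot,y)$ is precisely the unconditional score for the diffusion started from $\mu_0^y$ and Assumption~\ref{assumption1} supplies the required uniform bound. The paper opts for your ``hands-on'' alternative---re-running the finite-dimensional projection and martingale-convergence argument of \citet{pidstrigach2023infinitedimensional} with $\EE$ replaced by $\EE_y$ throughout, after first re-establishing the reverse-time martingale property of the finite-dimensional score in the conditional setting (their Lemma~\ref{lemma:mart})---rather than invoking the unconditional theorem as a black box.
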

\begin{proof}
Given Assumption \ref{assumption1},
the proof follows the same steps as the one given in \cite{pidstrigach2023infinitedimensional} for the unconditional score. See Appendix C for the full proof.
\end{proof}

Assumption \ref{assumption1} is satisfied under some appropriate conditions. In the following proposition, we provide a set of conditions that ensure the satisfaction of this assumption. It shows that it is possible to get an upper bound in (\ref{eq:bound2}) that is uniform in time provided some additional conditions are fulfilled. 

\begin{proposition}\label{lemma3}
We assume that
$C_\mu$ in (\ref{eq:mumu0})  and $C$ in
(\ref{eq:forward-sde})
have the same basis of eigenfunctions $(v_j)$
 and we define $X^{(j)}_t =\langle X_t,v_j \rangle$ and
$S^{(j)}(t,x,y) = \langle S(t,x,y),v_j \rangle$ so that
in (\ref{prop:Sbound}) $S(t,x,y) = \sum_j S^{(j)}(t,x,y) v_j$.
We assume an observational model as described in Section \ref{sec:gaussian}
and that the $p^{(j)}(1+q^{(j)})$ are uniformly bounded with respect
to $j$ and that $C$ is of trace class.
We make a modified version of assumption in (\ref{eq:mumu0}) as follows. We assume that
1) the conditional distribution of $X_0$ given $Y=y$ is absolutely continuous with respect to the
Gaussian measure $\mu$ with a  Radon-Nikodym derivative proportional to $\exp(-\Phi(x_0,y))$;
2) we have $\Phi(x_0,y)=\sum_j \Phi^{(j)}(x_0^{(j)},y)$, $x_0^{(j)}=\langle x_0,v_j \rangle$;
3) for $\psi^{(j)}(x^{(j)},y)=\exp(-\Phi^{(j)}(x^{(j)},y))$ we have
\begin{equation}
         \frac{1}{K}  \leq  |\psi^{(j)}(x^{(j)},y)| \leq K , \qquad
      |\psi^{(j)}(x^{(j)},y) -\psi^{(j)}({x^{(j)}}',y)|  \leq  L |{x^{(j)}}'-x^{(j)}| ,
\end{equation}
where $K$ and $L$ do not depend on $j$.
Then Assumption \ref{assumption1} holds true.
\end{proposition}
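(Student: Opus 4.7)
The plan is to reduce the infinite-dimensional bound to a sum of one-dimensional Fisher-information-type bounds via the shared eigenbasis structure, and then to use the boundedness and Lipschitz properties of $\psi^{(j)}$ to control each summand uniformly in time. The first step is to exploit the mode-separable form $\Phi(x_0,y) = \sum_j \Phi^{(j)}(x_0^{(j)},y)$: since $\mu = \mathcal{N}(0,C_\mu)$ factorizes as a product of one-dimensional Gaussians $\mathcal{N}(0,\mu_j)$ across the eigenbasis $(v_j)$, the posterior $X_0\mid Y=y$ is a product of one-dimensional measures with densities proportional to $\psi^{(j)}(x_0^{(j)},y)\,\phi_{\mu_j}(x_0^{(j)})$, where $\phi_{\mu_j}$ denotes the density of $\mathcal{N}(0,\mu_j)$. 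Because $C_\mu$ and $C$ share the same eigenbasis, the forward SDE decouples into independent one-dimensional OU processes $dX_t^{(j)} = -\tfrac{1}{2}X_t^{(j)}\,dt + \sqrt{\lambda_j}\,dW_t^{(j)}$, so the conditional laws $X_t^{(j)}\mid Y=y$ remain independent across $j$ for every $t\in[0,T]$.

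Next, I would express the score mode-by-mode. Denoting by $q_t^{(j)}(\cdot\mid y)$ the density of $X_t^{(j)}\mid Y=y$, the one-dimensional analogue of Lemma~\ref{lem:conditional-score} yields $S^{(j)}(t,x,y) = \lambda_j\,\partial_{x}\log q_t^{(j)}(x^{(j)}\mid y)$ and the Parseval-type identity $\EE[\|S(t,X_t,y)\|_H^2\mid Y=y] = \sum_j \lambda_j^2\,\EE[(\partial_{x}\log q_t^{(j)}(X_t^{(j)}\mid y))^2\mid Y=y]$. The key factorization is $q_t^{(j)}(x\mid y) = Z_j^{-1}\,g_t^{(j)}(x)\,\tilde{f}_t^{(j)}(x,y)$, where $g_t^{(j)}$ is the marginal of $X_t^{(j)}$ under the Gaussian prior $\nu_j=\mathcal{N}(0,\mu_j)$, namely $\mathcal{N}(0,s_t^{(j)2})$ with $s_t^{(j)2} = e^{-t}\mu_j + (1-e^{-t})\lambda_j$, and $\tilde{f}_t^{(j)}(x,y) = \EE_{\nu_j}[\psi^{(j)}(X_0^{(j)},y)\mid X_t^{(j)}=x]$. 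The bound $1/K\le \psi^{(j)}\le K$ passes directly to $\tilde{f}_t^{(j)}$. Writing the Gaussian posterior as $X_0^{(j)} = \alpha_t^{(j)} x + \sigma_t^{(j)} Z$ with $Z\sim\mathcal{N}(0,1)$ and $\alpha_t^{(j)} = e^{-t/2}\mu_j/s_t^{(j)2}$ shows that the Lipschitz constant $L$ propagates to $|\partial_{x}\tilde{f}_t^{(j)}|\le L\alpha_t^{(j)}$. Combining with $\partial_{x}\log g_t^{(j)}(x) = -x/s_t^{(j)2}$, I obtain $|\partial_{x}\log q_t^{(j)}(x\mid y)| \le |x|/s_t^{(j)2} + LK\alpha_t^{(j)}$.

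The final step is to take expectations and sum. The $L^\infty$ bound $\psi^{(j)}/\EE_{\nu_j}[\psi^{(j)}]\le K^2$ yields $\EE[(X_t^{(j)})^2\mid Y=y]\le K^2 s_t^{(j)2}$, so $\EE[(\partial_{x}\log q_t^{(j)})^2\mid Y=y]\le 2K^2/s_t^{(j)2} + 2L^2K^2(\alpha_t^{(j)})^2$. Multiplying by $\lambda_j^2$ and using the crude lower bound $s_t^{(j)2}\ge e^{-T}\mu_j$ gives $\lambda_j^2/s_t^{(j)2}\le e^T p^{(j)}\lambda_j$, while $\alpha_t^{(j)}\le e^{t/2}$ (from $s_t^{(j)2}\ge e^{-t}\mu_j$) gives $(\alpha_t^{(j)}\lambda_j)^2\le e^T\lambda_j^2$. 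The hypotheses---$p^{(j)}(1+q^{(j)})$ uniformly bounded in $j$, hence $p^{(j)}$ uniformly bounded, together with $C$ trace class---yield $\sum_j \lambda_j^2/s_t^{(j)2} \le e^T(\sup_k p^{(k)})\,\mathrm{Tr}(C)$ and $\sum_j\lambda_j^2 \le (\sup_k\lambda_k)\,\mathrm{Tr}(C)\le \mathrm{Tr}(C)^2$, both uniformly on $[0,T]$. Together these establish Assumption~\ref{assumption1}.

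The main obstacle is taming the apparent $(1-e^{-t})^{-1}$ singularity present in the raw definition of the conditional score. By switching from the Tweedie-form to the log-gradient form $S^{(j)} = \lambda_j\partial_{x}\log q_t^{(j)}$ and peeling off the Gaussian marginal $g_t^{(j)}$, the singularity is absorbed into the smooth Gaussian part, and the non-Gaussian contribution $\tilde{f}_t^{(j)}$ remains bounded and Lipschitz---this is precisely where conditions (2)--(3) on $\psi^{(j)}$ become indispensable, and it is what makes the uniform-in-time estimate possible in this non-Gaussian setting that is otherwise more delicate than Section~\ref{sec:gaussian}.
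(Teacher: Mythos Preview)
Your argument is correct and shares the paper's overall scheme---diagonalize into modes, split the per-mode score into a Gaussian reference piece plus a correction controlled by the bounded/Lipschitz hypotheses on $\psi^{(j)}$---but the execution differs in a way worth noting. The paper takes as reference the \emph{conditional} Gaussian score $S^{(j,g)}$ from Section~\ref{sec:gaussian} (the posterior under the observational model, with variance $\mu_j/(1+q^{(j)})$) and writes $S^{(j)}=S^{(j,g)}\,T^{(j)}+\tilde R^{(j)}$, bounding the multiplicative factor $T^{(j)}$ by $K^2$ and the remainder $\tilde R^{(j)}$ via the Lipschitz constant $L$; this is why their final bound genuinely depends on $\sup_j p^{(j)}(1+q^{(j)})$. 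You instead take the \emph{unconditional} Gaussian marginal $g_t^{(j)}=\mathcal{N}(0,s_t^{(j)2})$ as reference and split additively at the level of log-densities, $\partial_x\log q_t^{(j)}=\partial_x\log g_t^{(j)}+\partial_x\log\tilde f_t^{(j)}$. This is cleaner: the singular part $-x/s_t^{(j)2}$ is explicit, the correction $\partial_x\log\tilde f_t^{(j)}$ is bounded by $LK\alpha_t^{(j)}$ directly from the Lipschitz property propagated through the Gaussian smoothing, and the final sum only needs $\sup_j p^{(j)}<\infty$ (which is implied by, but strictly weaker than, the paper's $\sup_j p^{(j)}(1+q^{(j)})<\infty$). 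Both routes isolate the same mechanism for taming the apparent $(1-e^{-t})^{-1}$ blow-up; yours just avoids carrying the observational parameters $q^{(j)}$ through the estimates.
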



\begin{proof}
    The proof is given in Appendix C.
\end{proof}
\vspace*{-0.75em}
To use the new score function of Definition \ref{def:conditional-score} for sampling from the posterior, we need to define a way to estimate it. In other words, we need to define a loss function over which the difference between the true score and a neural network $s_\theta(t,x_t,y)$ is minimized in $\theta$. A natural choice for the loss function is
\begin{equation}
	\mathbb{E}_{ {t \sim U(0,T),}x_t,y \sim {\cal L}(X_t,Y)} \big[ \|  
 S(t,x_t,y) -s_\theta(t,x_t,y)\|_H^2\big],
\end{equation}
however it cannot be minimized directly since we do not have access to the ground truth conditional score 
$S(t,x_t,y)$.
Therefore, in practice, a different objective has to be used. \citet{batzolis2021conditional} proved that, in finite dimensions, a denoising score matching loss can be used:
\begin{equation}
	\mathbb{E}_{ {t \sim U(0,T),}x_0,y \sim {\cal L}(X_0,Y), x_t \sim {\cal L}(X_t|X_0=x_0)} \big[ \|  C \nabla_{x_t} \ln p(x_t|x_0) -s_\theta(t,x_t,y)\|^2\big] .
\label{eq:objective}
\end{equation}
This expression involves only $\nabla_{x_t} \log p(x_t|x_0)$ which can be computed analytically from the transition kernel of the forward diffusion process, also in infinite dimensions. 
In the following proposition, we build on the arguments of \citet{batzolis2021conditional} and provide a proof that the conditional denoising estimator is a consistent estimator of the conditional score in infinite dimensions.

\begin{proposition} Under Assumption \ref{assumption1},
the minimizer in $\theta$ of
\begin{equation}
	\label{eq:carola1}
	\mathbb{E}_{x_0,y \sim {\cal L}(X_0,Y), x_t \sim {\cal L}(X_t|X_0=x_0)} \big[ \|
- (1-e^{-t})^{-1} ( x_t -e^{-t/2} x_0)
 - s_\theta(t,x_t,y)\|_H^2\big]
\end{equation}
is the same as the minimizer of
\begin{equation}
	\label{eq:carola2}
	\mathbb{E}_{x_t,y \sim {\cal L}(X_t,Y)} \big[ \|
 S(t,x_t,y) -s_\theta(t,x_t,y)\|_H^2\big] .
\end{equation}
The same result holds if we add $t \sim {\cal U}(0,T)$ in the expectations.
\end{proposition}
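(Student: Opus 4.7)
The plan is to adapt the classical Vincent-style identity, in the conditional form due to \citet{batzolis2021conditional}, to the infinite-dimensional Hilbert setting. Expanding $\|a - s_\theta\|_H^2 = \|a\|_H^2 - 2\langle a, s_\theta\rangle + \|s_\theta\|_H^2$ in both \eqref{eq:carola1} and \eqref{eq:carola2}, I would make two observations. First, the pure quadratic term $\EE[\|s_\theta(t,X_t,Y)\|_H^2]$ is identical in the two objectives, because under the joint law used in \eqref{eq:carola1} the marginal of $(X_t,Y)$ coincides with $\mathcal{L}(X_t,Y)$. Second, the $\|a\|_H^2$ terms depend only on the data distribution and not on $\theta$, so they can be dropped for optimization. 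It therefore suffices to show that the cross terms agree.

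To do so, set $D_t := -(1-e^{-t})^{-1}(X_t - e^{-t/2}X_0)$. Conditioning on $(X_t,Y)$, using the tower property, and pulling the $\sigma(X_t,Y)$-measurable vector $s_\theta(t,X_t,Y)$ out of the inner conditional expectation yields
\[
\EE\bigl[\langle D_t,\, s_\theta(t,X_t,Y)\rangle\bigr]
= \EE\bigl[\langle \EE[D_t \mid X_t,Y],\, s_\theta(t,X_t,Y)\rangle\bigr]
= \EE\bigl[\langle S(t,X_t,Y),\, s_\theta(t,X_t,Y)\rangle\bigr],
\]
where the last equality is Definition \ref{def:conditional-score}. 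This is precisely the cross term in \eqref{eq:carola2}, so \eqref{eq:carola1} and \eqref{eq:carola2} differ only by a $\theta$-independent quantity and share the same minimizer in $\theta$. The extension to the time-averaged version, where one also averages over $t \sim \mathcal{U}(0,T)$, follows by Fubini applied to the non-negative integrands, since the identity above holds pointwise in $t$.

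The main obstacle is the rigorous justification of the measure-theoretic steps in infinite dimensions: that $D_t$ is Bochner integrable conditionally on $(X_t,Y)$, so that $\EE[D_t \mid X_t,Y]$ exists as a genuine $H$-valued random variable, and that the $H$-valued conditional expectation commutes with the inner product against a $\sigma(X_t,Y)$-measurable vector (a standard fact, obtained by characterizing conditional expectation through its duality pairing with $L^2$ test vectors). Assumption \ref{assumption1} gives $\EE[\|S(t,X_t,Y)\|_H^2]<\infty$; combined with the trace-class property of $C$ controlling $\EE[\|X_t\|_H^2]$ and with $\EE[\|X_0\|_H^2]<\infty$, this ensures finiteness of all second moments, and Cauchy--Schwarz then makes the cross term well-defined whenever $s_\theta(t,\cdot,\cdot) \in L^2(\Omega;H)$. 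Once these integrability checks are in place, the algebraic identity above delivers the result.
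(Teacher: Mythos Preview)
Your proposal is correct and follows essentially the same route as the paper: expand the squared norms, observe that the $\|s_\theta\|_H^2$ terms coincide and the $\|a\|_H^2$ terms are $\theta$-independent, and match the cross terms via the tower property together with Definition~\ref{def:conditional-score}. The one point the paper makes explicit that you leave implicit is the identity $\mathcal{L}(X_t\mid X_0=x_0,Y=y)=\mathcal{L}(X_t\mid X_0=x_0)$ (the forward diffusion noise is independent of the observation noise $B$), which is precisely what justifies your claim that the sampling scheme in \eqref{eq:carola1} induces the true joint law $\mathcal{L}(X_0,X_t,Y)$ and hence the correct $(X_t,Y)$-marginal; it would strengthen your write-up to state this explicitly.
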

\begin{proof}
The proof combines some of the arguments of \citet{batzolis2021conditional} and steps of the proof of Lemma 2 in \cite{pidstrigach2023infinitedimensional}, see Appendix C.
\end{proof}

\begin{remark}
A statement of robustness can be written as in \cite[Theorem 2]{pidstrigach2023infinitedimensional}. 
\end{remark}

\section{Numerical experiments} \label{sec:numerics}
\vspace*{-0.4em}

To put the presented theoretical results into practice, we provide two examples. The first stylized example aims at showcasing (i) the ability of our method in capturing nontrivial conditional distributions; and (ii) the discretization-invariance property of the learned conditional SDM. In the second example, we sample from the posterior distribution of a linearized seismic imaging problem in order to demonstrate the applicability of our method to large-scale problems. In both examples, in order to enable learning in function spaces, we parameterize the conditional score using Fourier neural operators~\citep{li2021fourier}. Details regarding our experiment and implementation\footnote{Code to reproduce results can be found at \href{https://github.com/alisiahkoohi/csgm}{https://github.com/alisiahkoohi/csgm}.} are presented at Appendix D.

\paragraph{Stylized example}
Inspired by~\citet{phillips2022spectral}, we define the target density via the relation $x_0 = a y^2 + \eps$ with $\eps \sim \Gamma (1, 2)$, $a \sim \mathcal{U}\{-1, 1\}$, and $y \in [-3, 3]$. Figure~\ref{fig:true_samples} illustrates the samples $x_0$ evaluated on a fine $y$ grid. After training (details in Appendix D), we sample the conditional distribution on uniformly sampled grids between $[-3, 3]$, each having 20 to 40 grid points. Figures~\ref{fig:grid_25} and \ref{fig:grid_35} show the predicted samples for grid sizes of $25$ and $35$, respectively. The marginal conditionals associated with $y=-1.0, 0.0, 0.5$ are shown in Figures~\ref{fig:marginal_left}--\ref{fig:marginal_right}, respectively. The gray shaded density in the bottom row of Figure~\ref{fig:toy_example} indicates the ground truth density, and colored estimated densities correspond to different discretizations of the horizontal axis. The visual inspection of samples and estimated densities indicates that our approach is indeed discretization-invariant.

\paragraph{Linearized seismic imaging example}
In this experiment, we address the problem of estimating the short-wavelength component of the Earth's subsurface squared-slowness model
(i.e., seismic image; cf. Figure~\ref{fig:true_model}) given surface measurements and a long-wavelength, smooth squared-slowness model (cf. Figure~\ref{fig:background}). Following~\citet{OrozcoSiahkoohiEtAl_2023}, in order to reduce the high dimensionality of surface measurements, we apply the adjoint of the forward operator, the Born scattering operator, to the measurements and use the outcome (cf. Figure~\ref{fig:observed_data}) instead of measured data to condition the SDM. After training, given previously unseen observed data, we use the SDM to sample $10^3$ posterior samples to estimate the conditional mean (cf. Figure~\ref{fig:conditional_mean}), which corresponds to the minimum-variance estimate~\citep{AndersonMoore1979}, and the pointwise standard deviation (cf. Figure~\ref{fig:pointwise_std}), which we use to quantify the uncertainty. As expected, the pointwise standard deviation highlights areas of high uncertainty, particularly in regions with complex geological structures---such as near intricate reflectors and areas with limited illumination (deep and close to boundaries). We also observe a strong correlation between the pointwise standard deviation and the error in the conditional mean estimate (Figure~\ref{fig:error}), confirming the accuracy of our Bayesian inference method.

\begin{figure}[t]
    \centering
    \captionsetup[subfigure]{skip=-11pt}
    \begin{subfigure}[b]{0.329\textwidth}
        \includegraphics[width=\textwidth]{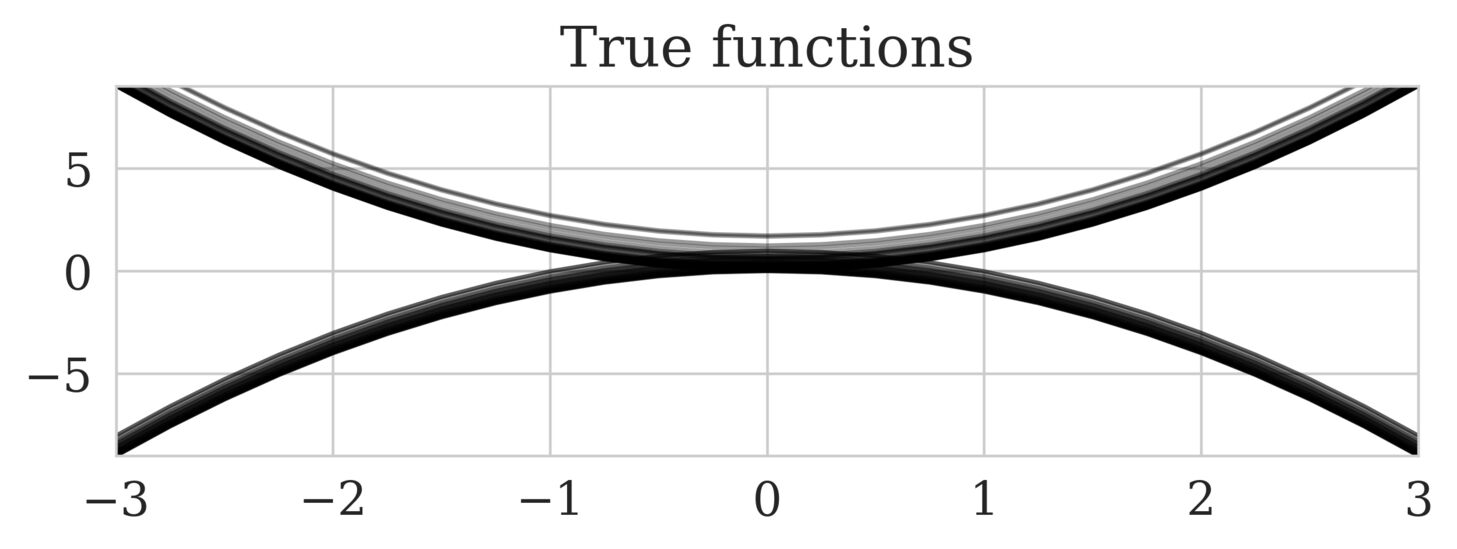}
    \vspace{0ex}\caption{}
    \label{fig:true_samples}
    \end{subfigure}\hspace{0em}
    \begin{subfigure}[b]{0.329\textwidth}
        \includegraphics[width=\textwidth]{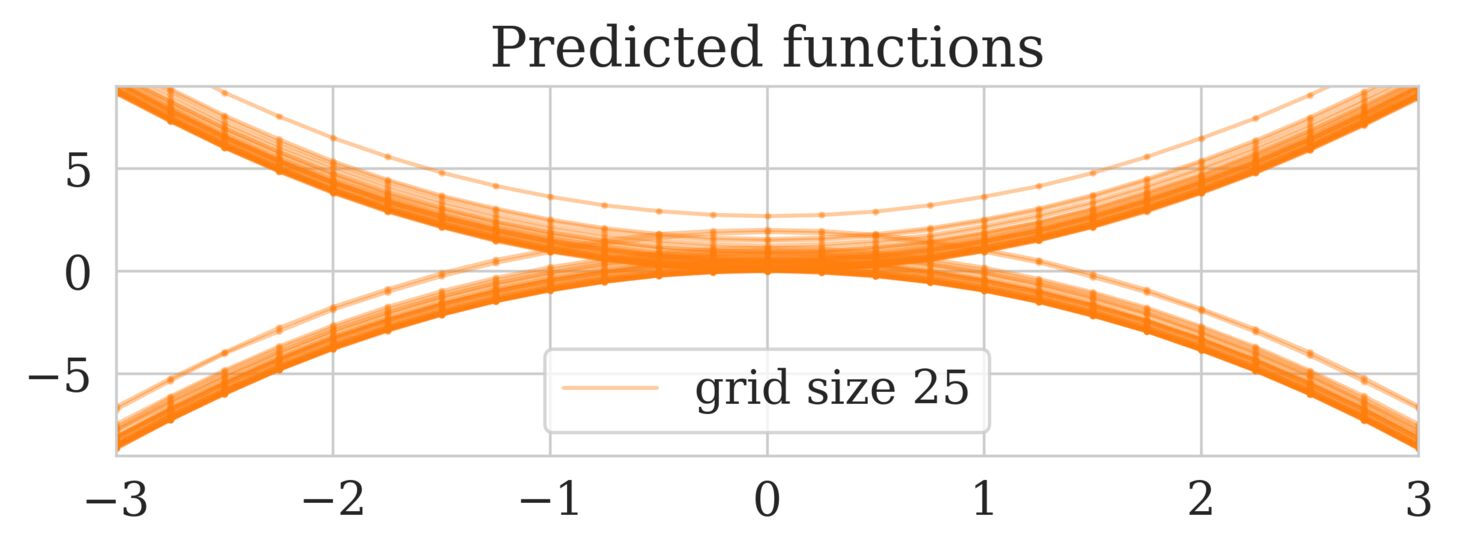}
    \vspace{0ex}\caption{}
    \label{fig:grid_25}
    \end{subfigure}\hspace{0em}
    \begin{subfigure}[b]{0.329\textwidth}
        \includegraphics[width=\textwidth]{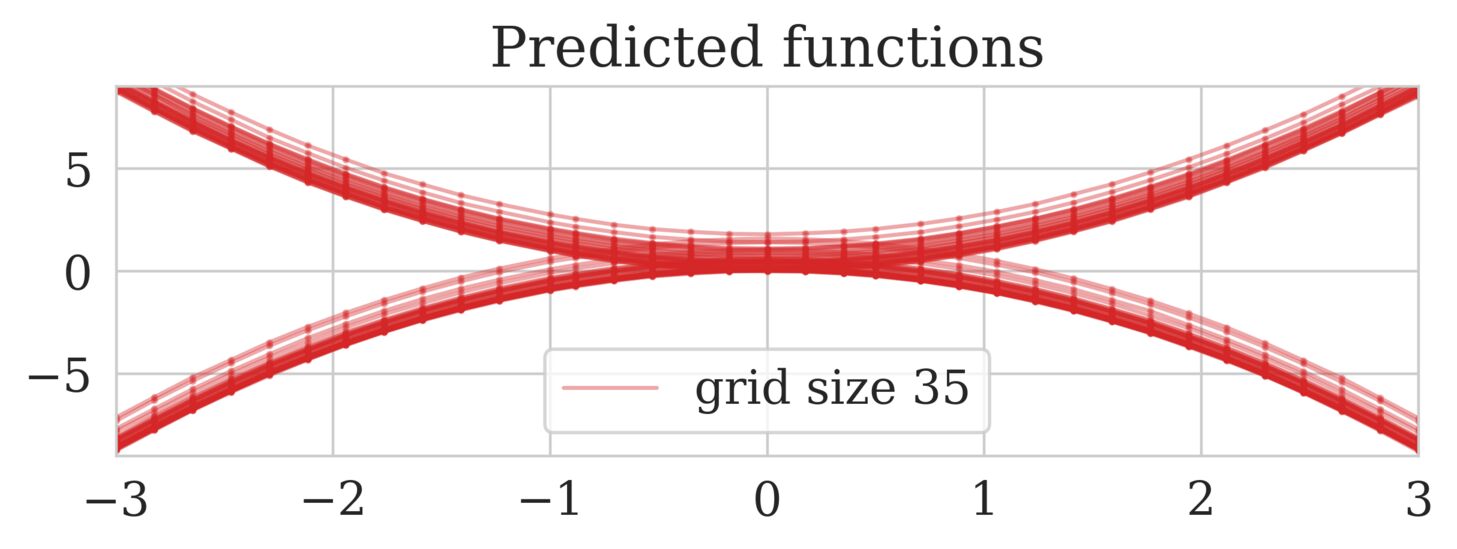}
    \vspace{0ex}\caption{}
    \label{fig:grid_35}
    \end{subfigure}\hspace{0em}

    \begin{subfigure}[b]{0.329\textwidth}
        \includegraphics[width=\textwidth]{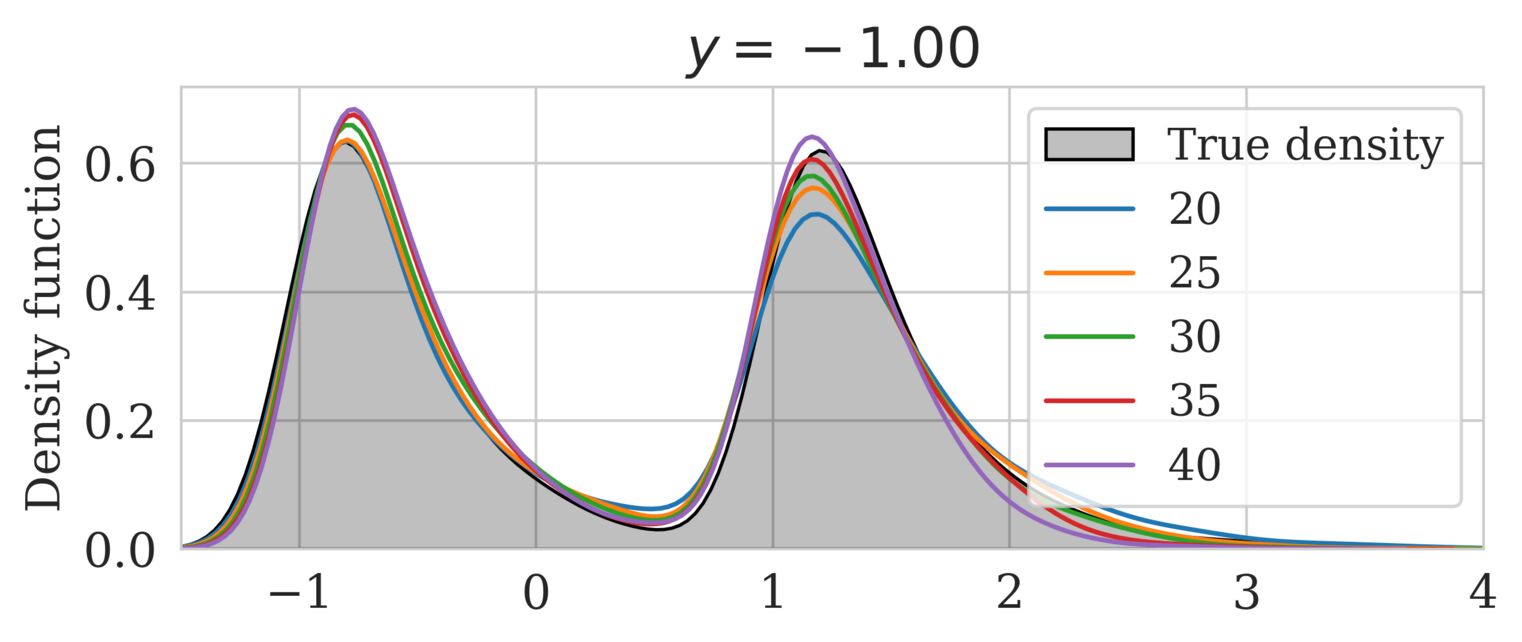}
    \vspace{0ex}\caption{}
    \label{fig:marginal_left}
    \end{subfigure}\hspace{0em}
    \begin{subfigure}[b]{0.329\textwidth}
        \includegraphics[width=\textwidth]{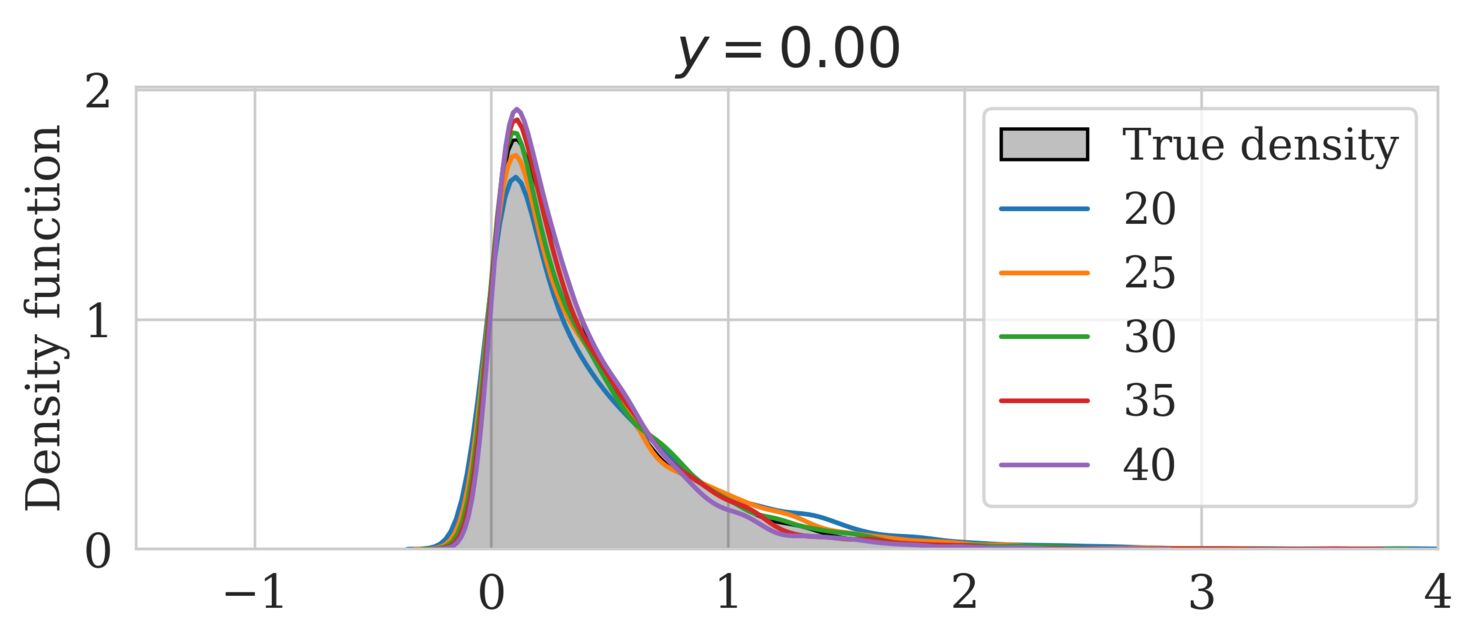}
    \vspace{0ex}\caption{}
    \label{fig:marginal_center}
    \end{subfigure}\hspace{0em}
    \begin{subfigure}[b]{0.329\textwidth}
        \includegraphics[width=\textwidth]{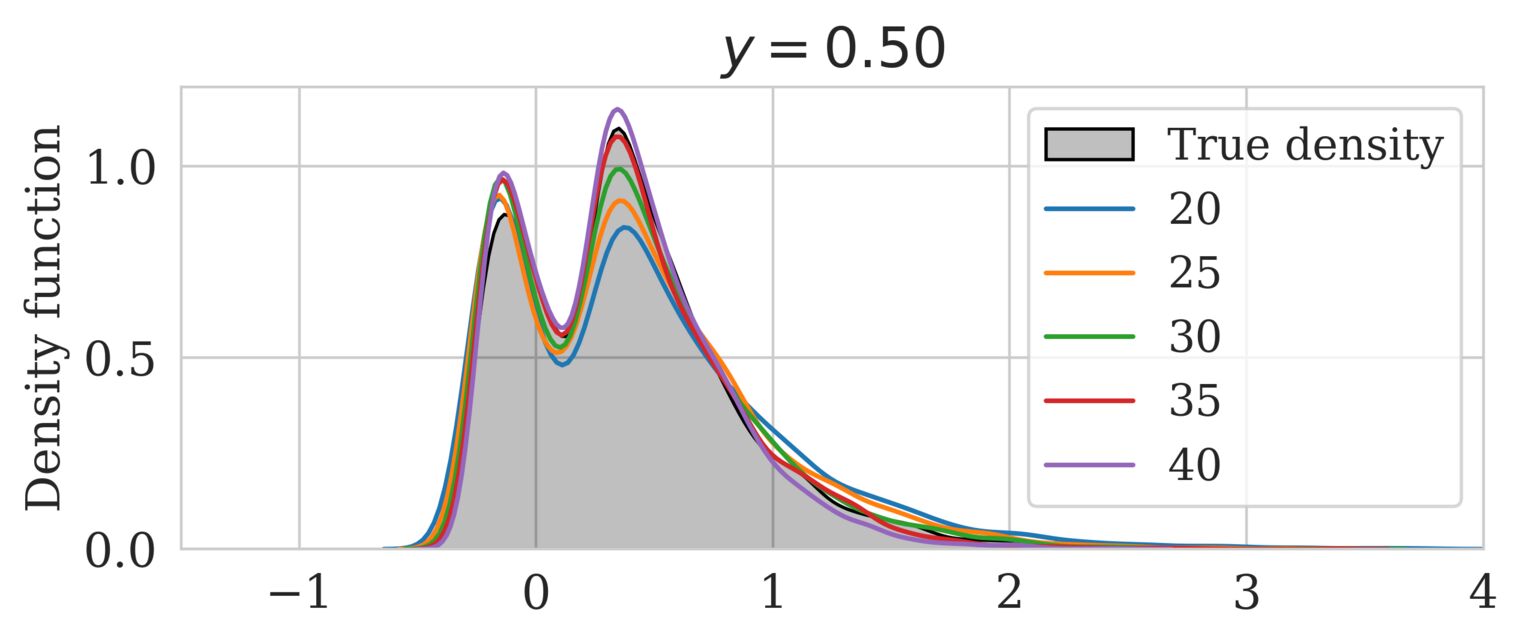}
    \vspace{0ex}\caption{}
    \label{fig:marginal_right}
    \end{subfigure}\hspace{0em}

    \caption{A depiction of the method's discretization invariance. Top row displays ground truth (a) and predicted samples (functions) on a uniformly sampled grid with (b) 25 and (c) 35 grid points. Bottom row shows conditional distribution marginals for (d) $y=-1.0$, (e) $y=0.0$, and (f) $y=0.5$.}
    \label{fig:toy_example}
\end{figure}

\begin{figure}[t]
    \centering
    \captionsetup[subfigure]{skip=-11pt}

        \begin{subfigure}[b]{0.325\textwidth}
            \includegraphics[width=\textwidth]{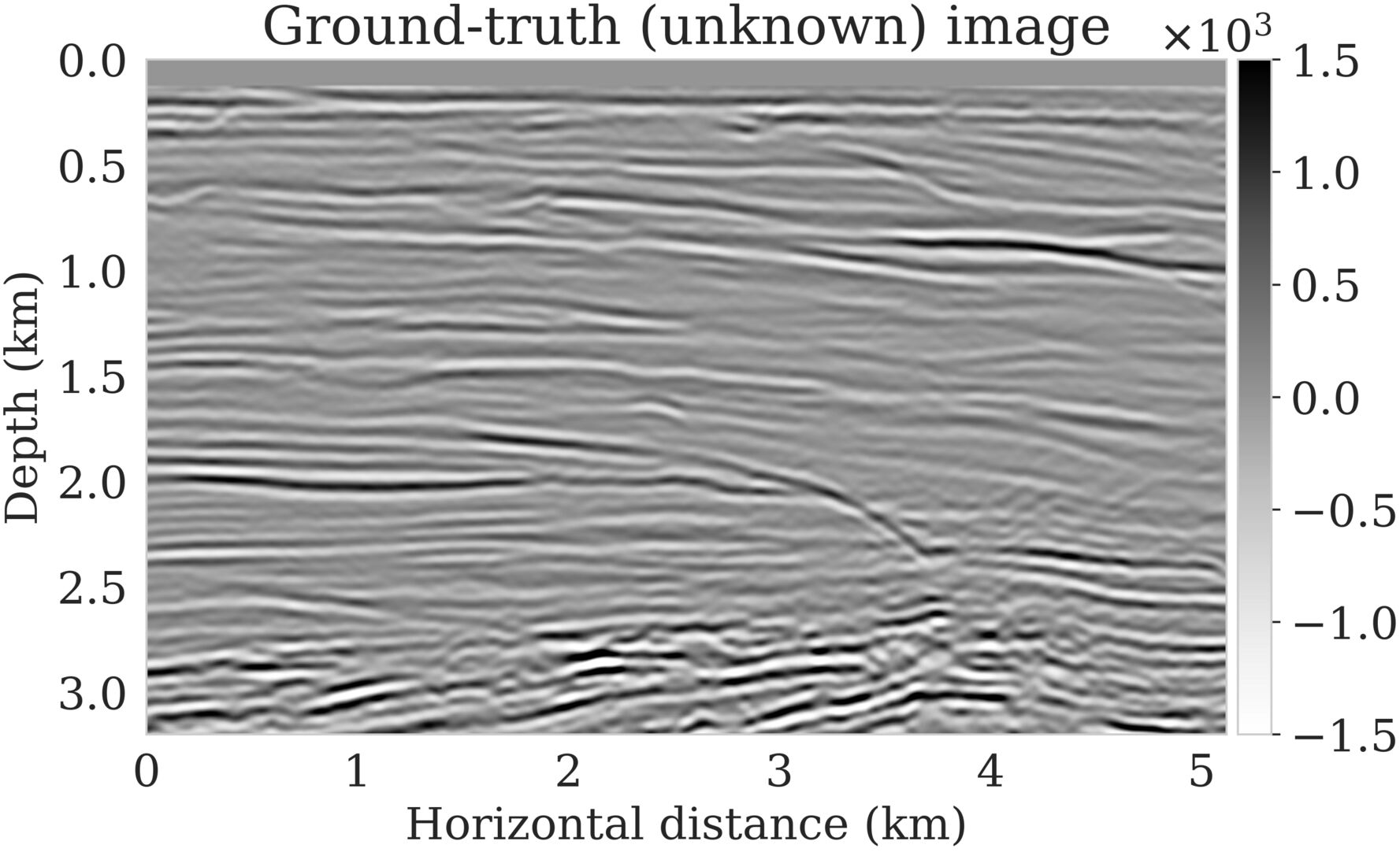}
        \vspace{0ex}\caption{}
        \label{fig:true_model}
        \end{subfigure}\hspace{0em}
        \begin{subfigure}[b]{0.32\textwidth}
            \includegraphics[width=\textwidth]{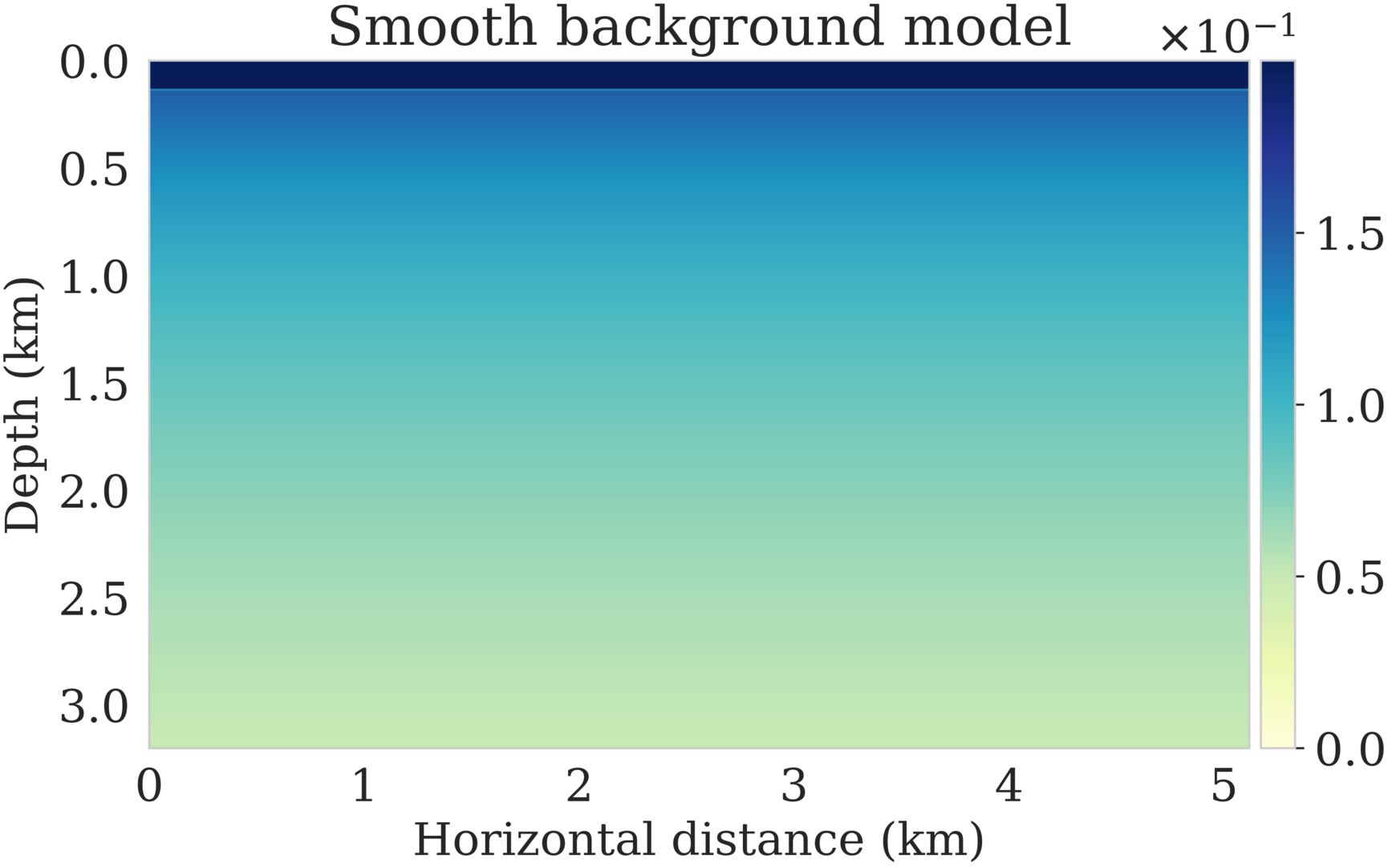}
        \vspace{0ex}\caption{}
        \label{fig:background}
        \end{subfigure}
        \begin{subfigure}[b]{0.325\textwidth}
            \includegraphics[width=\textwidth]{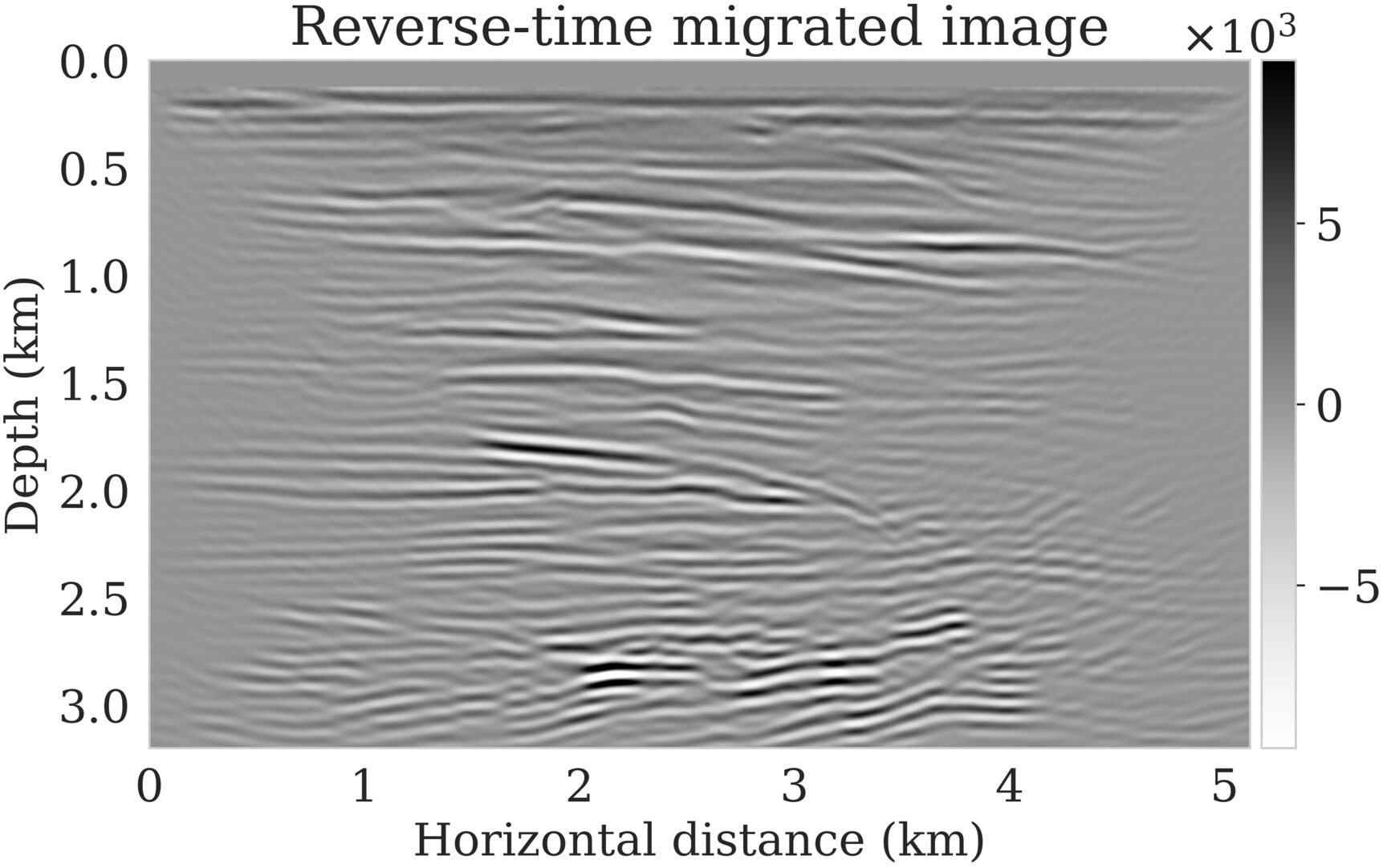}
        \vspace{0ex}\caption{}
        \label{fig:observed_data}
        \end{subfigure}\hspace{0em}

        \begin{subfigure}[b]{0.325\textwidth}
            \includegraphics[width=\textwidth]{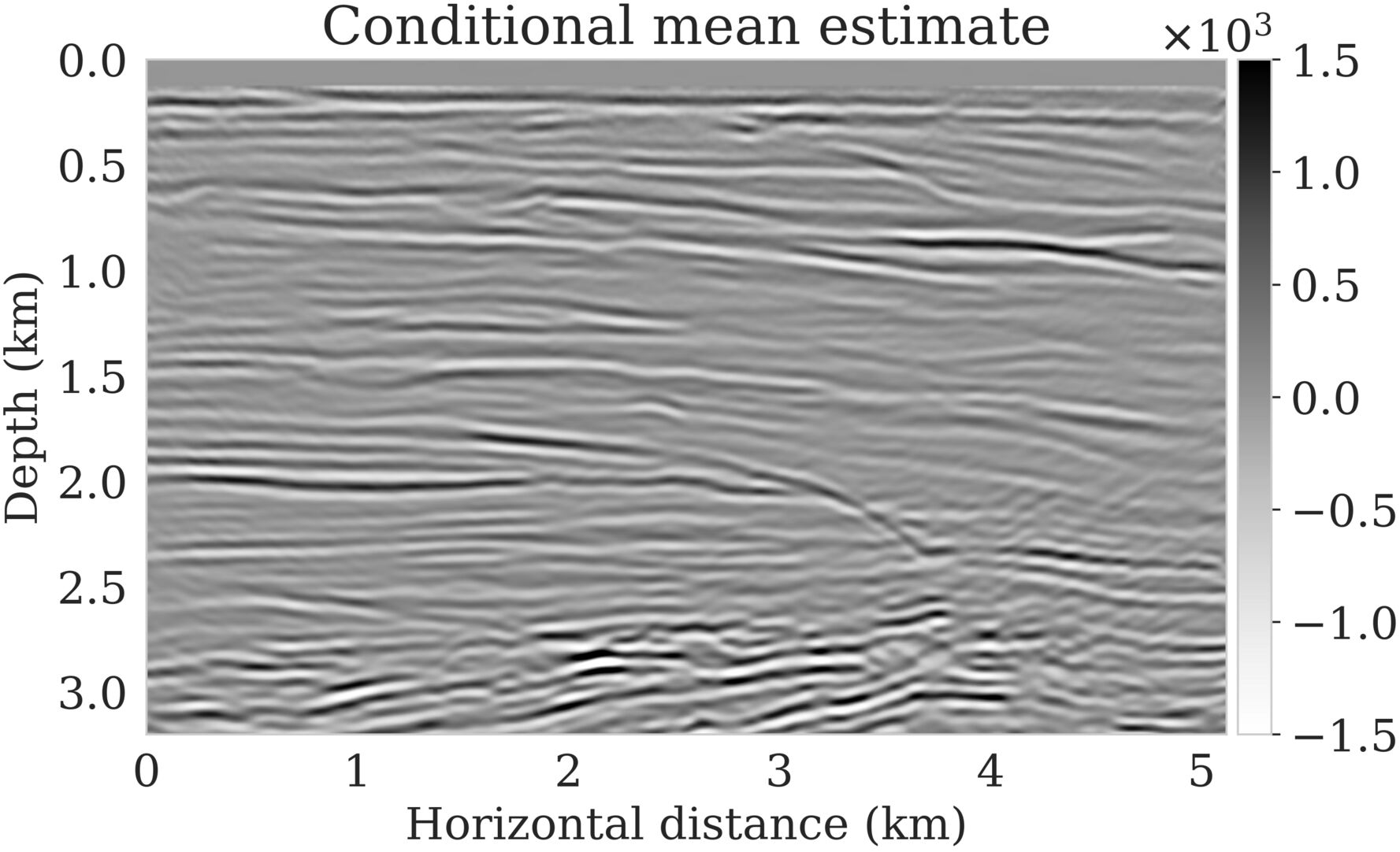}
        \vspace{0ex}\caption{}
        \label{fig:conditional_mean}
        \end{subfigure}\hspace{0em}
        \begin{subfigure}[b]{0.325\textwidth}
            \includegraphics[width=\textwidth]{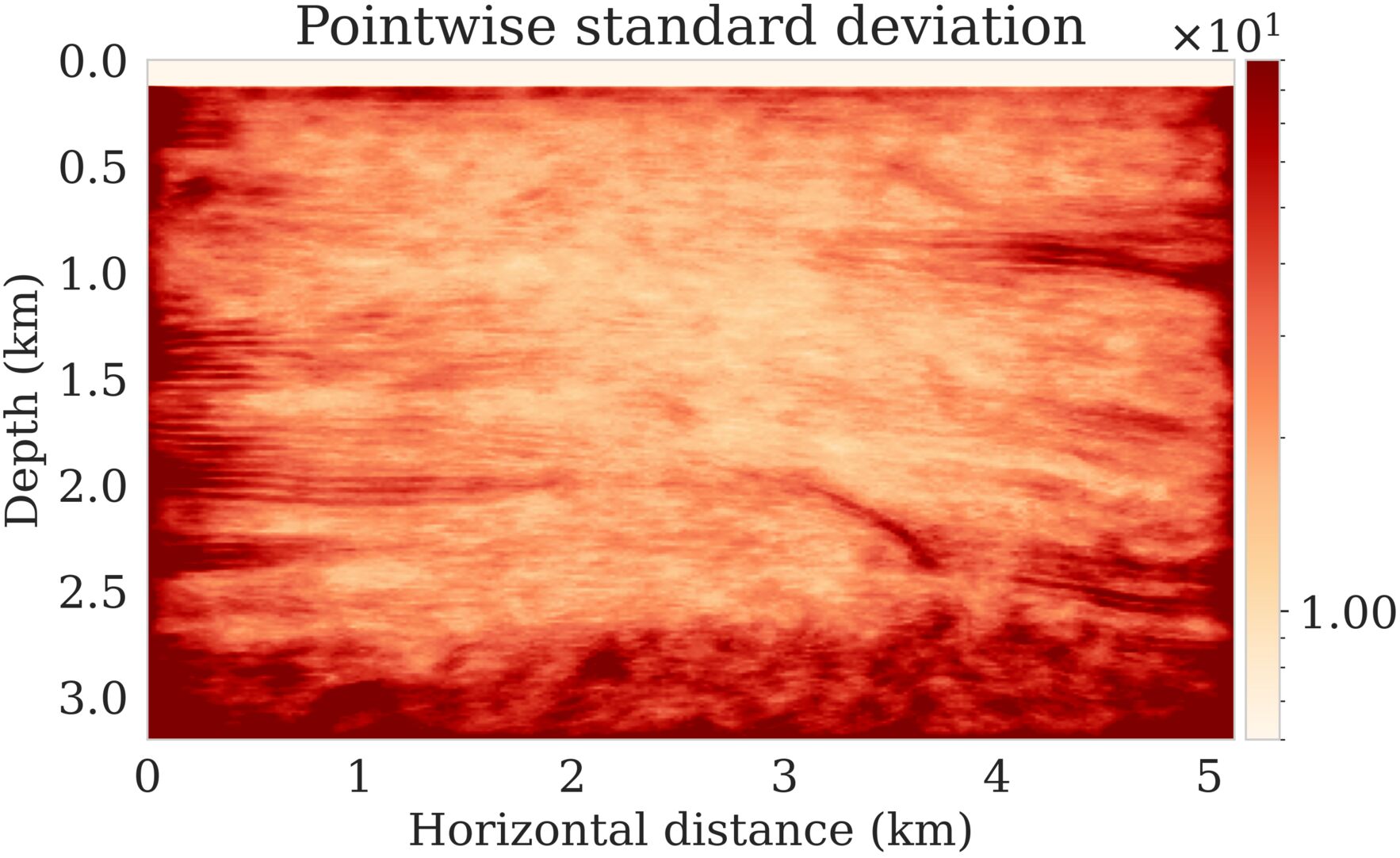}
        \vspace{0ex}\caption{}
        \label{fig:pointwise_std}
        \end{subfigure}\hspace{0em}
        \begin{subfigure}[b]{0.325\textwidth}
            \includegraphics[width=\textwidth]{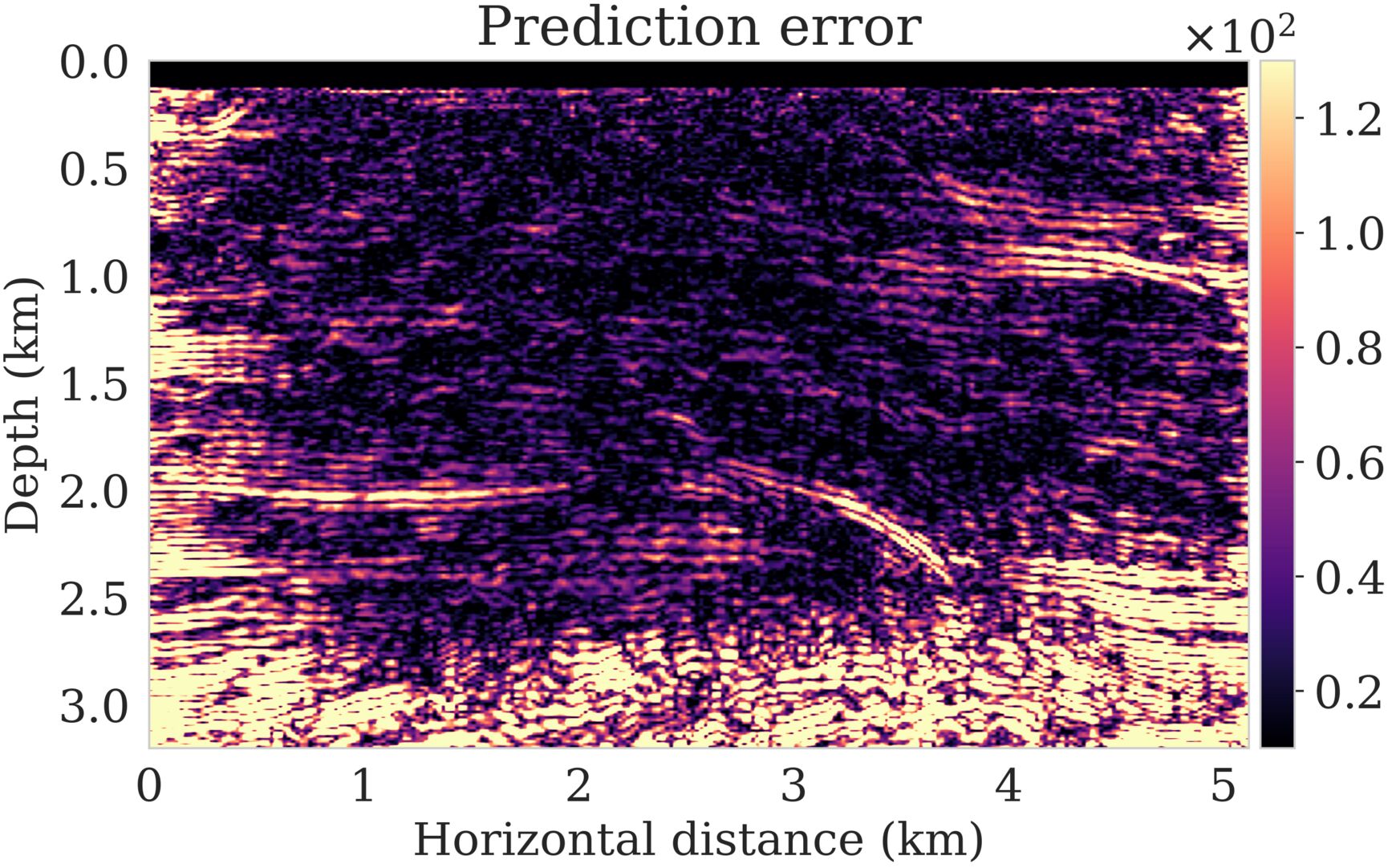}
        \vspace{0ex}\caption{}
        \label{fig:error}
        \end{subfigure}\hspace{0em}

\caption{Seismic imaging and uncertainty quantification. (a) Ground-truth seismic image. (b) Background squared-slowness. (c) Data after applying the adjoint Born operator. (d) Conditional (posterior) mean. (e) Pointwise standard deviation. (f) Absolute error between Figures~\ref{fig:true_model} and~\ref{fig:conditional_mean}.}
\end{figure}

\section{Conclusions}

We introduced a theoretically-grounded method that \emph{is able to perform conditional sampling in infinite-dimensional Hilbert (function) spaces using score-based diffusion models}. This is a foundational step in using diffusion models to perform Bayesian inference. To achieve this, we learned the infinite-dimensional score function, as defined by \citet{pidstrigach2023infinitedimensional}, conditioned on the observed data. Under mild assumptions on the prior, this newly defined score---used as the reverse drift of the diffusion process---yields a generative model that samples from the posterior of a linear inverse problem. In particular, the well-known singularity in the conditional score for small times can be avoided.
Building on these results, we presented stylized and large-scale examples that showcase the validity of our method and its discretization-invariance, a property that is a consequence of our theoretical and computational framework being built on infinite-dimensional spaces.


\section*{Acknowledgments}
JG was supported by Agence de l'Innovation de D\'efense – AID - via Centre Interdisciplinaire d’Etudes pour la D\'efense et la S\'ecurit\'e – CIEDS - (project 2021 - PRODIPO). LB, AS, and MVdH acknowledge support from the Simons Foundation under the MATH\,$+$\,X program, the Department of Energy under grant DE-SC0020345, and the corporate members of the Geo-Mathematical Imaging Group at Rice University.
KS was supported by  Air Force Office of Scientific Research under grant FA9550-22-1-0176 and the National Science Foundation under grant DMS-2308389.

\bibliography{bib_camera_ready}

\newpage

\appendix
\section{Probability measures on infinite-dimensional Hilbert spaces}
In this section, we briefly present some fundamental notions related to probability measures on infinite-dimensional spaces, specifically separable Hilbert spaces $(H, \langle \cdot, \cdot \rangle)$. There is abundant literature on the subject. For more details we refer to \citet{da2006introduction, kerrigan2022diffusion, pidstrigach2023infinitedimensional, stuart2010inverse} and references therein.

\subsection{Gaussian measures on Hilbert spaces}
\begin{definition}
Let $(\Omega, \mathcal{F}, \mathbb{P})$ be a probability space. A measurable function $X:\Omega \to H$ is called a Gaussian random element (GRE) if for any $h \in H$, the random variable $\langle h,X\rangle $ has a scalar Gaussian distribution.
\end{definition}

Every GRE $X$ has a mean element $m \in H$ defined by
\[
m = \int_\Omega X(\omega) d\mathbb{P}(\omega),
\]
and a linear covariance operator $C:H \to H$ defined by
\[
Ch = \int_\Omega \langle h, X(\omega)\rangle X(\omega) d\mathbb{P}(\omega) - \langle m,h\rangle m, \quad \forall h \in H.
\]
We denote $X\sim \mathcal{N}(m, C)$ for a GRE in $H$ with mean element $m$ and covariance operator $C$. It can be shown that the covariance operator of a GRE is trace class, positive-definite and symmetric. Conversely, for any trace class, positive-definite and symmetric linear operator $C:H\to H$ and every $m \in H$, there exists a GRE with $X \sim \mathcal{N}(m,C)$. This leads us to the following definition:

\begin{definition}
If $X$ is a GRE, the pushforward of $\mathbb{P}$ through $X$, denoted by $\mathbb{P}_X$, is called a Gaussian probability measure on $H$. We will write $\mathbb{P}_X = \mathcal{N}(m,C)$.
\end{definition}
Let $X\sim \mathcal{N}(m,C)$. We can make a few remarks:\\
1) For any $h \in H$, we have $\langle h,X\rangle \sim \mathcal{N}(\langle h,m\rangle, \langle Ch,h\rangle)$.\\
2) $C$ is compact. By Mercer theorem \cite{mercer1909xvi} there exists $(\lambda_j)$ and an orthonormal basis of eigenfunctions $(v_j)$ such that $\lambda_j \geq 0$ and $Cv_j=\lambda_j v_j \forall j$. We consider the infinite-dimensional case in which $\lambda_j>0$ $\forall j$.\\
3) Suppose $m=0$ (we call the Gaussian measure of $X$ centered). The expected square norm of $X$ is given by
\[
\mathbb{E}[\|X\|_H^2] = \mathbb{E}\left[\sum_{j=1}^\infty \langle v_j,X \rangle^2 \right] = \sum_{j=1}^\infty \langle C v_j,v_j \rangle = \sum_{j=1}^\infty \lambda_j = \text{Tr}(C),
\]
which is finite since $C$ is trace class.

\subsection{Absolutely continuous measures and the Feldman-Hajek theorem}

Here we introduce the notion of absolute continuity for measures.

\begin{definition}
Let $\mu$ and $\nu$ be two probability measures on $H$ equipped with its Borel $\sigma$-algebra $\mathcal{B}(H)$. Measure $\mu$ is absolutely continuous with respect to $\nu$ (we write $\mu \ll \nu$) if $\mu(\Sigma) =0$ for all $\Sigma \in \mathcal{B}(H)$ such that $\nu(\Sigma)=0$.
\end{definition}

\begin{definition}
If $\mu \ll \nu$ and $\nu \ll \mu$ then $\mu$ and $\nu$ are said to be equivalent and we write $\mu \sim \nu$. If $\mu$ and $\nu$ are concentrated on disjoint sets then they are called singular; in this case we write $\mu \perp \nu$.
\end{definition}

Another notion that will be used throughout the paper is the Radon-Nikodym derivative.

\begin{theorem}
Let $\mu$ and $\nu$ be two measures on $(H,\mathcal{B}(H))$ and $\nu$ be $\sigma$-finite. If $\mu \ll \nu $, then there exists a $\nu$-measurable function $f$ on $H$ such that
\[
\mu (A') = \int_{A'} f d\nu, \quad \forall A' \in \mathcal{B}(H).
\]
Furthermore, $f$ is unique $\nu$-a.e.
and is called the Radon-Nikodym derivative of $\mu$ with respect to $\nu$. It is denoted by $d\mu/d\nu$.
\end{theorem}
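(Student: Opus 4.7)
The plan is to prove the Radon-Nikodym theorem via von Neumann's classical Hilbert-space argument, first reducing to the finite-measure case and then invoking the Riesz representation theorem on $L^2$. Although the result is stated in the Hilbert-space appendix, the underlying proof is measure-theoretic and uses nothing about $H$ beyond its Borel structure, so I would not attempt to exploit the geometry of $H$ at all.

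First I would reduce to the case where both $\mu$ and $\nu$ are finite. Since $\nu$ is $\sigma$-finite, I can write $H = \bigsqcup_k H_k$ with $\nu(H_k) < \infty$. On each piece I would further decompose using sets of finite $\mu$-measure (or handle the possibility that $\mu$ is only $\sigma$-finite via a separate truncation); the key observation is that absolute continuity is preserved under restriction, and assembling densities $f_k$ on each $H_k$ into a global $f = \sum_k f_k \mathbf{1}_{H_k}$ yields a $\nu$-measurable function satisfying the integral identity on every Borel set by monotone convergence. So it suffices to produce $f$ when $\mu(H), \nu(H) < \infty$.

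In the finite case I would form the auxiliary finite measure $\lambda := \mu + \nu$ and define the bounded linear functional $T \colon L^2(H,\lambda) \to \mathbb{R}$ by $T(g) := \int_H g \, d\mu$. Boundedness follows from Cauchy--Schwarz since $\mu \leq \lambda$ and $\lambda(H) < \infty$. The Riesz representation theorem on the Hilbert space $L^2(H,\lambda)$ produces $h \in L^2(H,\lambda)$ with $\int g \, d\mu = \int g h \, d\lambda$ for every $g \in L^2(H,\lambda)$. Plugging in indicator functions $g = \mathbf{1}_{A'}$ shows $0 \leq h \leq 1$ holds $\lambda$-a.e.\ (otherwise one gets a contradiction on sets where $h<0$ or $h>1$). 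Rearranging $\int g \, d\mu = \int g h \, d\mu + \int g h \, d\nu$ gives $\int g(1-h) \, d\mu = \int g h \, d\nu$. The hypothesis $\mu \ll \nu$ enters precisely here: the set $\{h = 1\}$ has $\nu$-measure zero (plug in $g = \mathbf{1}_{\{h=1\}}$), hence $\mu$-measure zero as well, so $1-h > 0$ a.e. with respect to both measures. I can therefore set $f := h/(1-h)$ on $\{h < 1\}$ (and $f := 0$ on the $\nu$-null set $\{h=1\}$) and substitute $g = \mathbf{1}_{A'}/(1-h)$ in the identity, using monotone convergence to justify the unboundedness, to obtain $\mu(A') = \int_{A'} f \, d\nu$ for all $A' \in \mathcal{B}(H)$.

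For uniqueness, if $f_1, f_2$ both represent $\mu$, then $\int_{A'}(f_1 - f_2)\, d\nu = 0$ for every Borel $A'$; choosing $A' = \{f_1 > f_2\}$ and $A' = \{f_1 < f_2\}$ forces $f_1 = f_2$ $\nu$-a.e. The main obstacle I anticipate is the truncation/unboundedness step when moving from $h \in L^2(\lambda)$ to the potentially unbounded density $f = h/(1-h)$, together with the $\sigma$-finite reassembly: one must be careful that all integrals remain well-defined and that the monotone convergence arguments apply on sets where $f$ can blow up. Everything else, including the $\lambda$-a.e.\ bounds on $h$ and the passage from $L^2$ test functions to indicators, is routine once the Riesz representation step is in place.
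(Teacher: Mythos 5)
Your proposal addresses a statement the paper never actually proves: this is the classical Radon--Nikodym theorem, which the paper states in Appendix A purely as background (alongside the Feldman--Hajek theorem and the infinite-dimensional Bayes' theorem) and implicitly defers to the standard measure-theory literature. So there is no paper proof to compare against; what you have written is von Neumann's classical $L^2$ argument, and it is correct in substance. You rightly observe that the Hilbert-space setting of $H$ is irrelevant beyond its Borel structure. The reduction to finite measures, the bounded functional $T(g)=\int g\,d\mu$ on $L^2(H,\mu+\nu)$, the Riesz representation producing $h$ with $0\le h\le 1$ $\lambda$-a.e., the identification of $\{h=1\}$ as $\nu$-null (the one place where $\mu\ll\nu$ enters), the passage to $f=h/(1-h)$, and the uniqueness argument via $A'=\{f_1>f_2\}$ are all the right steps.

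Two caveats, both minor. First, the substitution $g=\mathbf{1}_{A'}/(1-h)$ is not legitimate as stated, since that $g$ need not lie in $L^2(\lambda)$; the standard fix, which you essentially anticipate in your closing remarks, is to plug in the truncated sums $g=\mathbf{1}_{A'}(1+h+\dots+h^n)$, which are bounded, and let $n\to\infty$ with monotone convergence on both sides, giving $\mu(A')=\int_{A'}h/(1-h)\,d\nu$ directly. Second, the paper's statement assumes only that $\nu$ is $\sigma$-finite, not $\mu$; your reduction step (``decompose using sets of finite $\mu$-measure'') silently requires $\mu$ to be $\sigma$-finite as well. For a completely general $\mu\ll\nu$ one must either allow $f$ to take the value $+\infty$ (splitting $H$ into a part on which $\mu$ is $\sigma$-finite and a remainder on which every Borel subset has $\mu$-measure $0$ or $\infty$), or simply note that every application in this paper involves probability measures, in which case your argument applies verbatim.
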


\begin{remark}
In the paper, we will sometimes refer to $f$ as the density of $\mu$ with respect to $\nu$.
\end{remark}

We are finally able to state the Feldman-Hajek theorem in its general form.

\begin{theorem}
The following statements hold.
\begin{enumerate}
    \item Gaussian measures $\mu = \mathcal{N}(m_1,C_1)$, $\nu = \mathcal{N}(m_2,C_2)$ are either singular or equivalent.
    \item They are equivalent if and only if the following conditions hold:
    \begin{itemize}
        \item[(i)] $\nu$ and $\mu$ have the same Cameron-Martin space $H_0=C_1^{1/2}(H)=C_2^{1/2} (H)$.
        \item[(ii)] $m_1-m_2 \in H_0$.
        \item[(iii)] The operator $(C_1^{-1/2} C_2^{1/2}) (C_1^{-1/2} C_2^{1/2})^* - I$ is a Hilbert-Schmidt operator on the closure $\overline{H_0}$.
    \end{itemize}
\item If $\mu$ and $\nu$ are equivalent and $C_1=C_2=C$, then $\nu$-a.s. the Radon-Nikodym derivative $d\mu/d\nu$ is given by
\[
\frac{d\mu}{d\nu}(h) =e^{\Psi(h)},
\]
where $\Psi(h) = \langle C^{-1/2}(m_1-m_2), C^{-1/2} (h-m_2)\rangle - \frac{1}{2}
\|C^{-1/2}(m_1 - m_2)\|_H^2 \forall h \in H$.
\end{enumerate}
\end{theorem}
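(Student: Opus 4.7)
The plan is to prove this classical Feldman–Hájek result by reducing it, via simultaneous spectral decomposition and the Cameron–Martin shift, to an infinite product of one-dimensional Gaussian comparisons and then invoking Kakutani's dichotomy theorem for product measures. First I would handle the dichotomy in Part 1 by observing that if two Gaussian measures $\mu$ and $\nu$ are on a separable Hilbert space, one can build a countable family of real-valued independent Gaussian coordinates that generate the Borel $\sigma$-algebra under each, so that $\mu$ and $\nu$ become infinite product measures on $\mathbb{R}^{\mathbb{N}}$. Kakutani's theorem then asserts that two infinite products of equivalent probability measures on a sequence space are either equivalent or mutually singular, according to whether the infinite product of Hellinger affinities is positive or zero. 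This immediately yields the 0–1 dichotomy.

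Next I would address the necessity of the three conditions in Part 2. For (i), I would use the standard fact that the Cameron–Martin space $H_0(\mu)$ is measurable and satisfies $\mu(H_0(\mu))=0$ in the infinite-dimensional case while being the unique set of admissible shifts; so if $H_0(\mu)\neq H_0(\nu)$ one can exhibit a translation-type set witnessing singularity. For (ii), the Cameron–Martin shift theorem characterizes precisely when the translate of a centered Gaussian is equivalent to itself, giving the $m_1-m_2\in H_0$ criterion and reducing everything to the centered case. For (iii), after centering I would diagonalize in a common orthonormal basis of eigenvectors of the operator $T:=C_1^{-1/2}C_2 C_1^{-1/2}$ (on $\overline{H_0}$), writing $\mu$ and $\nu$ as a product of one-dimensional centered Gaussians with variances $1$ and $\tau_j$, the eigenvalues of $T$. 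A direct one-dimensional calculation gives the Hellinger affinity $\rho_j = (2\sqrt{\tau_j}/(1+\tau_j))^{1/2}$, and the Kakutani criterion $\prod_j \rho_j>0$ is equivalent to $\sum_j (\tau_j-1)^2<\infty$, i.e.\ to $T-I$ being Hilbert–Schmidt; this is exactly condition (iii). Sufficiency follows by running the same argument backwards: assuming (i)–(iii), the Hellinger product is positive and Kakutani yields equivalence.

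For Part 3, with $C_1=C_2=C$, I would work in an orthonormal eigenbasis $(e_j)$ of $C$ with eigenvalues $\lambda_j$. Writing $h_j=\langle h,e_j\rangle$ and $a_j=\langle m_1-m_2,e_j\rangle$, the one-dimensional Cameron–Martin formula gives the finite-mode Radon–Nikodym derivative
\begin{equation}
\prod_{j\le N}\exp\!\left(\frac{a_j(h_j-m_{2,j})}{\lambda_j}-\frac{a_j^2}{2\lambda_j}\right).
\end{equation}
The condition $m_1-m_2\in H_0=C^{1/2}(H)$ says exactly that $\sum_j a_j^2/\lambda_j<\infty$, which makes the quadratic sum converge absolutely and the linear sum converge in $L^2(\nu)$. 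Passing to the limit and rewriting the exponent as $\langle C^{-1/2}(m_1-m_2), C^{-1/2}(h-m_2)\rangle - \tfrac12\|C^{-1/2}(m_1-m_2)\|_H^2$ (where $C^{-1/2}(m_1-m_2)$ is a genuine element of $H$ by (ii)) gives the claimed density.

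The main obstacle I anticipate is justifying the reduction to a single common eigenbasis in Part 2(iii). When $C_1$ is not invertible on all of $H$, the operator $C_1^{-1/2}C_2^{1/2}$ must be interpreted on the Cameron–Martin space $\overline{H_0}$, and one must check that condition (i) makes $T=C_1^{-1/2}C_2 C_1^{-1/2}$ a well-defined bounded positive self-adjoint operator there, whose spectral decomposition is what one actually diagonalizes. Once this functional-analytic setup is clean, translating the Kakutani condition $\sum_j (\sqrt{\tau_j}-1/\sqrt{\tau_j})^2<\infty$ into the Hilbert–Schmidt statement on $(C_1^{-1/2}C_2^{1/2})(C_1^{-1/2}C_2^{1/2})^*-I$ is a direct computation using the eigenvalues of $T$.
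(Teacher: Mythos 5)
The paper itself contains no proof of this statement: it is the classical Feldman--H\'ajek theorem, quoted verbatim in Appendix~A as background material with references to the literature, so your attempt can only be measured against the standard proof, which is indeed the Kakutani-plus-diagonalization route you sketch. The parts of your proposal that work are Part~3 and the \emph{sufficiency} half of Part~2: your one-dimensional Hellinger affinity $\rho_j=(2\sqrt{\tau_j}/(1+\tau_j))^{1/2}$ is correct, the translation of Kakutani's criterion into $\sum_j(\tau_j-1)^2<\infty$ is correct, and the coordinatewise Cameron--Martin limit in Part~3 (with the linear term interpreted as an $L^2(\nu)$ limit, since $C^{-1/2}(h-m_2)$ does not exist pointwise) is the standard martingale argument.

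The genuine gap is in Part~1 and in the \emph{necessity} half of Part~2, where your argument is circular. Coordinates that are independent under $\mu$ must be orthogonal in the $C_1$-inner product, and independent under $\nu$ orthogonal in the $C_2$-inner product; a common countable system of such coordinates --- the ``infinite product'' representation you feed into Kakutani --- exists precisely when $T=C_1^{-1/2}C_2C_1^{-1/2}$ is well defined (this already needs condition (i)) \emph{and} admits an orthonormal eigenbasis. A bounded positive self-adjoint $T$ has an eigenbasis only when, e.g., $T-I$ is compact, which is essentially condition (iii) --- the very thing being proved. So the dichotomy does not follow ``immediately'': when (i) fails, or when $T-I$ is bounded but has continuous spectrum, there is no common product structure and Kakutani simply does not apply; singularity in those cases must be established by separate arguments, classically either a strong-law-of-large-numbers construction of a distinguishing event (pick functionals $\ell_k$ that are i.i.d.\ standardized under $\mu$ but have different variance behavior under $\nu$, and use the set where $\frac{1}{n}\sum_{k\le n}\ell_k(x)^2\to 1$), or by showing the Hellinger integral vanishes via finite-dimensional projections, using that the Hellinger affinity only decreases as the $\sigma$-algebra is refined. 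The same defect appears in your necessity argument for (iii): you propose to ``diagonalize in a common orthonormal basis of eigenvectors of $T$,'' but before (iii) is known, $T$ need not have any eigenbasis, and the necessity direction must instead run through finite-dimensional compressions of $T$ and the monotonicity of the Hellinger integral. Your closing paragraph flags this setup as a technical obstacle, but it is not a technicality: it is exactly the divide between the easy (sufficiency) half and the hard (dichotomy and necessity) half of the theorem, and as written your proposal only proves the easy half.
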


\subsection{Bayes' theorem for inverse problems}
Let $H$ and $K$ be separable Hilbert spaces, equipped with the Borel $\sigma$-algebra, and $A: H \to K$ a measurable mapping. We want to solve the inverse problem of finding $X$ from $Y$, where
\[
Y = A (X) + B
\]
and $B \in K$ denotes the noise. We adopt a Bayesian approach to this problem. We let $(X,Y) \in H\times K$ be a random variable and compute $X|Y$. We first specify $(X,Y)$ as follows:\\
1) Prior: $X\sim \mu_0$ measure on $H$.\\
2) Noise: $B \sim \eta_0$ measure on $K$, with $B$ independent from $X$.

The random variable $Y|X$ is then distributed according to the measure $\eta_x$, the translate of $\eta_0$ by $A(X)$. We assume that $\eta_x \ll \eta_0$. Thus for some potential $\Psi: H\times K \to \mathbb{R}$,
\[
\frac{d \eta_x}{d\eta_0} (y) = e^{-\Psi(x,y)}.
\]
The potential $\Psi(\cdot, y)$ satisfying the above formula is often termed the negative log likelihood of the problem.
Now define $\nu_0$ to be the product measure $\nu_0 = \mu_0 \times \eta_0$.
We can finally state the following infinite-dimensional analogue of the Bayes' theorem.

\begin{theorem}
Assume that $\Psi:H \times K \to \mathbb{R}$ is $\nu_0$-measurable and define
\[
Z(y) = \int e^{-\Psi(x,y)} d\mu_0.
\]
Then
\[
\frac{d\mu_0 (\cdot|Y=y)}{d\mu_0}(x) = \frac{1}{Z(y)}e^{-\Psi(x,y)},
\]
where $\mu_0 (\cdot|Y=y)$ is the conditional distribution of $X$ given $Y=y$.
\end{theorem}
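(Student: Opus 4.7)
The plan is to identify the joint law of $(X,Y)$ on $H \times K$, read off the marginal law of $Y$ from it, and then verify that the candidate posterior $\mu^y(dx) := Z(y)^{-1} e^{-\Psi(x,y)}\,\mu_0(dx)$ satisfies the disintegration identity that characterizes the regular conditional distribution $\mu_0(\cdot \mid Y=y)$. Since $H$ and $K$ are separable Hilbert spaces, hence Polish, a regular conditional distribution exists and is essentially unique, so exhibiting one candidate and checking the defining property suffices.

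First I would compute the joint law $\mathbb{P}_{(X,Y)}$. By construction the conditional law of $Y$ given $X=x$ is $\eta_x$, with $d\eta_x/d\eta_0(y) = e^{-\Psi(x,y)}$. Hence, for any bounded measurable $f:H\times K \to \mathbb{R}$,
\[
\mathbb{E}[f(X,Y)] = \int_H \int_K f(x,y)\, d\eta_x(y)\, d\mu_0(x) = \int_H \int_K f(x,y)\, e^{-\Psi(x,y)}\, d\eta_0(y)\, d\mu_0(x),
\]
where the $\nu_0$-measurability of $\Psi$ and Fubini on the product Polish space let us read $e^{-\Psi(x,y)}$ as the Radon--Nikodym derivative of $\mathbb{P}_{(X,Y)}$ with respect to $\nu_0 = \mu_0 \otimes \eta_0$. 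Specializing $f(x,y) = g(y)$ gives $\mathbb{E}[g(Y)] = \int_K g(y)\, Z(y)\, d\eta_0(y)$, so $\mathbb{P}_Y$ has density $Z(y)$ with respect to $\eta_0$. On the set where $0 < Z(y) < \infty$---a set of full $\mathbb{P}_Y$-measure under the implicit requirement that $Z$ can serve as a normalizer---the measure $\mu^y$ is a bona fide probability measure on $H$.

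Finally I would verify the disintegration property. For bounded measurable $h:H\to\mathbb{R}$ and $g:K\to\mathbb{R}$, using the marginal $\mathbb{P}_Y(dy) = Z(y)\, d\eta_0(y)$ and the definition of $\mu^y$,
\[
\int_K g(y) \left( \int_H h(x)\, \mu^y(dx) \right) \mathbb{P}_Y(dy) = \int_K \int_H g(y) h(x)\, \frac{e^{-\Psi(x,y)}}{Z(y)}\, d\mu_0(x)\, Z(y)\, d\eta_0(y).
\]
After the $Z(y)$ factors cancel, Fubini rewrites the right-hand side as $\int_{H\times K} h(x) g(y) e^{-\Psi(x,y)}\, d\nu_0 = \mathbb{E}[h(X) g(Y)]$ by the joint-density computation of the first step. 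Since this equality holds for all such $h,g$, the family $\{\mu^y\}_{y\in K}$ is a version of $\mu_0(\cdot\mid Y=y)$, and essential uniqueness of the regular conditional distribution then yields the claimed Radon--Nikodym formula. The main obstacle here is not conceptual but bookkeeping in infinite dimensions: one must use the $\nu_0$-measurability hypothesis on $\Psi$ to legitimize the two Fubini exchanges on the Polish product, and one must ensure that $Z(y)$ is finite and strictly positive on a set of full $\mathbb{P}_Y$-measure so that $\mu^y$ is well defined. Both are standard but must be stated carefully because analogous pathologies are exactly what make infinite-dimensional Bayesian inverse problems delicate.
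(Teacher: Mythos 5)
Your proof is correct. Note that the paper itself does not prove this statement: it appears as background material in Appendix A, quoted from the Bayesian inverse-problems literature (Stuart's framework), and your argument---joint density $e^{-\Psi}$ with respect to $\nu_0$, marginal density $Z$ with respect to $\eta_0$, verification of the disintegration identity, and essential uniqueness of regular conditional distributions on Polish spaces---is precisely the standard proof of that result, so there is nothing to contrast at the level of strategy. One small improvement: the condition $0 < Z(y) < \infty$ for $\mathbb{P}_Y$-almost every $y$ need not be flagged as an ``implicit requirement''; it is a consequence of your own second step. Indeed, Tonelli gives $\int_K Z(y)\, d\eta_0(y) = \int_H \eta_x(K)\, d\mu_0(x) = 1$, so $Z \in L^1(\eta_0)$ and hence $Z < \infty$ holds $\eta_0$-a.e., thus $\mathbb{P}_Y$-a.e.\ since $\mathbb{P}_Y \ll \eta_0$; moreover $\mathbb{P}_Y(\{Z = 0\}) = \int_{\{Z=0\}} Z\, d\eta_0 = 0$, so $Z > 0$ holds $\mathbb{P}_Y$-a.e. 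With that observation made explicit (together with the routine remark that $y \mapsto \mu^y(A)$ is measurable by Fubini, so that $\{\mu^y\}$ is a genuine Markov kernel), your argument is complete and yields the claimed Radon--Nikodym formula.
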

\section{Proofs of Section 4}

\subsection{Proofs of Lemma \ref{lem:gauss1} and Proposition \ref{prop:Sbound}}\label{l2p1}

We assume that   $C_\mu$ in (\ref{eq:mumu0})  and
$C$ in
(\ref{eq:forward-sde})
have the same basis of eigenfunctions $(v_j)$ and that
$C_\mu v_j=\mu_j v_j, C v_j=\lambda_j v_j$ $\forall j$.
We define $X^{(j)}_t =\langle X_t,v_j \rangle$,
$y^{(j)} =\langle y,v_j \rangle$
and
$S^{(j)}(t,x,y) = \langle S(t,x,y),v_j \rangle$ so that
in (\ref{prop:Sbound}) $S(t,x,y) = \sum_j S^{(j)}(t,x,y) v_j$.
We assume $j \in {\cal I}^{(n)}$ so that we consider a mode
corresponding to an observation.
We then have
\ban
dX^{(j)}_t = -\frac{1}{2}  X^{(j)}_t dt + \sqrt{\lambda^{(j)}} dW^{(j)}_t  ,
\ean
with $W^{(j)}$ standard Brownian motions which are  independent
for the different modes $j$.  Note also that with $C_\mu$ and $C$ having the
same basis of eigenfunctions the system of modes is diagonalized so that
the $X^{(j)}_t$ processes are independent with respect to mode $j$,
both for the observed and un-observed modes.
Thus we have
\ban
X_0^{(j)} = \sqrt{\mu_j} \eta^{(j)}_0,  \quad Y^{(j)}=X_0^{(j)}+ \sigma_B\eta^{(j)}_1,
\quad X^{(j)}_t = X_0^{(j)}e^{-t/2}+\sqrt{\lambda_j (1-e^{-t})} \eta_2^{(j)},
\ean
for $\eta_i^{(j)}$ independent standard Gaussian random variables.
We then seek
\[
x_0^{(j,y)}=\EE[X^{(j)}_0\mid X_t=x,Y=y]=\EE[X^{(j)}_0
\mid X^{(j)}_t=x^{(j)}, Y^{(j)}=y^{(j)}]  ,
\]
which in this Gaussian setting is the $L^2$ projection
of $X_0^{(j)}$ onto $X^{(j)}_t$ and $Y^{(j)}$. Thus we can write
$x_0^{(j,y)}=ax^{(j)}+by^{(j)}$
with  $(a,b)$ solving
\ban
\EE[(aX^{(j)}_t + bY^{(j)}-X^{(j)}_0)Y^{(j)}]=0, \quad \EE[(aX^{(j)}_t
+ bY^{(j)}-X^{(j)}_0)X^{(j)}_t]=0 ,
\ean
which gives
\ban
a=\frac{e^{t/2}}{1+(e^t-1)p^{(j)}(1+q^{(j)})}, \qquad
b=\frac{p^{(j)}q^{(j)} (e^{t}-1)}{1+(e^t-1)p^{(j)}(1+q^{(j)})} ,
\ean
for $p^{(j)}=\lambda_j/\mu_j, q^{(j)}=\mu_j/\sigma_B^2$.

We then get in view of (\ref{eq:conditional-score-infinite})
\ba\label{eq:Sder}
S^{(j)}(t,x,y) =
  - \left( \frac{e^{t} p^{(j)}  (1+q^{(j)} )}{1+ (e^{t}-1)p^{(j)} (1+q^{(j)} )}
  \right) x^{(j)}  +
  \left(\frac{e^{t/2} p^{(j)} q^{(j)} }{1+ (e^{t}-1)p^{(j)} (1+q^{(j)} )} \right)
  y^{(j)} .
\ea
Note that with some abuse of notation we then have
\ban
S^{(j)}(t,x,y) = S^{(j)}(t,x^{(j)},y^{(j)}),
\ean
which is important since then also the time reversed system diagonalizes.
We remark that for an unobserved mode we get by a similar, but easier,
calculation
\ban
S^{(j)}(t,x,y) =  - \left( \frac{e^{t} p^{(j)}  }{1+ (e^{t}-1)p^{(j)}  }
  \right) x^{(j)},
\ean
which simply corresponds to setting $\sigma_B=\infty$ in (\ref{eq:Sder}).

Consider next $\EE[S^{(j)}(t,x,y)^2]$.  Note first that
\ban
\EE[X^{(j)}_t\mid Y=y]            &=& \left(\frac{q^{(j)} e^{-t/2}}{1+q^{(j)}}\right)  y^{(j)},   \\
  {\rm Var}[X^{(j)}_t \mid Y=y]   &=&  e^{-t}  {\rm Var}[X^{(j)}_t \mid Y=y] +
\lambda_j(1-e^{-t}) \\ &=&  (1+ (e^{t}-1)p^{(j)} (1+q^{(j)} )) \left( \frac{\mu_j e^{-t}}{1+q^{(j)}} \right) .
\ean
We can then easily check that the score is conditionally centered
$\EE[S^{(j)}(t,X_t,Y)\mid Y=y]=0$ and we then get
\begin{align*}
\EE[(S^{(j)}(t,X_t,Y))^2\mid Y=y] &= \left( \frac{e^{t} p^{(j)}  (1+q^{(j)} )}{1+ (e^{t}-1)p^{(j)} (1+q^{(j)} )}
  \right)^2
  {\rm Var}[X^{(j)}_t \mid Y=y] \\ & =
   \frac{e^{t} \mu_j (p^{(j)})^2(1+q^{(j)} )}{1+ (e^{t}-1)p^{(j)} (1+q^{(j)} )}
    ,
\end{align*}
which gives  Proposition \ref{prop:Sbound}
upon
summing over the mode index $j$, where we define  $q^{(j)}=0$ for
the unobserved modes.

\section{Proofs of Section 5}

\subsection{Discussion about an alternative approach}
\label{app:mill}
The following lemma is a complementary result related to Remark 1. It shows that we can actually derive the expression of the score from the results contained in \citet{millet1989time}.
The result is powerful,  but requires the verification of technical conditions.

\begin{lemma}
     Under the conditions stated in Proposition  \ref{lemma3} the score is defined by
	\ba
	S(x,y,t) =
	-(1-e^{-t})^{-1} \left(  x  - e^{-t/2}\EE\left[ X_0 | X_t=x, Y=y\right]\right)
	\ea
 and the time reversed diffusion takes the form in
 (\ref{eq:conditional-reverse-SDE}).
\end{lemma}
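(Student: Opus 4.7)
The plan is to invoke the general time-reversal theorem for Hilbert-space-valued diffusions due to \citet{millet1989time}, applied to the forward SDE (\ref{eq:forward-sde}) under the regular conditional law $\mathbb{P}(\,\cdot\,|Y=y)$. Their theorem states that if the marginal laws of the forward process are absolutely continuous with respect to a suitable reference Gaussian measure and admit a Fomin (logarithmic) derivative along the Cameron--Martin directions that is square-integrable uniformly in $t$, then the time-reversed process is again a diffusion on $H$ whose drift is obtained by flipping the sign of the forward drift and adding the covariance $C$ applied to this logarithmic derivative. In our setting, applying this to the conditioned diffusion $(X_t\mid Y=y)_{t\in[0,T]}$ produces exactly a reverse drift of the form $\tfrac{1}{2}Z_t + C\,\nabla \log p_t(\,\cdot\,|y)(Z_t)$ with noise $\sqrt{C}\,dW_t$.

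The first step is therefore to check the hypotheses of \citet{millet1989time}. Under the assumptions of Proposition~\ref{lemma3}, the conditional law of $X_0$ given $Y=y$ has Radon--Nikodym derivative $\propto \exp(-\Phi(\cdot,y))$ with respect to the Gaussian prior $\mu=\mathcal{N}(0,C_\mu)$, and $\Phi$ splits across the shared eigenbasis $(v_j)$ as $\Phi(x_0,y)=\sum_j \Phi^{(j)}(x_0^{(j)},y)$. Since the diffusion (\ref{eq:forward-sde}) also diagonalizes in $(v_j)$, the conditional law of $X_t$ given $Y=y$ factorizes as a product of one-dimensional laws in each mode, each of which is absolutely continuous with respect to the corresponding Gaussian marginal with density $\psi_t^{(j)}(\cdot,y)$ obtained by convolving $\psi^{(j)}$ with a one-dimensional Gaussian kernel. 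The uniform bounds $1/K\le\psi^{(j)}\le K$ and the Lipschitz estimate on $\psi^{(j)}$ carry over to $\psi_t^{(j)}$ and its derivative, and the uniform boundedness of $p^{(j)}(1+q^{(j)})$ together with the trace-class property of $C$ are exactly what makes the sum defining the Fomin derivative square-summable, uniformly in $t\in[0,T]$. This is essentially the same computation that underlies Proposition~\ref{lemma3} and Assumption~\ref{assumption1}.

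The second step is to identify the abstract logarithmic derivative term delivered by the Millet--Nualart--Sanz theorem with the right-hand side of the claimed score formula. Here we use the infinite-dimensional Tweedie identity: because the transition kernel $\mathcal{L}(X_t\mid X_0,Y=y)=\mathcal{N}(e^{-t/2}X_0,(1-e^{-t})C)$ is Gaussian, its logarithmic derivative along the Cameron--Martin space is $x\mapsto -(1-e^{-t})^{-1}C^{-1}(x-e^{-t/2}X_0)$, and applying $C$ and integrating against the posterior $\mathcal{L}(X_0\mid X_t=x,Y=y)$ (which is justified mode-by-mode under our assumptions and then assembled via dominated convergence using the trace-class property of $C$) yields
\begin{equation*}
C\,\nabla \log p_t(x|y) = -(1-e^{-t})^{-1}\bigl(x - e^{-t/2}\,\mathbb{E}[X_0\mid X_t=x,Y=y]\bigr).
\end{equation*}
Substituting into the reverse drift from Millet et al.\ gives (\ref{eq:conditional-reverse-SDE}) with $S$ as in Definition~\ref{def:conditional-score}.

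The main obstacle is the verification step: \citet{millet1989time} require more than just square-integrability of the logarithmic derivative, they require a certain technical condition on how this derivative behaves under the semigroup (essentially a finite-energy condition along the Cameron--Martin directions, uniformly in time). The modewise factorization afforded by the shared eigenbasis reduces this to a countable collection of one-dimensional estimates that follow from the uniform bounds on $\psi^{(j)}$, but keeping track of the uniformity in $j$ and $t$ and then summing the mode contributions against the trace of $C$ is the delicate part, exactly paralleling the control of $\mathbb{E}[\|S(t,X_t,y)\|_H^2\mid Y=y]$ in Proposition~\ref{lemma3}. Once this technical verification is complete, the identification of the score and of the reverse SDE follows directly.
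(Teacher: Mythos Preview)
Your overall strategy matches the paper's: invoke the Millet--Nualart--Sanz time-reversal result for the diagonalized system, verify their hypotheses using the mode-wise structure and the bounds on $\psi^{(j)}$, and then identify the resulting reverse drift with the Tweedie-type conditional expectation. However, your description of the hypotheses in \citet{millet1989time} is inaccurate, and this matters for how the verification actually goes. Their framework is not phrased in terms of absolute continuity with respect to a reference Gaussian and Fomin derivatives along Cameron--Martin directions (that is closer to F\"ollmer's setup). Millet--Nualart--Sanz work with diffusions on a countable product space and require: standard growth and Lipschitz conditions on the coordinate coefficients (their (H1), (H4)); that each coefficient depends on finitely many coordinates ((H2)); a structural condition ((H5), here trivial since the system is time-independent and diagonal); and, crucially, that for each $j$ and each $t_0>0$ the coordinate $X_t^{(j)}$ given the remaining coordinates $\check X_t^{(j)}$ admits a conditional density $p_t$ with $\int_{t_0}^T \EE[\,|\partial_{x^{(j)}}\log p_t|\mid Y=y\,]\,dt<\infty$. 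The paper verifies exactly these coordinate-wise conditions. In particular, the integrability requirement is only on $[t_0,T]$ for $t_0>0$, so the uniform-in-time square-integrability you invoke (Assumption~\ref{assumption1} / the full strength of Proposition~\ref{lemma3}) is not what is needed at this stage.

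The identification step is also organized differently. Once Millet et al.\ deliver the mode-wise score $\langle v_j, S\rangle = \lambda_j\,\partial_{x^{(j)}}\log p_t(x^{(j)}\mid \check X_t^{(j)},Y=y)$, the paper writes this conditional density as the mixture $\int p_t(x^{(j)}\mid X_0^{(j)}=x_0^{(j)})\,d\mu_0(x_0^{(j)}\mid \check X_t^{(j)},Y=y)$, differentiates the one-dimensional Gaussian transition kernel, and reads off the posterior expectation directly. Your ``apply $C$ and integrate against the posterior'' is morally the same calculation, but the paper stays at the level of one-dimensional densities throughout and never assembles an $H$-valued logarithmic gradient. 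So the plan is right, but to make it a proof you should replace the Fomin/Cameron--Martin framing with the coordinate-wise hypotheses actually stated in \citet{millet1989time}.
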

\begin{proof}
	Define
	\ban
	X^{(j)} = \langle v_j,X \rangle  \hbox{~for~} C v_j =\lambda_j v_j .
	\ean
	Then
	\ba\label{eq:inf}
	d X^{(j)} = -\frac{1}{2} X^{(j)} dt +  \sqrt{\lambda_j} dW^{(j)} \hbox{~for~}
	W^{(j)}=\langle v_j, W \rangle,
	\ea
 and where we assume that $C$ is of trace class. This is then
 an infinite dimensional system of the type considered in
 \citet{millet1989time}.
 We proceed to verify some conditions stated in \citet{millet1989time}:
 (i)  the coefficients of the system (\ref{eq:inf}) satisfy
 standard  growth  and Lipschitz continuity conditions
   (assumption $(H1,H4)$ satisfied); (ii) the coefficients depend
 on finitely many coordinates (assumption $(H2)$ satisfied);
 the system is time independent and diagonal (assumption $(H5)$ satisfied).
Moreover define  $\check{x}^{(j)}=(x_1,\ldots,x_{j-1},x_{j+1},\ldots)$,
then the law of $X_t^{(j)}$ given $\check{X}_t^{(j)}$
has for $t>0$  density $p_t(x^{(j)} | \check{X}_t^{(j)}=\check{x}^{(j)}, Y=y)$
with respect to Lebesgue measure and so that for $t_0>0$ and  each $j$:
$\int_{t_0}^T\EE[|\partial_{x^{(j)}} \log(p_t(x^{(j)})|  Y=y) dt < \infty $.
 Then  it follows from Theorems 3.1 and 4.3 in \citet{millet1989time}
 that  the time reversed problem is associated
 with the well-posed martingale problem
 defined by the coefficients in (\ref{eq:conditional-reverse-SDE})  for
 the score being:
	\ban
	\langle v_j, S(x,y,t)\rangle  &=&
	\frac{ \lambda_j \frac{\partial}{\partial x^{j}}
		\left( p_t( x^{(j)} | \check{X}_t^{(j)}=\check{x}^{(j)},Y=y)
		\right) }{p_t(x^{(j)} | \check{X}_t^{(j)}=\check{x}^{(j)}, Y=y)} ,
	\ean
 with the convention that the right hand side  is null on the set
$\{ p_t(x^{(j)} | \check{X}_t^{(j)}=\check{x}^{(j)}, Y=y) =0 \}$.

 It then follows for $t> 0$
\begin{align*}
& \langle v_j, S(x,y,t)\rangle  \\ &=   \int_\mathbb{R}
	d\mu_0(x_0^{(j)}| \check{X}_t^{(j)}=\check{x}^{(j)}, Y=y)
	\frac{ \lambda_j \frac{\partial}{\partial x^{j}}
		\left( p_t( x^{(j)} | \check{X}_t^{(j)}=\check{x}^{(j)},X_0^{(j)}=x_0^{(j)}, Y=y)
		\right) }{p_t(x^{(j)} | \check{X}_t^{(j)}=\check{x}^{(j)}, Y=y)}  .
\end{align*}
We then get
\begin{align*}
\langle v_j, S(x,y,t)\rangle  =
	  -  \int_\mathbb{R} & d\mu_0(x_0^{(j)}| \check{X}_t^{(j)}=\check{x}^{(j)}, Y=y)
	\\ & \times \left(\frac{x^{j}-e^{-t/2}x^{(j)}_0}{1-e^{-t}} \right)
	\frac{
		\left( p_t( x^{(j)} | \check{X}_t^{(j)}=\check{x}^{(j)},X_0^{(j)}=x_0^{(j)}, Y=y)
		\right) }{p_t(x^{(j)} | \check{X}_t^{(j)}=\check{x}^{(j)}, Y=y)} \\
	& \hspace*{-0.45in} =  -  \int_\mathbb{R} d\mu_0(x_0^{(j)}| {X}_t= x, Y=y)
	\left(\frac{x^{j}-e^{-t/2}x^{(j)}_0}{1-e^{-t}} \right) .
\end{align*}

\end{proof}

\subsection{A preliminary lemma}
The following lemma is the equivalent of \cite[Lemma 3]{pidstrigach2023infinitedimensional}.
It is used in the forthcoming proof of Proposition 2.
\begin{lemma}\label{lemma:mart}
	In the finite-dimensional setting $x\in \mathbb{R}^D$, we have for any $0 \leq s \leq t \leq T$:
	$$
	\nabla \log p_{t,y}(x_t) =  e^{(t-s)/2} \EE_y\big[ \nabla \log p_{s,y}(X_s) |X_t=x_t \big]  ,
	$$
	where $\EE_y$ is the expectation with respect to the distribution of $X_0$ and $W$ given $Y=y$ and $p_{t,y}$ is the pdf of $X_t$ under this distribution.
\end{lemma}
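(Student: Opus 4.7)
The strategy is to exploit the fact that, even after conditioning on $Y$, the forward diffusion retains a Markov transition kernel, and then to convert the Chapman--Kolmogorov identity into the claimed backward conditional expectation via an integration by parts in $x_s$.

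\textbf{Step 1 (Markov kernel under $Y$-conditioning).} For $0\le s\le t$, the solution of \eqref{eq:forward-sde} satisfies $X_t = e^{-(t-s)/2} X_s + \xi_{s,t}$, where $\xi_{s,t}$ is built from the Wiener increments on $[s,t]$ and is therefore independent of $(X_0,W|_{[0,s]},B)$, hence of $(X_s,Y)$ jointly. This gives $p(x_t\mid x_s,y)=p(x_t\mid x_s)$, which is the Gaussian density with mean $e^{-(t-s)/2} x_s$ and covariance $(1-e^{-(t-s)}) C$. I would state this once so that Bayes' rule applies cleanly at the end.

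\textbf{Step 2 (gradient identity for the Gaussian transition).} From the explicit Gaussian form of $p(x_t\mid x_s)$, a direct differentiation yields
\begin{equation*}
\nabla_{x_s}\log p(x_t\mid x_s) = -e^{(t-s)/2}\,\nabla_{x_t}\log p(x_t\mid x_s),
\qquad\text{equivalently}\qquad
\nabla_{x_t} p(x_t\mid x_s) = -e^{(t-s)/2}\,\nabla_{x_s} p(x_t\mid x_s).
\end{equation*}
This is the one place where the specific drift $-\tfrac{1}{2}X_t$ of the OU process enters.

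\textbf{Step 3 (differentiate Chapman--Kolmogorov and integrate by parts).} Starting from $p_{t,y}(x_t)=\int p(x_t\mid x_s)\,p_{s,y}(x_s)\,dx_s$, I would differentiate under the integral in $x_t$, substitute the identity of Step 2, and then integrate by parts in $x_s$ (boundary terms vanish by the Gaussian decay of the transition kernel in $x_s$):
\begin{equation*}
\nabla_{x_t} p_{t,y}(x_t) \;=\; -e^{(t-s)/2}\!\int \nabla_{x_s} p(x_t\mid x_s)\,p_{s,y}(x_s)\,dx_s \;=\; e^{(t-s)/2}\!\int p(x_t\mid x_s)\,\nabla_{x_s} p_{s,y}(x_s)\,dx_s.
\end{equation*}
Dividing by $p_{t,y}(x_t)$ and using Step 1 in the form $p(x_s\mid x_t,y)=p(x_t\mid x_s)\,p_{s,y}(x_s)/p_{t,y}(x_t)$ rewrites the right-hand side as
\begin{equation*}
\nabla_{x_t}\log p_{t,y}(x_t) \;=\; e^{(t-s)/2}\int \nabla_{x_s}\log p_{s,y}(x_s)\,p(x_s\mid x_t,y)\,dx_s \;=\; e^{(t-s)/2}\,\EE_y\!\left[\nabla\log p_{s,y}(X_s)\mid X_t=x_t\right],
\end{equation*}
which is the claim.

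The only delicate step is Step 1: one must argue that, despite $Y$ depending on $X_0$ and hence carrying information about the past of the trajectory, it brings no extra information about $X_t$ beyond what $X_s$ already provides. Everything else (differentiation under the integral sign, vanishing of boundary terms, Bayes' rule) is routine in the finite-dimensional setting where the transition kernel is a smooth Gaussian with light tails.
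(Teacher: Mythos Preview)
Your proposal is correct and follows essentially the same route as the paper: both use that the OU transition kernel is unchanged by conditioning on $Y$, exploit the identity $\nabla_{x_t} p(x_t\mid x_s)=-e^{(t-s)/2}\nabla_{x_s} p(x_t\mid x_s)$ for the Gaussian kernel, integrate by parts, and then recognize the resulting ratio as $p(x_s\mid x_t,y)$ via Bayes. Your Step~1 justification (independence of $\xi_{s,t}$ from $(X_s,Y)$) is in fact slightly more explicit than the paper's, which simply asserts $p_{t\mid s,y}=p_{t\mid s}$.
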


\begin{proof}
	We can write
	$$
	p_{t,y} (x_t) =\int_{\mathbb{R}^D} p_{s,y}(x_s) p_{t|s,y}(x_t|x_s) dx_s  ,
	$$
	where $p_{t|s,y}(\cdot|x_s) $ is the pdf of $X_t$ given $Y=y$ and $X_s=x_s$.
	It is, in fact, equal to the pdf of $X_t$ given $X_s=x_s$, which is the pdf of the multivariate Gaussian distribution with mean $\exp(-(t-s)/2) x_s$ and covariance $(1-\exp(-(t-s)))C$.
	Therefore
	$$
	p_{t,y} (x_t) =\int_{\mathbb{R}^D} p_{s,y}(x_s) p_{t|s}(x_t|x_s) dx_s .
	$$
	We can then deduce that
 \begin{align*}
     \nabla p_{t,y} (x_t) & = \frac{1}{(2\pi)^{D/2} (1-e^{-(t-s)})^{1/2} \det(C)^{1/2}}
     \int_{\mathbb{R}^D} dx_s  p_{s,y}(x_s)  \\
     &\hspace*{-0.5in}  \times  \nabla_{x_t} \exp \Big( - \frac{1}{2(1-e^{-(t-s)})} ( x_t  -\exp(-(t-s)/2) x_s)^T C^{-1} (x_t  -\exp(-(t-s)/2) x_s)\Big)  \\
     &= - \frac{e^{(t-s)/2}}{(2\pi)^{D/2} (1-e^{-(t-s)})^{1/2} \det(C)^{1/2}}
     \int_{\mathbb{R}^D} dx_s  p_{s,y}(x_s)   \\
     &\hspace*{-0.5in}   \times \nabla_{x_s} \exp \Big( - \frac{1}{2(1-e^{-(t-s)})} ( x_t  -\exp(-(t-s)/2) x_s)^T C^{-1} (x_t  -\exp(-(t-s)/2) x_s)\Big)    \\
     &=  \frac{e^{(t-s)/2}}{(2\pi)^{D/2} (1-e^{-(t-s)})^{1/2} \det(C)^{1/2}}
     \int_{\mathbb{R}^D} dx_s \nabla_{x_s} \big(  p_{s,y}(x_s)  \big)\\
     &\hspace*{-0.5in}   \times \exp \Big( - \frac{1}{2(1-e^{-(t-s)})} ( x_t  -\exp(-(t-s)/2) x_s)^T C^{-1} (x_t  -\exp(-(t-s)/2) x_s)\Big)  \\
     &=   e^{(t-s)/2}
     \int_{\mathbb{R}^D} dx_s \nabla_{x_s} \big(  p_{s,y}(x_s)  \big)  p_{t|s}(x_t|x_s),
 \end{align*}
 which gives
	$$
	\nabla p_{t,y}(x_t) = e^{(t-s)/2} \int_{\mathbb{R}^D} p_{t|s}(x_t|x_s) p_{s,y} (x_s) \nabla \log p_{s,y}(x_s) dx_s  .
	$$
	Using again that $p_{t|s,y}(\cdot|x_s)=p_{t|s}(\cdot|x_s) $ and $p_{t|s,y}(x_t|x_s) = \frac{p_{(s,t),y}(x_s,x_t)}{p_{s,y}(x_s)}$, we get
	\begin{align*}
	\nabla  p_{t,y}(x_t) & = e^{(t-s)/2} \int_{\mathbb{R}^D} p_{t|s,y}(x_t|x_s) p_{s,y} (x_s) \nabla \log p_{s,y}(x_s) dx_s \\
	&=
	e^{(t-s)/2} \int_{\mathbb{R}^D} p_{(s,t),y}(x_s,x_t)  \nabla \log p_{s,y}(x_s) dx_s .
	\end{align*}
	Since $\nabla \log p_{t,y}(x_t) = \frac{\nabla p_{t,y}(x_t)}{p_{t,y}(x_t)}$ and $p_{s|t,y}(x_s|x_t) = \frac{p_{(s,t),y}(x_s,x_t)}{p_{t,y}(x_t)}$ we get that
	\begin{align*}
	\nabla \log p_{t,y}(x_t) &= e^{(t-s)/2} \int_{\mathbb{R}^D} p_{s|t,y}(x_s|x_t)   \nabla \log p_{s,y}(x_s) dx_s
	\\
 &=
	e^{(t-s)/2} \EE_y \big[  \nabla \log p_{s,y}(X_s)|X_t=x_t].
	\end{align*}
\end{proof}

	\subsection{Proof of Proposition 2}
The proof adapts the one of \cite{pidstrigach2023infinitedimensional} to the conditional setting.
The only difference is that the expectation is $\EE_y$, which affects the distribution of $X_0$ but not the one of $W$. Moreover, Lemma \ref{lemma:mart} shows that the key to the proof (the reverse-time martingale property of the finite-dimensional score) is still valid.
Here $\EE_y$ is the expectation with respect to the distribution of $X_0$ and $W$ given $Y=y$.

To prove Proposition 2, we are left to show that the solution of the reverse-time SDE
	\begin{equation}
	dZ_t = \frac{1}{2} Z_t dt + S(T-t,Z_t,y)dt + \sqrt{C}W_t, \quad Z_0 \sim X_T|Y=y
 \label{eq:appendix-reverse-sde}
	\end{equation}
	satisfies $Z_T\sim X_0|Y=y$. We recall that $X_t$ is the solution to the SDE
	\[
	dX_t = - \frac{1}{2} X_t dt + \sqrt{C}dW_t, \quad X_0 \sim \mu_0.
	\]
	We first notice that $X_t$ is given by the following stochastic convolution:
	\[
	X_t = e^{-t/2}X_0 + \int_0^t e^{-(t-s)/2} \sqrt{C}dW_s.
	\]
	For $P^{D}$ the orthogonal projection on the subspace of $H$ spanned by $v_1, \ldots, v_D$ (the eigenfunctions of $C$), $X_t^{D}=P^{D}(X_t)$ are solutions to
	\[
	dX_t^{D} = -\frac{1}{2}X_t^D dt + \sqrt{(C^{D})} dW_t^{D},
	\]
	where
	\[
	C^D = P^DCP^D, \quad W_t^D = P^D W_t.
	\]
	We define $X_t^{D:M}=X_t^M - X_t^D$.
	Then
	\[
	X_t^{D:M} = e^{-t/2}X_0^{D:M} + \int_0^t e^{-(t-s)/2} \sqrt{(C^{D:M})} dW_s^{D:M},
	\]
	where the superscript ${D:M}$ indicates the projection onto span$\{v_{D+1},\ldots, v_M\}$. It holds that
	\[
	\mathbb{E}_y \big[\sup_{t \leq T} \|X_t^{D:M}\|^2_H \big] \leq 2 e^{-t} \mathbb{E}_y [\|X_0^{D:\infty}\|_H^2 ] +  2(1-e^{-t})\sum_{i=D+1}^\infty \lambda_i \to 0
	\]
	as $D \to \infty$, where we used Doob's $L^2$ inequality to bound the stochastic integral. Therefore $(X_t^N)$ is a Cauchy sequence and converges to $X_t$ in $L^2(\mathbb{P}_y)$.
	Consequently, the distribution of $X_t^N$ given $Y=y$ converges to the distribution of $X_t$ given $Y=y$ as $N \to +\infty$.

Recall that
	\[
	S(t,X_t,y) = - (1-e^{-t})^{-1} \mathbb{E}_y[X_t - e^{-t/2}X_0 \mid X_t ],
	\]
and recall that
\[
C^D \nabla \log p_{t,y}^D (X_t^{1:D}) = - (1-e^{-t})^{-1} P^D \mathbb{E}_y[X_t - e^{-t/2}X_0\mid   X_t^{1:D} ].
\]
In particular, due to the tower property of the conditional expectations,
\[
C^D \nabla \log p_{t,y}^D (X_t^{1:D}) = \mathbb{E}_y[S(t,X_t,y) \mid  X_t^{1:D}].
\]
Since, by Assumption \ref{assumption1},
\[
\mathbb{E}_y[\|S(t,X_t,y)\|_H^2] < \infty,
\]
the quantities $\mathbb{E}_y [X_t - e^{-t/2} X_0 \mid  X_{t}^{1:D}]$ are bounded in $L^2(\mathbb{P}_y)$ and will converge to the limit, $\mathbb{E}_y[S(t,X_t,y) \mid X_t] = S(t,X_t,y)$, by the Martingale convergence theorem. We get rid of the projection $P^D$ by
\begin{align*}
& \quad \; (1-e^{-t}) \mathbb{E}_y [\| C^D \nabla \log p_{t,y}^D(X_t^{1:D}) - S(t,X_t,y)\|_H^2] \\
& = \mathbb{E}_y [\|P^D \mathbb{E}_y [X_t-e^{-t/2} X_0\mid X_t^{1:D}] - \mathbb{E}_y [X_t-e^{-t/2} X_0\mid X_t]\|_H^2 ] \\
& \leq \mathbb{E}_y [\| \mathbb{E}_y [X_t-e^{-t/2} X_0\mid X_t^{1:D}] - \mathbb{E}_y [X_t-e^{-t/2} X_0\mid X_t]\|_H^2 ] \\
& \quad + \mathbb{E}_y [\| (I-P^D)\mathbb{E}_y[X_t-e^{-t/2}X_0\mid X_t^{1:D}]\|_H^2 ] \\
& \leq \mathbb{E}_y [\| \mathbb{E} [X_t-e^{-t/2} X_0\mid X_t^{1:D}] - \mathbb{E}_y [X_t-e^{-t/2} X_0\mid X_t]\|_H^2 ] \\
& \quad + \mathbb{E}_y [\| (I-P^D)(X_t-e^{-t/2}X_0)\|_H^2] .
\end{align*}
The first term vanishes due to our previous discussion. The second term vanishes since
\begin{align*}
\mathbb{E}_y [\| (I-P^D)(X_t-e^{-t/2}X_0)\|_H^2] &=  \mathbb{E}_y [ \| (I-P^D) \int_0^t e^{-(t-s)/2} \sqrt{C}dW_s \|_H^2] \\
& = \mathbb{E} [ \| (I-P^D) \int_0^t e^{-(t-s)/2} \sqrt{C}dW_s \|_H^2] \\
& = (1-e^{-t}) \sum_{i=D+1}^\infty \lambda_j \to 0
\end{align*}
as $D \to \infty$.

We now make use of the fact that $\nabla \log p_{t,y}^D$ is a square-integrable Martingale in the reverse-time direction by Lemma \ref{lemma:mart}. We therefore get a sequence of continuous $L^2$-bounded Martingales converging to a stochastic process. Since the space of continuous $L^2$-bounded martingale is closed and pointwise convergence translates to uniform convergence, we get that $S$ is a $L^2$-bounded martingale, with the convergence of of $C^D \nabla \log p_{t,y}^D$ to $S$ being uniform in time.

We have that
\[
Z_t^D - Z_0^D - \frac{1}{2} \int_0^t Z_s ds - \int_0^t C^D \nabla \log p_{s,y}^D(Z_s) = \sqrt{C}W_t^D.
\]
Since all the terms on the left-hand side converge in $L^2$, uniformly in $t$, so does the right-hand side. Using again the closedness of the spaces of Martingales and Levy's characterization of Wiener process, we find that $\sqrt{C}W_t^D$ converges to $\sqrt{C}W_t$. Therefore
\[
Z_t= Z_0 + \frac{1}{2}\int_0^t Z_s ds + \int_0^t S(t,Z_t,y) + \sqrt{C}W_t^D.
\]
Therefore, $Z_t$ is indeed a solution to \eqref{eq:appendix-reverse-sde} and $Z_T \sim X_0|Y=y$. Using uniqueness of the solution we then conclude that this holds for any solution $Z_t$.

\subsection{Proof of (\ref{eq:bound2})}
\begin{align*}
	\mathbb{E} [ \| S(t,X_t,y) \|_H^2 |Y=y]
	&=(1-e^{-t})^{-2}   \mathbb{E} [ \|  \mathbb{E}[  X_t  - e^{-t/2} X_0 | Y=y,X_t ] \|_H^2 | Y=y] \\
	&=(1-e^{-t})^{-2}   \mathbb{E} [ \|  \mathbb{E}[  \int_0^t e^{-(t-s)/2 } \sqrt{C} dW_s | Y=y,X_t ] \|_H^2| Y=y ] \\
	&\leq (1-e^{-t})^{-2}   \mathbb{E} [ \mathbb{E}[ \| \int_0^t e^{-(t-s)/2 } \sqrt{C} dW_s  \|_H^2| Y=y,X_t ] | Y=y] \\
	&=(1-e^{-t})^{-2}   \mathbb{E} [  \| \int_0^t e^{-(t-s)/2 } \sqrt{C} dW_s  \|^2_H |Y=y] \\
	&=(1-e^{-t})^{-2}   \mathbb{E} [  \| \int_0^t e^{-(t-s)/2 } \sqrt{C} dW_s  \|^2_H ] \\
	& = (1-e^{-t})^{-1} {\rm Tr}(C) .
\end{align*}

\subsection{Proof of Proposition \ref{lemma3}}
 Note that with the assumptions in Proposition  \ref{lemma3} with $C_\mu$
 and $C$ having the same basis of eigenfunctions and the separability
 assumption on the Radon-Nikodym derivative for the  modes,
 the system for the modes again diagonalizes. However, in this case
 the (conditional) distribution for $X^{(j)}_0$ is non-Gaussian in general
 and the change of measure with respect to the Gaussian measure characterized by
 $\psi^{(j)}(x^{(j)},y)$.
 We  let the  superscript $g$ denote the Gaussian case with $\psi\equiv 1$, then we have:
	\begin{align*}
	S^{(j)}&(t,x,y)  \\&=  -(1-e^{-t})^{-1} \left( x^{(j)} -
 e^{-t/2} \EE[ X_0^{(j)}| X_t=x,Y=y] \right) \\
	& =    -(1-e^{-t})^{-1} \left( x^{(j)} - e^{-t/2} \int_{\mathbb R}  x_0
	   \left(  \frac{  \mu^{(j,g)}_{x_0,x^{(j)}_t\mid y}(x_0,x^{(j)}) \psi^{(j)}(x_0,y) }{\mu^{(j)}_{x^{(j)}_t\mid y}(x^{(j)}) }   \right)   dx_0
	 \right) \\
	 & =      (1-e^{-t})^{-1} \left(   e^{-t/2} \int_{\mathbb R}  x_0
	   \left(  \frac{  \mu_{x_0,x^{(j)}_t\mid y}^{(j,g)}(x_0+x^{(j)}e^{t/2},x^{(j)}) \psi^{(j)}(x_0+x^{(j)} e^{t/2},y) }{\mu^{(j)}_{x^{(j)}_t\mid y}(x^{(j)}) }   \right)   dx_0
	 \right) \\
	 & =    S^{(j,g)}(t,x,y) T^{(j)}(t,x^{(j)},y) + \tilde{R}^{(j)}(t,x^{(j)},y)  ,
 	\end{align*}
	for  $S^{(j,g)}$ the mode score in the Gaussian case given in
 (\ref{eq:Sder})
 and with
	\ban
	&& \hspace*{-0.7cm} T^{(j)}(t,x^{(j)},y)  = \left( \frac{\mu^{(j,g)}_{x^{(j)}_t\mid y}(x^{(j)}) \psi^{(j)}(x^{(j)} e^{t/2},y)}{\mu^{(j)}_{x^{(j)}_t\mid y}(x^{(j)})} \right)  , \\
&& 	\hspace*{-0.7cm}\tilde{R}^{(j)}(t,x^{(j)},y) = (1-e^{-t})^{-1} e^{-t/2} \\ &&
\hspace*{-0.7cm}\times  \int_{\mathbb R}  x_0
	   \left(  \frac{  \mu_{x_0,x^{(j)}_t\mid y}^{(j,g)}(x_0+x^{(j)}e^{t/2},x^{(j)})
	   \left( \psi^{(j)}(x_0+x^{(j)} e^{t/2},y) -
    \psi^{(j)}(x^{(j)} e^{t/2},y) \right)   }{\mu_{x_t^{(j)}
    \mid y}(x^{(j)}) }   \right)   dx_0
	 .
	\ean
	We have for $\phi_{\lambda}$ the centered Gaussian density at second moment $\lambda$
	\ban
	T^{(j)}(t,x^{(j)},y)  &=& \frac{ \int_{\mathbb R}    \phi_{\lambda^{(j)}_t}(x^{(j)}-ve^{-t/2})   \phi_{\mu^{(j)}_y}(v-x^{(j)}_y)   dv }
	{\int_{\mathbb R}  \phi_{\lambda^{(j)}_t}(x^{(j)}-ve^{-t/2})   \phi_{\mu^{(j)}_y}(v-x^{(j)}_y) \psi^{(j)}(v,y) / \psi^{(j)}(x^{(j)} e^{t/2},y) dv } ,
	\ean
	for $\lambda^{(j)}_t=\lambda_j (1-e^{-t}),  \mu^{(j)}_y = \mu_j/(1+q^{(j)})$
 and $x_y^{(j)}=y^{(j)} q^{(j)}/(1+q^{(j)})$ for $y^{(j)}= \langle y,v_j \rangle$.
 Here $x_y^{(j)},\mu^{(j)}_y$ are respectively the mean and variance of $X_0^{(j)}$ given $y$ and
 where we used the parameterization set forth  in Section \ref{l2p1}.
We then have
\ban
|T^{(j)}(t,x^{(j)},y)| \leq K^2,   \quad \lim_{t \downarrow 0} T^{(j)}(t,x^{(j)}y) = 1 .
\ean
	We moreover have
	\begin{align*}
	| \tilde{R}^{(j)}(t,x^{(j)},y) |  &\leq  e^{-t/2}  (1-e^{-t})^{-1}  L
	\left( \frac{   \int_{\mathbb R} x_0^2  \phi_{\lambda^{(j)}_t }(x^{(j)}-x_0 e^{-t/2})   \phi_{\mu^{(j)}_y}(x_0-x_y^{(j)}) dx_0 }
	{\int_{\mathbb R}  \phi_{\lambda_t^{(j)}}(x^{(j)}-v e^{-t/2})   \phi_{\mu^{(j)}_y}(v-x^{(j)}_y) \psi^{(j)}(v,y)     dv} \right)
	     \\ &\leq
	    e^{-t/2}  (1-e^{-t})^{-1}  L K
	\left( \frac{   \int_{\mathbb R} x_0^2  \phi_{\lambda_t^{(j)} }(x^{(j)}-x_0 e^{-t/2})
 \phi_{\mu^{(j)}_y}(x_0-x^{(j)}_y) dx_0 }
	{\int_{\mathbb R}  \phi_{\lambda_t^{(j)}}(x^{(j)}-v e^{-t/2})  \phi_{\mu^{(j)}_y}(v-x_y^{(j)})      dv} \right)
	     .
	\end{align*}
	We then find
\ban
\limsup_{t \downarrow 0} | \tilde{R}^{(j)}(t,x^{(j)},y) | \leq \lambda^{(j)} LK ,
 \ean
 and, moreover
 \ban
&& \hspace*{-.7cm}| \tilde{R}^{(j)}(t,x^{(j)},y) | \leq \\ &&  \hspace*{-.7cm}  \lambda_j LK e^{t/2} \left(
\frac{1}{1+(e^t-1)p^{(j)}(1+q^{(j)})} +
\lambda_j \frac{(x_y^{(j)}-x^{(j)} e^{t/2})^2}{(\mu^{(j)}_y)^2}\frac{(e^t-1)}{(1+(e^t-1)p^{(j)}(1+q^{(j)}))^2} \right) .
 \ean
Consider in the Gaussian case  in (\ref{eq:bound1g}) a mode so that
$p^{(j)}(1+q^{(j)})\uparrow \infty$ and $\lambda_j$ fixed,
then the contribution of this mode to the score norm
blows up in the small time limit.
The situation with $p^{(j)}(1+q^{(j)})\uparrow \infty$ would happen for instance in a limit of perfect mode observation
so that $\sigma_B \downarrow 0$ and thus $q^{(j)} \uparrow \infty$.
Indeed in the limit of small (conditional) target mode variabilty
relative to the diffusion noise parameter the score drift becomes
large for small time to drive the mode to the conditional target
distribution.
We here thus assume $p^{(j)}(1+q^{(j)})$  is uniformly bounded
with respect to mode ($j$ index), moreover, that $C$ is of trace class.
We then find that Assumption \ref{assumption1} is satisfied
with the following bound
\ban
  &&  \hspace*{-.5cm}  \sup_{t \in [0,T]} \EE \big[ \| S(t,X_t,y) \|_H^2 | Y=y \big]
  \\ && \hspace*{-.5cm}
    \leq
    2  \left(\sum_j \lambda_j    e^T \left( K^4 p^{(j)} (1+q^{(j)})
    + 2 \lambda_j e^T (LK)^2 \left( 1 +
     3 \left(p^{(j)} (1+q^{(j)})(e^{T}-1)\right)^4
    \right)
   \right) \right) .
\ean
 We remark that in the case that we do not have a uniform bound on the $p^{(j)}(1+q^{(j)})$'s
 it follows from (\ref{eq:bound2}) that the rate of divergence of the expected square norm of the score is at most
 $t^{-1}$ as $t \downarrow  0$ with $C$ of trace class.

\subsection{Proof of Proposition 4}
We start from (\ref{eq:carola2}):
\begin{align*}
&\mathbb{E}_{x_t,y \sim {\cal L}(X_t,Y)} \big[ \|
 S(t,x_t,y) -s_\theta(t,x_t,y)\|_H^2\big]
 =
\mathbb{E}_{x_t,y \sim {\cal L}(X_t,Y)} \big[ \|
 S(t,x_t,y)  \|_H^2\big] \\
&\quad +\mathbb{E}_{x_t,y \sim {\cal L}(X_t,Y)} \big[ \|s_\theta(t,x_t,y)\|_H^2 \big]
-2 \mathbb{E}_{x_t,y \sim {\cal L}(X_t,Y)} \big[ \left<
 S(t,x_t,y) ,s_\theta(t,x_t,y)\right>\big] .
\end{align*}
From Definition~\ref{def:score} we have
\begin{align*}
  &  \mathbb{E}_{x_t,y \sim {\cal L}(X_t,Y)} \big[ \left<
 S(t,x_t,y) ,s_\theta(t,x_t,y)\right>\big] \\
 &= -(1-e^{-t})^{-1}
 \mathbb{E}_{x_t,y \sim {\cal L}(X_t,Y)} \Big[ \left<
 x_t-e^{-t/2} \EE_{x_0 \sim {\cal L}(X_0|X_t=x_t,Y=y)} [x_0],s_\theta(t,x_t,y)\right>\Big]  \\
  &= -(1-e^{-t})^{-1}
 \mathbb{E}_{x_t,y \sim {\cal L}(X_t,Y)} \Big[  \EE_{x_0 \sim {\cal L}(X_0|X_t=x_t,Y=y)} \Big[
 \left<
 x_t-e^{-t/2} x_0 ,s_\theta(t,x_t,y)\right>\Big]\Big] \\
   &= -(1-e^{-t})^{-1}
 \mathbb{E}_{(x_0,x_t,y) \sim {\cal L}(X_0,X_t,Y)} \Big[
 \left<  x_t-e^{-t/2} x_0 ,s_\theta(t,x_t,y)\right>\Big]  .
\end{align*}
We obtain that
\begin{align*}
&\mathbb{E}_{x_t,y \sim {\cal L}(X_t,Y)} \big[ \|
 S(t,x_t,y) -s_\theta(t,x_t,y)\|_H^2\big] \\
& = B
 +
 \mathbb{E}_{(x_0,x_t,y) \sim {\cal L}(X_0,X_t,Y)} \big[ \|
- (1-e^{-t})^{-1} ( x_t -e^{-t/2} x_0)
 - s_\theta(t,x_t,y)\|_H^2\big] ,
\end{align*}
with
$$
B=\mathbb{E}_{x_t,y \sim {\cal L}(X_t,Y)} \big[ \|S(t,x_t,y) \|_H^2\big]
-
\mathbb{E}_{(x_0,x_t) \sim {\cal L}(X_0,X_t)} \big[\| (1-e^{-t})^{-1} (x_t-e^{-t/2} x_0)  \|_H^2\big]
$$
that does not depend on $\theta$.
Since ${\cal L}(X_t|X_0=x_0,Y=y)={\cal L}(X_t|X_0=x_0)$
we finally get that
\begin{align*}
&\mathbb{E}_{x_t,y \sim {\cal L}(X_t,Y)} \big[ \|
 S(t,x_t,y) -s_\theta(t,x_t,y)\|_H^2\big] \\
& = B
 +
 \mathbb{E}_{x_0,y \sim {\cal L}(X_0,Y), x_t \sim {\cal L}(X_t|X_0=x_0)} \big[ \|
- (1-e^{-t})^{-1} ( x_t -e^{-t/2} x_0)
 - s_\theta(t,x_t,y)\|_H^2\big] .
\end{align*}

\section{Numerical experiments: details and additional results}

In this section, we provide additional details regarding our numerical experiments. In both experiments, we parameterize the conditional score $s_\theta(t, x_t,y)$ using discretization-invariant Fourier neural operators \citep[FNOs;][]{li2021fourier}. This parameterization enables mapping input triplets $(t,x_t,y)$ to the score conditioned on $y$ at time $t$. Once trained---by minimizing the objective function in equation~\eqref{eq:objective} with respect to $\theta$---we use the FNO as an approximation to the conditional score to sample new realizations of the conditional distribution by simulating the reverse-time SDE in equation~\eqref{eq:conditional-reverse-SDE}.

\subsection{Stylized example}

In this example, the conditional distribution that we approximate is defined using the relation

\begin{equation}
    x_0 = a y^2 + \eps,
    \label{eq:stylized_example}
\end{equation}

where $\eps \sim \Gamma (1, 2)$ and $a \sim \mathcal{U}\{-1, 1\}$. Here, $\Gamma$ refers to the Gamma distribution, and $\mathcal{U}\{-1, 1\}$ denotes the uniform distribution over the set $\{-1, 1\}$. Having an explicit expression characterizing the conditional distribution allows us to easily evaluate the obtained result through our method---as opposed to needing to use a baseline method, e.g., Markov chain Monte Carlo.

\paragraph{Training data} The discretization invariance of our model enables us to use training data that live on varying discretization grids. We exploit this property and simulate training joint samples $(x_0, y)$ by evaluating the expression in equation \eqref{eq:stylized_example} over values of $y$ that are selected as nonuniform grids over the domain $[-3, 3]$ that contain $15$--$50$ grid points.

\paragraph{Architecture} In this example, the FNO comprises (i) a fully connected lifting layer that maps the three-dimensional vector---including the timestep $t$, the $y$ value, and the corresponding $x$ value---for each grid point to a $128$-dimensional lifted space; (ii) five Fourier neural layers, as introduced in \citep{li2021fourier}, which contain pointwise linear filters applied to the five lower Fourier modes; and (iii) two fully-connected layers, separated by a ReLU activation function, that map the $128$-dimensional lifted space back to the conditional score (a scalar) for each grid point.

\paragraph{Optimization details} To train the FNO, we minimized the objective function in equation~\eqref{eq:objective} using $2 \times 10^4$ training steps. At each step, we simulated a batch of $512$ training pairs selected from a grid with varying numbers of discretization points. We utilized the Adam stochastic optimization method \citep{kingma2014adam} with an initial learning rate of $10^{-3},$ which decayed to $5 \times 10^{-4}$ during optimization, following a power-law rate of $-1/3$. Regarding the diffusion process, we followed the approach outlined in \citet{ho2020denoising} and employed standard Gaussian noise with linearly increasing variance for the forward dynamics, which was discretized into $500$ timesteps. The initial Gaussian noise variance was set to $10^{-4}$ and linearly increased over the timesteps until it reached $2 \times 10^{-2}$. The training hyperparameters were chosen by monitoring the validation loss over $1024$ samples. The training process took approximately 7 minutes on a Tesla V100 GPU device. For further details, please refer to our open-source implementation on \href{https://github.com/alisiahkoohi/csgm}{GitHub}.

\paragraph{Additional results} Figure~\ref{fig:sup_toy_example} illustrates the predicted samples associated with the remaining testing grid sizes, whose densities were shown in Figures~\ref{fig:marginal_left}--\ref{fig:marginal_right}.

\begin{figure}[t]
    \centering
    \captionsetup[subfigure]{skip=-11pt}
    \begin{subfigure}[b]{0.329\textwidth}
        \includegraphics[width=\textwidth]{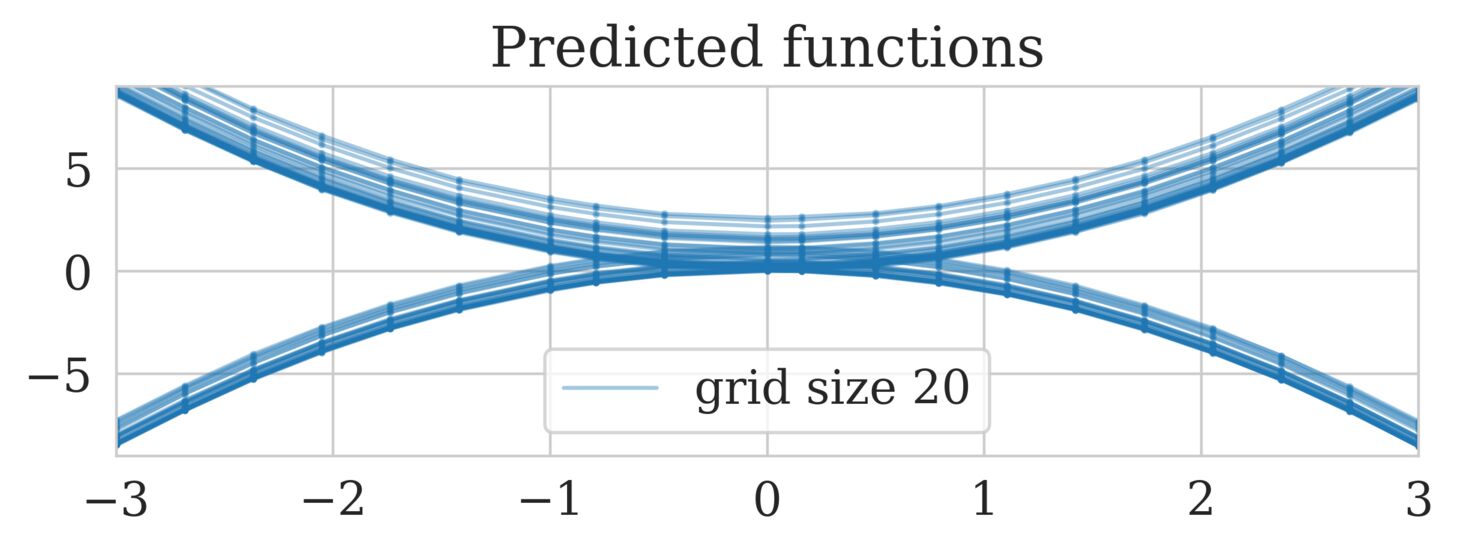}
    \vspace{0ex}\caption{}
    \label{fig:sup_grid_20}
    \end{subfigure}\hspace{0em}
    \begin{subfigure}[b]{0.329\textwidth}
        \includegraphics[width=\textwidth]{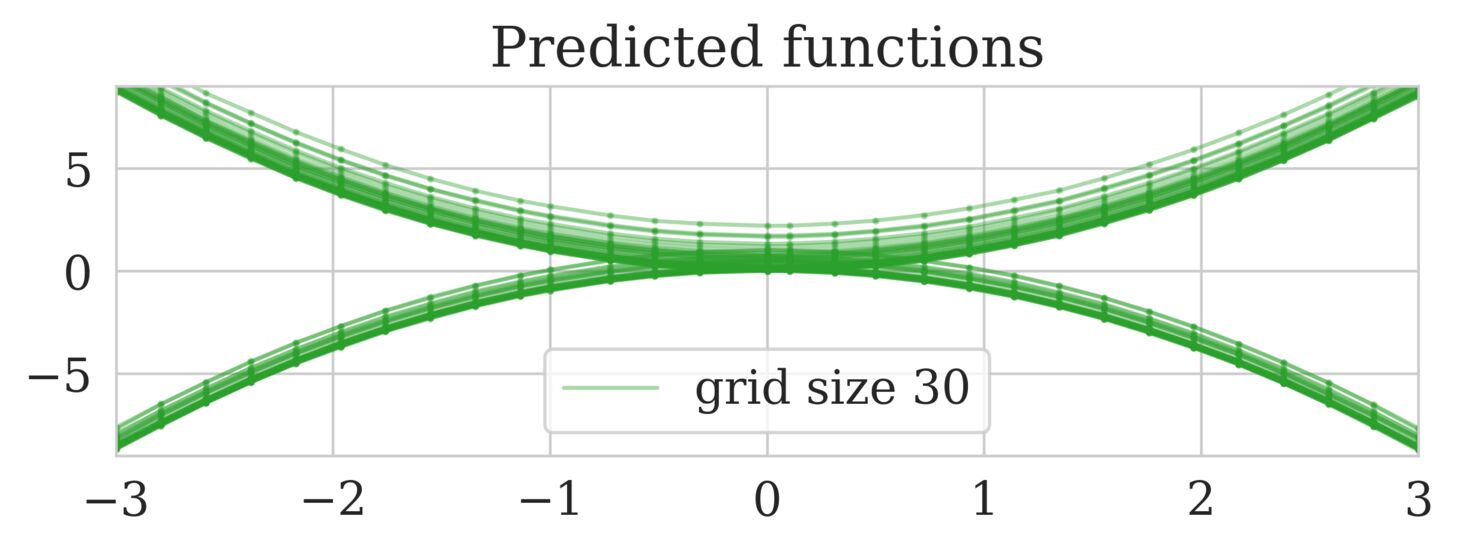}
    \vspace{0ex}\caption{}
    \label{fig:sup_grid_30}
    \end{subfigure}\hspace{0em}
    \begin{subfigure}[b]{0.329\textwidth}
        \includegraphics[width=\textwidth]{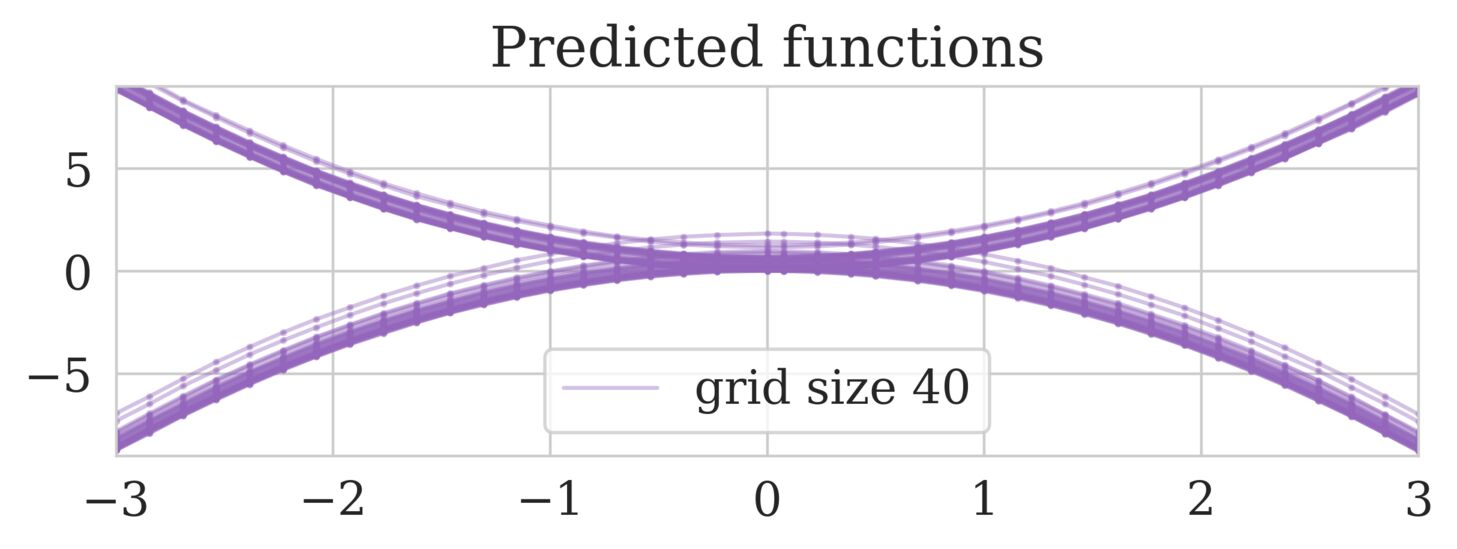}
    \vspace{0ex}\caption{}
    \label{fig:sup_grid_40}
    \end{subfigure}\hspace{0em}

    \caption{The proposed method's discretization invariance. Predicted samples (functions) on a uniformly sampled grid with (a) 20, (b) 30, and (c) 40 grid points.}
    \label{fig:sup_toy_example}
\end{figure}

\subsection{Linearized seismic imaging example}

The inverse problem we are addressing in this example involves the task of estimating the short-wavelength component of the Earth's unknown subsurface squared-slowness model using measurements taken at the surface. This particular problem, often referred to as seismic imaging, can be recast as a linear inverse problem when we linearize the nonlinear relationship between surface measurements and the squared-slowness model, as governed by the wave equation. In its simplest acoustic form, the linearization with respect to the slowness model---around a background smooth squared slowness model $m_0$---results in a linear inverse problem for the estimation of the true seismic image $\delta m^{\ast}$ using the following forward model,
\begin{equation}
    d_i = J(m_0, q_i)
        \delta m^{\ast} + \epsilon_i, \quad
        \epsilon_i \sim p(\epsilon),
    \label{linear-fwd-op}
    \end{equation}
 where $d=\left\{d_{i}\right\}_{i=1}^{n_s}$ represents a collection of $n_s$ linearized surface measurements, i.e., data wherein the zero-order term of Taylor's series has been subtracted, and $J(m_0, q_i)$ denotes the linearized Born scattering operator, which is defined in terms of the source signature $q_{i}$ and the background squared-slowness model $m_0$. Due to noise and linearization errors, the above expression contains the term $\epsilon_i$. In addition to this noise term, the forward operator has a non-trivial nullspace due to the presence of shadow zones and finite-aperture data \citep{lambare1992iterative, nemeth1999least}. To tackle this challenges, we set seismic imaging into a Bayesian framework and learn the associated posterior distribution via our proposed method.

\paragraph{Training data} We generated synthetic data by applying the Born scattering operator to $4750$ 2D seismic images, each with dimensions of $3075\, \mathrm{m} \times 5120\, \mathrm{m}$. These images were extracted from the Kirchhoff migrated \href{https://wiki.seg.org/wiki/Parihaka-3D}{Parihaka-3D} dataset, which contains seismic images obtained by imaging the data collected in New Zealand~\citep{Veritas2005, WesternGeco2012}. We parameterize the Born scattering operator using a background squared-slowness model (recall Figure~\ref{fig:background}). The data acquisition geometry involves $102$ sources with a spacing of $50\, \mathrm{m}$, each recorded for two seconds via $204$ receivers spaced at $25\, \mathrm{m}$ located on top the image. The source wavelet used was a Ricker wavelet with a central frequency of $30\, \mathrm{Hz}$. To replicate a more realistic imaging scenario, we add band-limited noise to the the data, obtained by filtering white noise with the source function. To create training pairs, we first simulated noisy seismic data for all the 2D seismic images based on the aforementioned acquisition design. Subsequently, we reduce the dimensionality of seismic data by applying the adjoint of the Born scattering operator to the data. We use \href{https://www.devitoproject.org/}{Devito} \citep{louboutin2019devito, luporini2020architecture} for the wave-equation based simulations.

\paragraph{Architecture} In this example, the FNO is composed of (i) a fully connected lifting layer that maps the five-dimensional vector---including the timestep $t$, the two spatial coordinates, the data value ($y$, obtained after dimensionality reduction by applying the adjoint of the forward operator), and the corresponding $x$ value---for each grid point to a $32$-dimensional lifted space; (ii) four Fourier neural layers, as introduced in \citep{li2021fourier}, which contain pointwise linear filters applied to the $24$ lower Fourier modes; and (iii) two fully-connected layers, separated by a ReLU activation function, that map the $128$-dimensional lifted space back to the conditional score for each grid point.

\paragraph{Optimization details} We train the FNO according to the objective function in equation~\eqref{eq:objective} with the Adam \citep{kingma2014adam} stochastic optimization method with batch size $128$ for $300$ epochs. We use an initial stepsize of $2 \times 10^{-3}$, decaying to $5 \times 10^{-4}$ during optimization with a power-law rate of $-1/3$. We use a similar diffusion process as the previous example. The training hyperparameters are chosen by monitoring the validation loss over $530$ samples. The training process takes approximately two hours and $15$ minutes on a Tesla V100 GPU device. For further details refer to our open-source implementation on \href{https://github.com/alisiahkoohi/csgm}{GitHub}.

\paragraph{Additional results} Figures~\ref{figs:sup_idx-15}--\ref{figs:sup_idx-19} illustrate more results regarding using the SDM to sample from the posterior distribution of multiple seismic imaging problem instances where the ground truth images (see Figures~\ref{fig:sup_idx-15_true_model}--\ref{fig:sup_idx-19_true_model}) are obtained from the test dataset. In each plot, we use the dimensionality-reduced data as the conditioning input to the FNO (see Figures~\ref{fig:sup_idx-15_observed_data}--\ref{fig:sup_idx-19_observed_data}). Through the SDM, we obtain $10^3$ posterior distribution samples and use them to estimate the conditional mean (see  Figures~\ref{fig:sup_idx-15_conditional_mean}--\ref{fig:sup_idx-19_conditional_mean}) and the pointwise standard deviation among samples (see Figures~\ref{fig:sup_idx-15_pointwise_std}--\ref{fig:sup_idx-19_pointwise_std}), with the former serving as a measure of uncertainty. In all cases, the regions of significant uncertainty correspond well with challenging-to-image sections of the model, which qualitatively confirms the accuracy of our Bayesian inference method. This observation becomes more apparent in Figures~\ref{fig:sup_idx-15_vertical_profile_at_10}--\ref{fig:sup_idx-19_vertical_profile_at_10}, displaying two vertical profiles with $99\%$ confidence intervals (depicted as orange-colored shading) for each experiment, which demonstrate the expected trend of increased uncertainty with depth. Furthermore, we notice that the ground truth (indicated by dashed black lines) mostly falls within the confidence intervals for most areas. We also consistently observe a strong correlation between the pointwise standard deviation and the error in the conditional mean estimate (see Figures~\ref{fig:sup_idx-15_error}--\ref{fig:sup_idx-19_error}), which further asserts the accuracy of our method in characterizing the posterior distribution in this large-scale Bayesian inference problem. To prevent bias from strong amplitudes in the estimated image, we present the normalized pointwise standard deviation divided by the envelope of the conditional mean in Figures~\ref{fig:sup_idx-15_normalized_pointwise_std}--\ref{fig:sup_idx-19_normalized_pointwise_std}. These visualizations provides an amplitude-independent assessment of uncertainty, highlighting regions of high uncertainty at the onset and offset of reflectors (both shallow and deeper sections). Additionally, the normalized pointwise standard deviation underscores uncertainty in areas of the image where there are discontinuities in the reflectors (indicated by black arrows), potentially indicating the presence of faults.

\begin{figure}[!htb]
    \centering
    \captionsetup[subfigure]{skip=-11pt}
    \begin{tabular}[t]{cc}
        \begin{tabular}{c}
        \begin{subfigure}[b]{0.31\textwidth}
            \includegraphics[width=\textwidth]{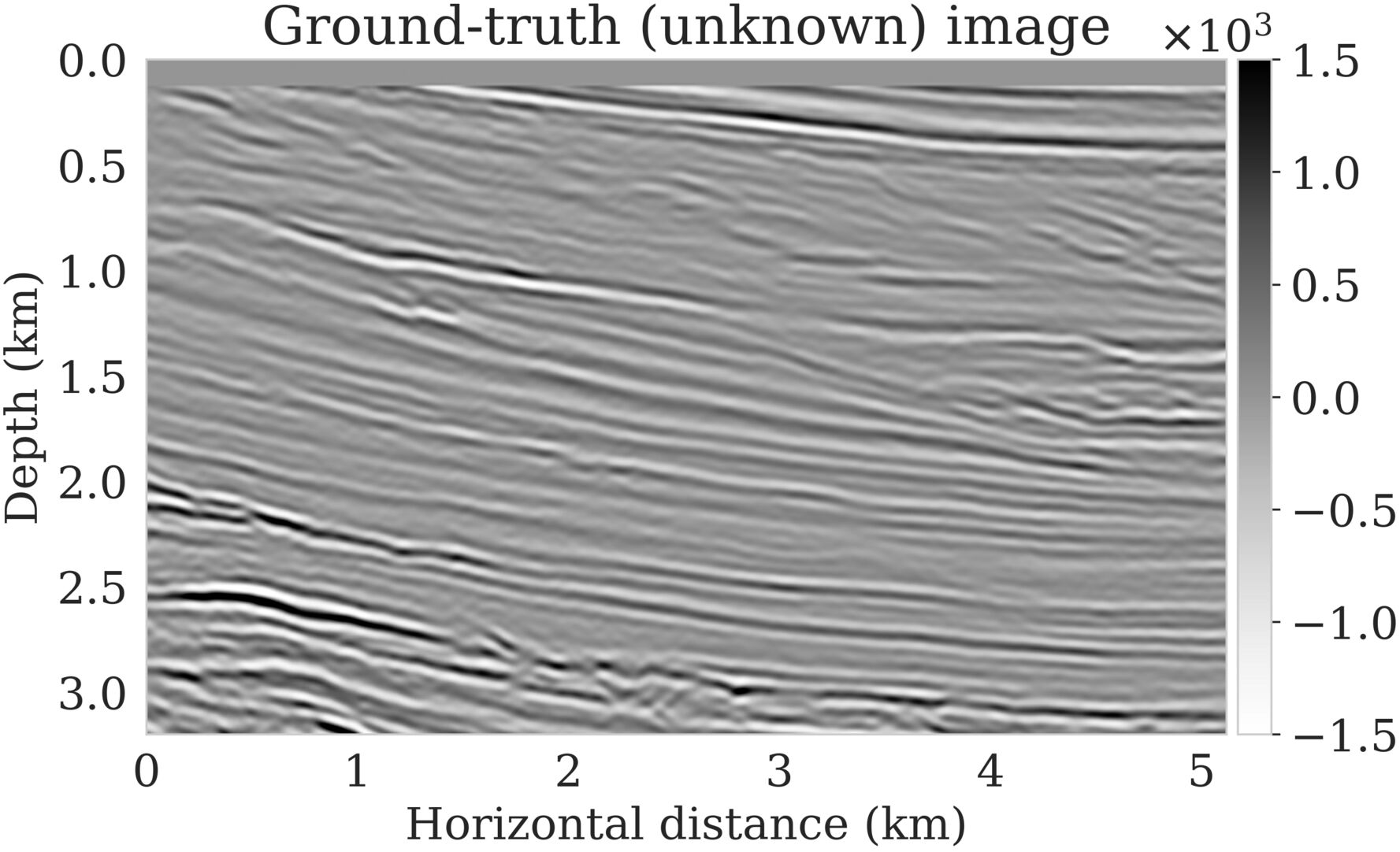}
        \vspace{0ex}\caption{}
        \label{fig:sup_idx-15_true_model}
        \end{subfigure}\hspace{0em}
        \begin{subfigure}[b]{0.31\textwidth}
            \includegraphics[width=\textwidth]{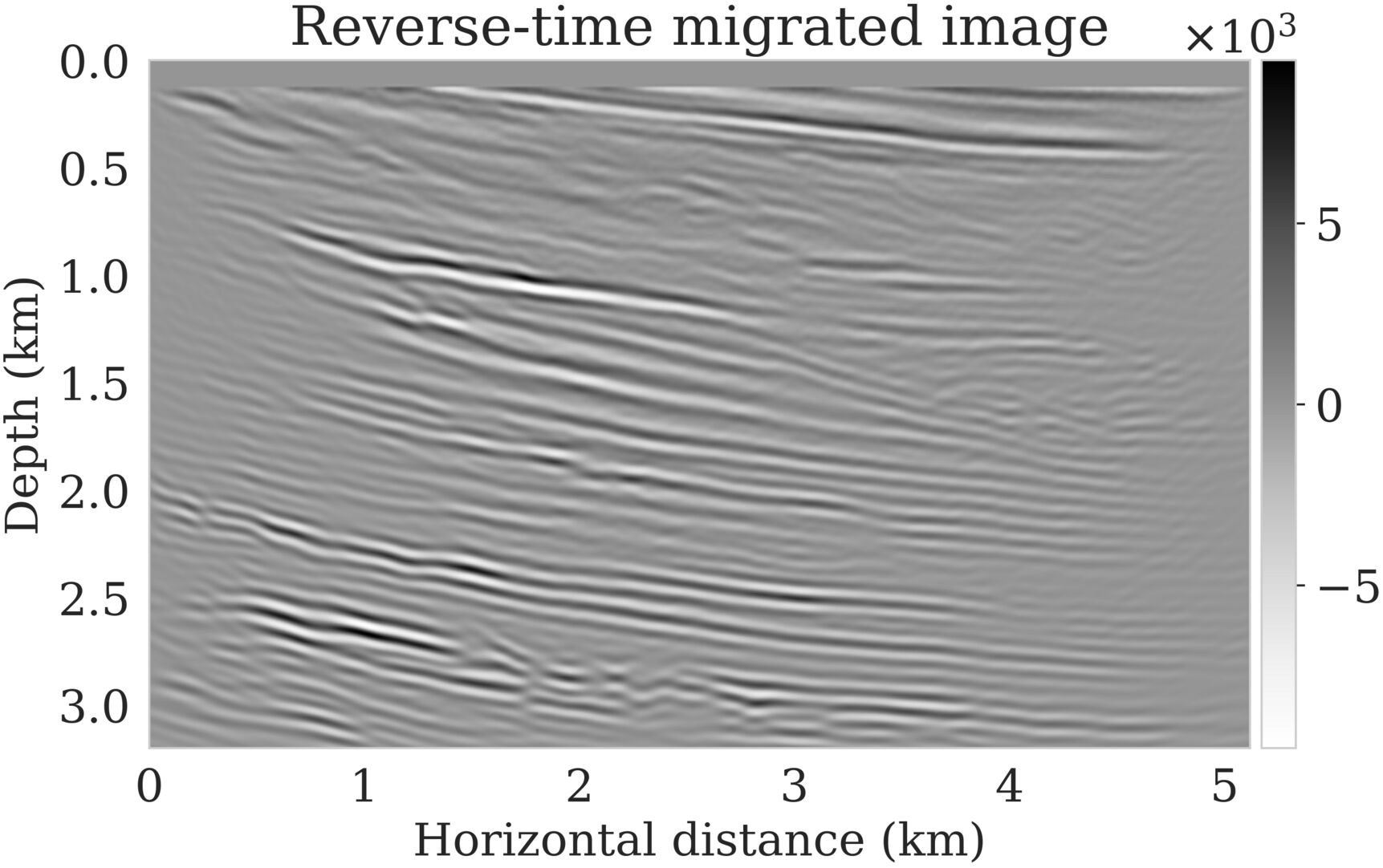}
        \vspace{0ex}\caption{}
        \label{fig:sup_idx-15_observed_data}
        \end{subfigure}\hspace{0em}
        \\
        \begin{subfigure}[b]{0.31\textwidth}
            \includegraphics[width=\textwidth]{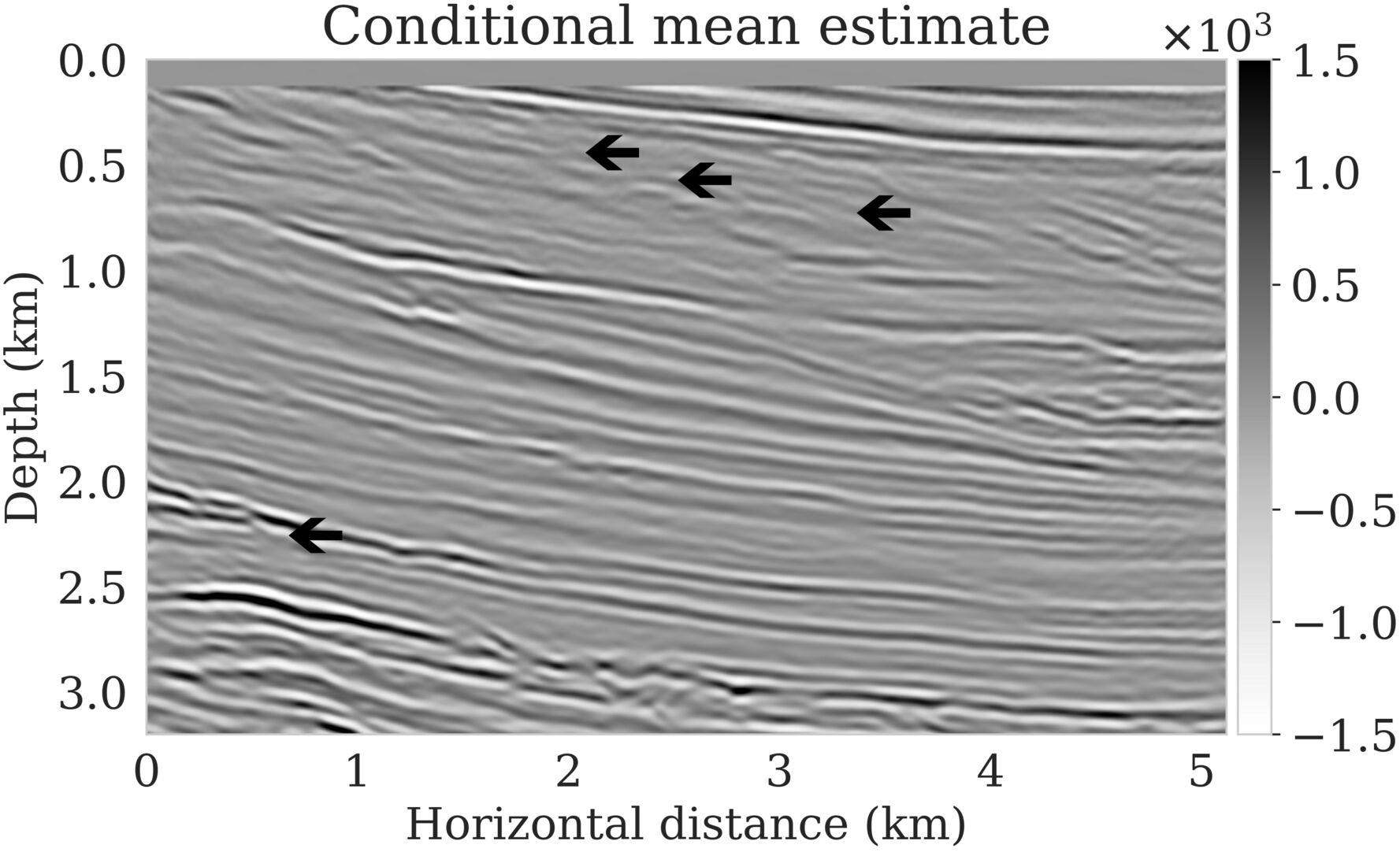}
        \vspace{0ex}\caption{}
        \label{fig:sup_idx-15_conditional_mean}
        \end{subfigure}\hspace{0em}
        \begin{subfigure}[b]{0.31\textwidth}
            \includegraphics[width=\textwidth]{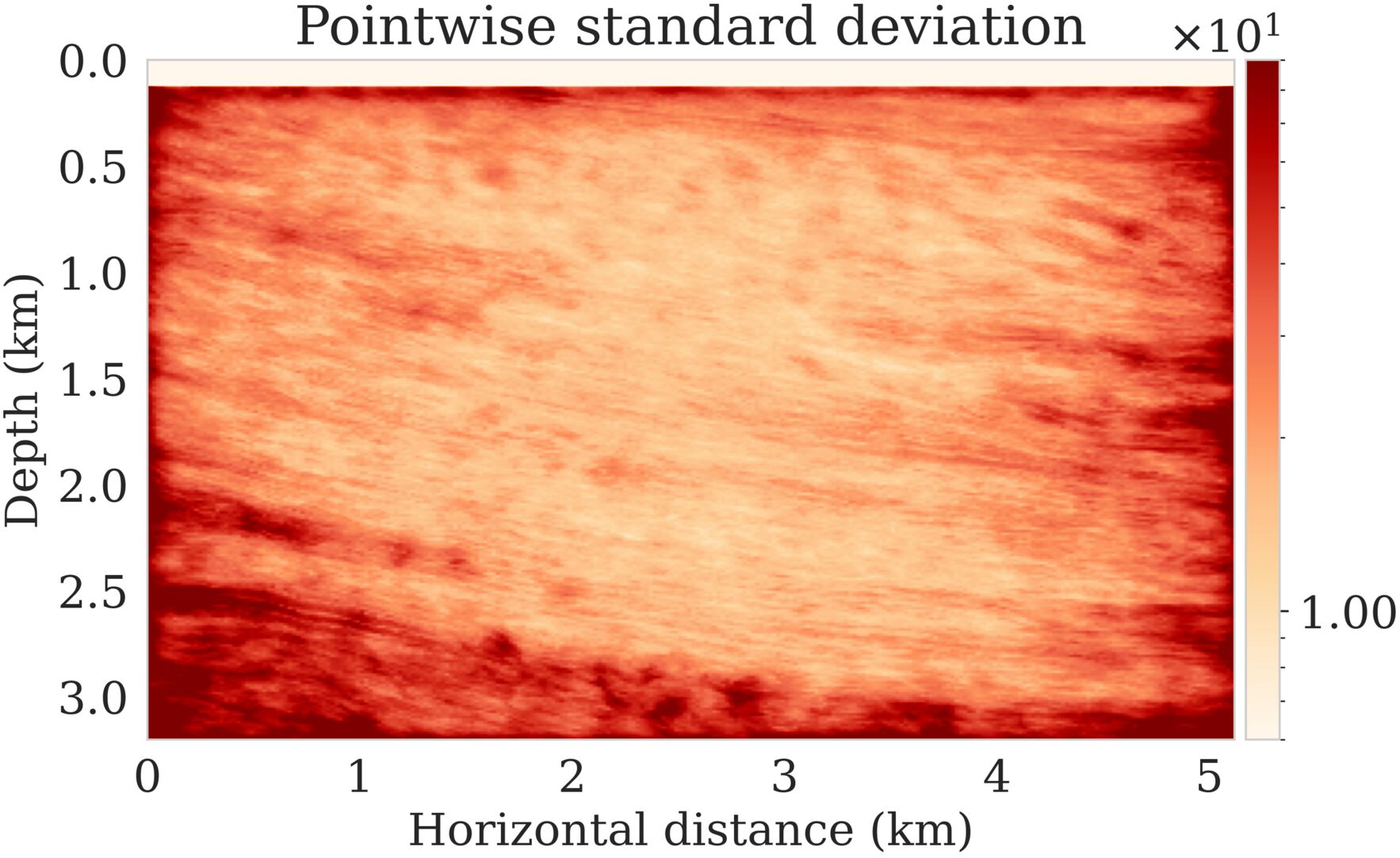}
        \vspace{0ex}\caption{}
        \label{fig:sup_idx-15_pointwise_std}
        \end{subfigure}\hspace{0em}
        \\
        \begin{subfigure}[b]{0.31\textwidth}
            \includegraphics[width=\textwidth]{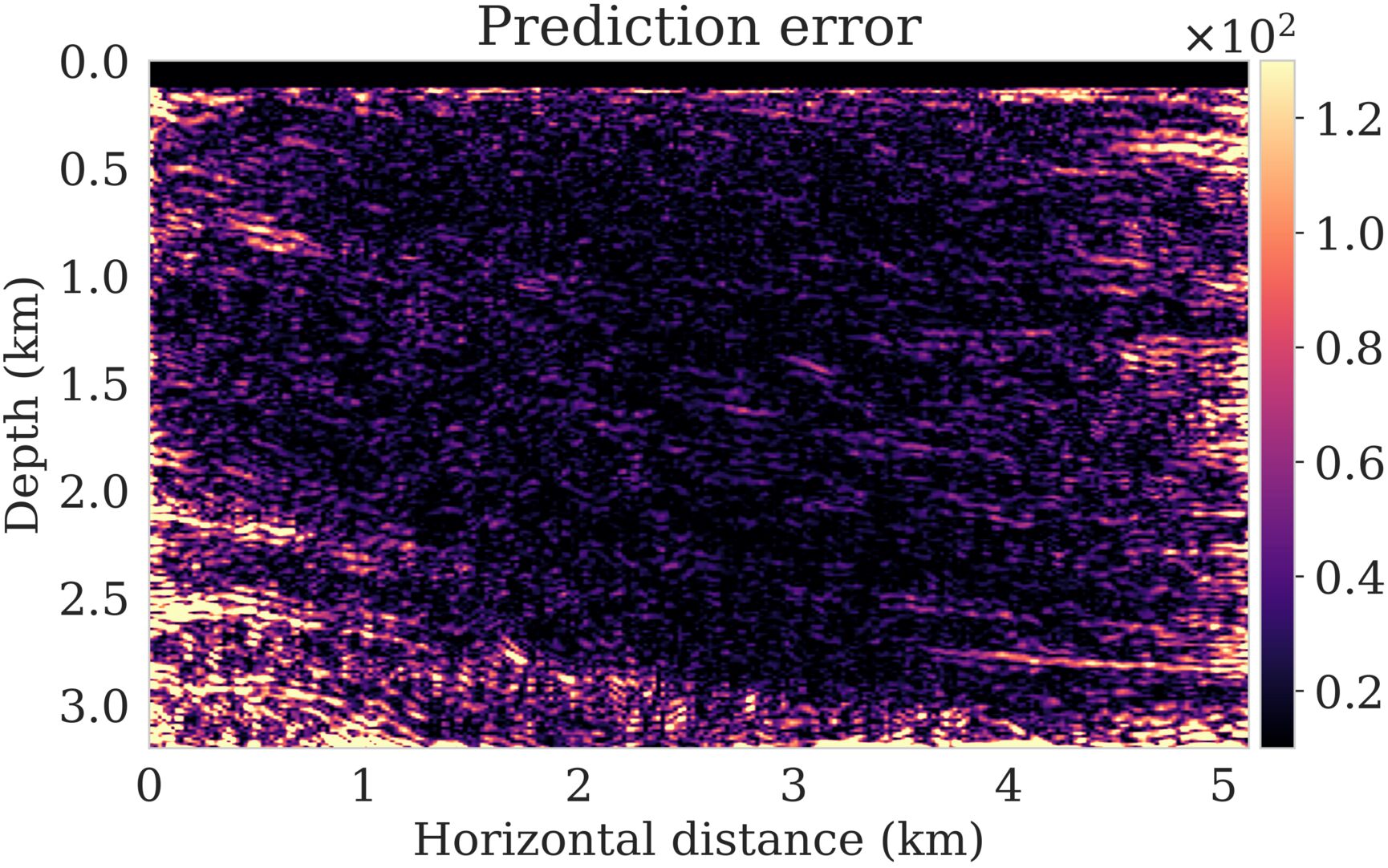}
        \vspace{0ex}\caption{}
        \label{fig:sup_idx-15_error}
        \end{subfigure}\hspace{0em}
        \begin{subfigure}[b]{0.31\textwidth}
            \includegraphics[width=\textwidth]{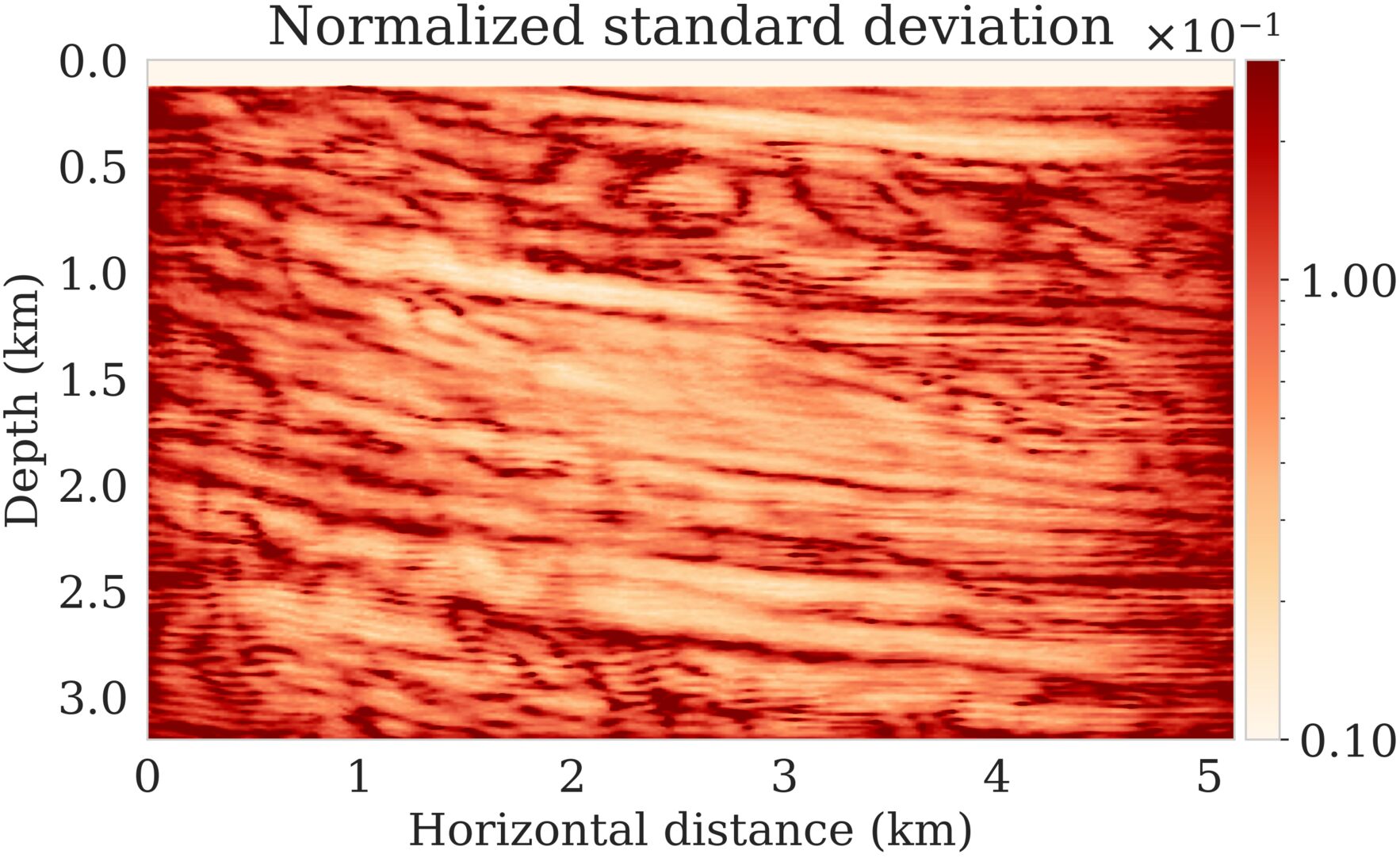}
        \vspace{0ex}\caption{}
        \label{fig:sup_idx-15_normalized_pointwise_std}
        \end{subfigure}\hspace{0em}
        \end{tabular}
    &
    \hspace{-2.5em}
    \begin{tabular}{c}
        \begin{subfigure}[b]{0.343\textwidth}
            \includegraphics[width=\textwidth]{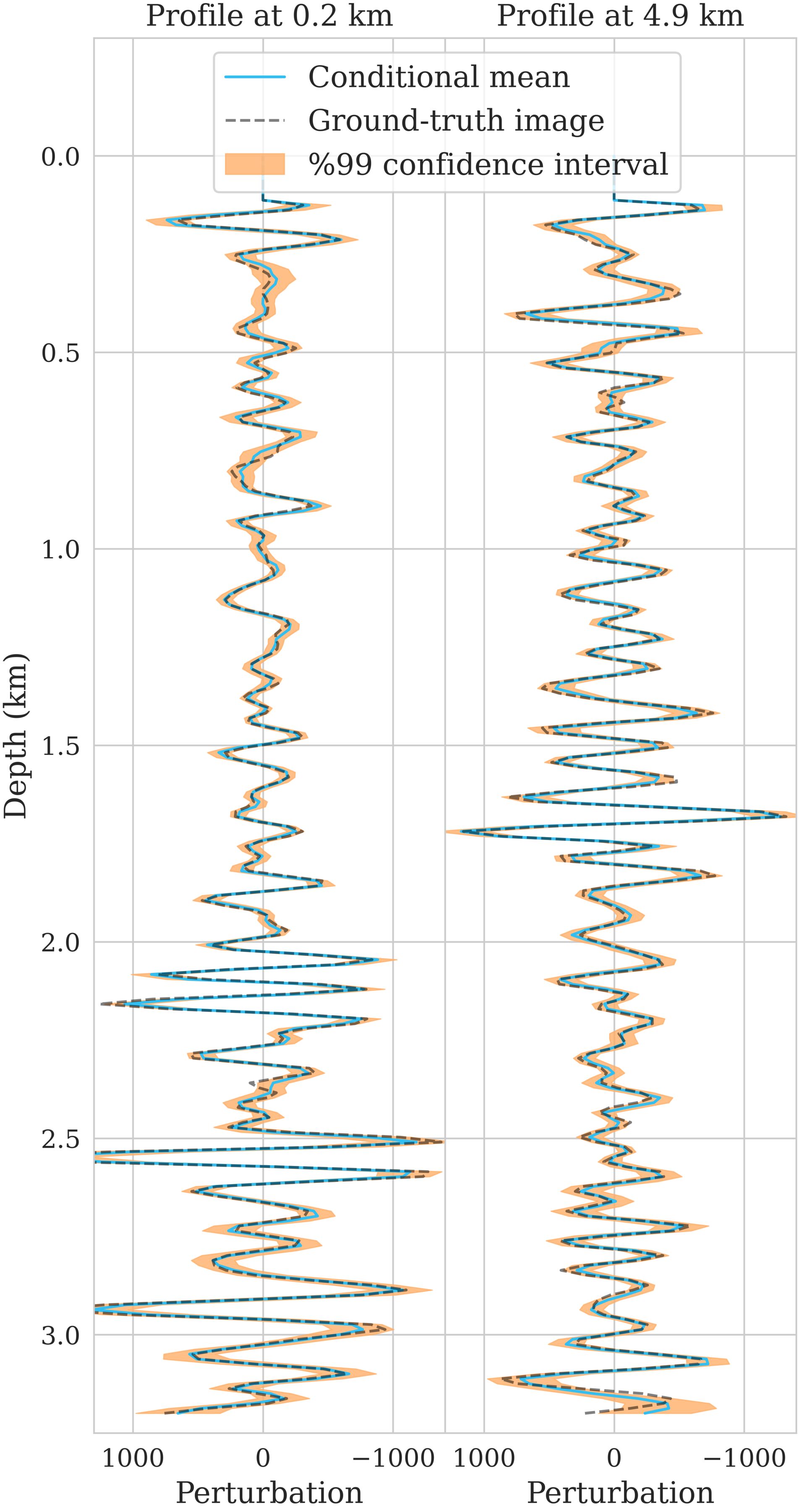}
        \vspace{0ex}\caption{}
        \label{fig:sup_idx-15_vertical_profile_at_10}
        \end{subfigure}\hspace{0em}
    \end{tabular}
    \end{tabular}
    \caption{Seismic imaging and uncertainty quantification. (a) Ground-truth seismic image. (b) Data after applying the adjoint Born operator (known as the reverse-time migrated image). (c) Conditional (posterior) mean. (d) Pointwise standard deviation. (e) Absolute error between Figures~\ref{fig:sup_idx-15_true_model} and~\ref{fig:sup_idx-15_conditional_mean}. (f) Normalized pointwise standard deviation by the envelope of the conditional mean. (g) Vertical profiles of the ground-truth image, conditional mean estimate, and the $99\%$ confidence interval at two lateral positions in the image.}
    \label{figs:sup_idx-15}
    \end{figure}

\begin{figure}[!htb]
    \centering
    \captionsetup[subfigure]{skip=-11pt}
    \begin{tabular}[t]{cc}
        \begin{tabular}{c}
        \begin{subfigure}[b]{0.31\textwidth}
            \includegraphics[width=\textwidth]{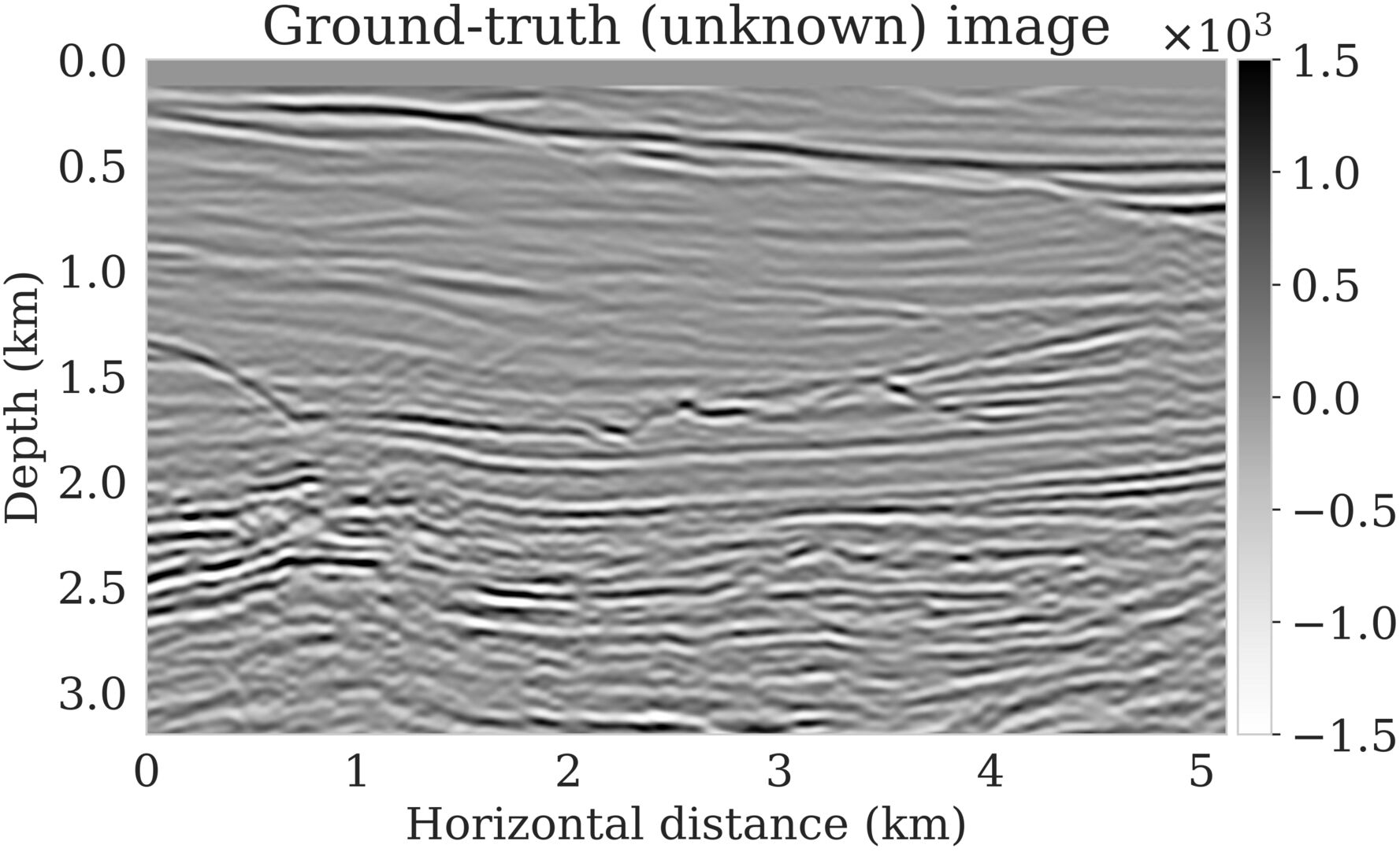}
        \vspace{0ex}\caption{}
        \label{fig:sup_idx-1_true_model}
        \end{subfigure}\hspace{0em}
        \begin{subfigure}[b]{0.31\textwidth}
            \includegraphics[width=\textwidth]{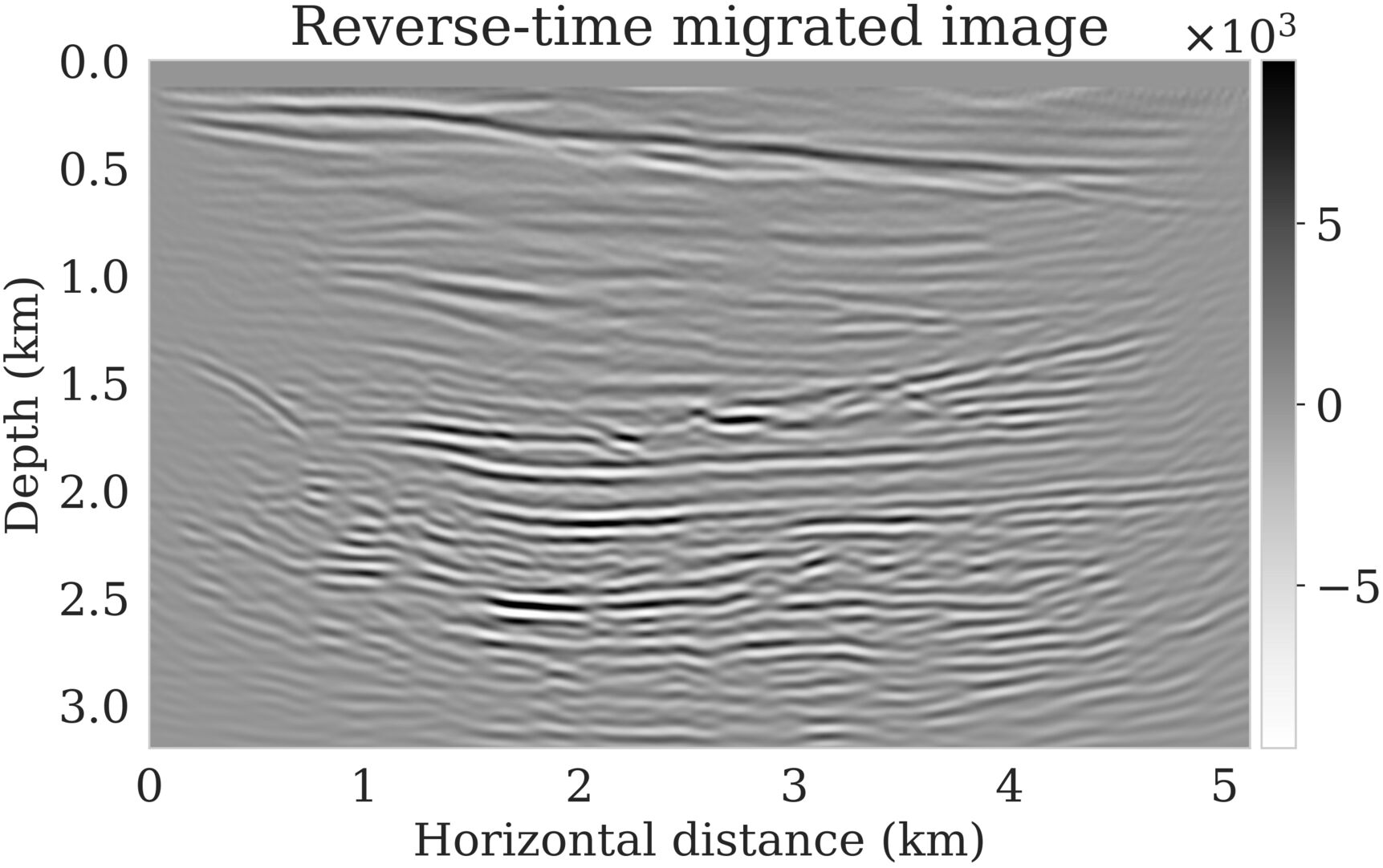}
        \vspace{0ex}\caption{}
        \label{fig:sup_idx-1_observed_data}
        \end{subfigure}\hspace{0em}
        \\
        \begin{subfigure}[b]{0.31\textwidth}
            \includegraphics[width=\textwidth]{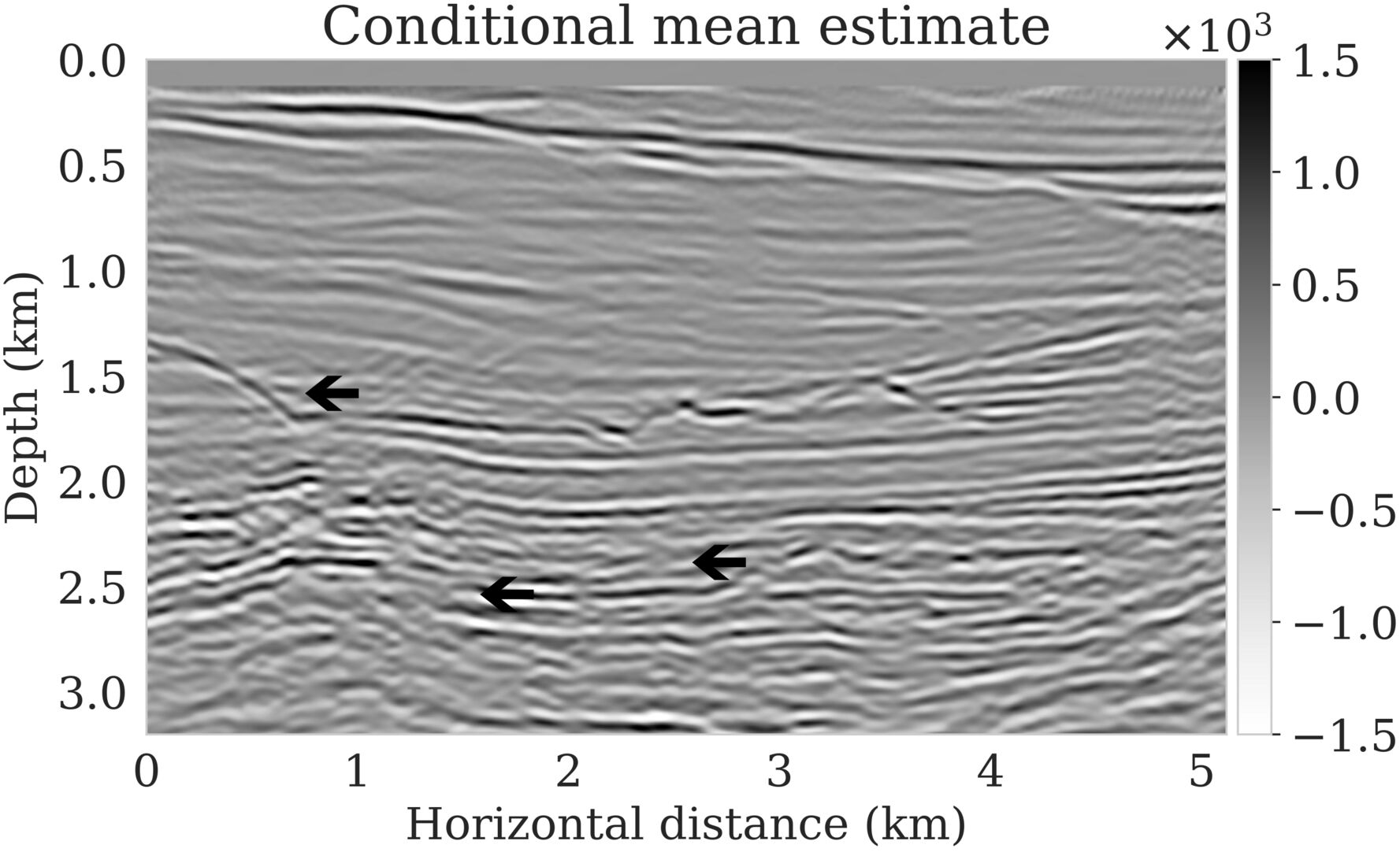}
        \vspace{0ex}\caption{}
        \label{fig:sup_idx-1_conditional_mean}
        \end{subfigure}\hspace{0em}
        \begin{subfigure}[b]{0.31\textwidth}
            \includegraphics[width=\textwidth]{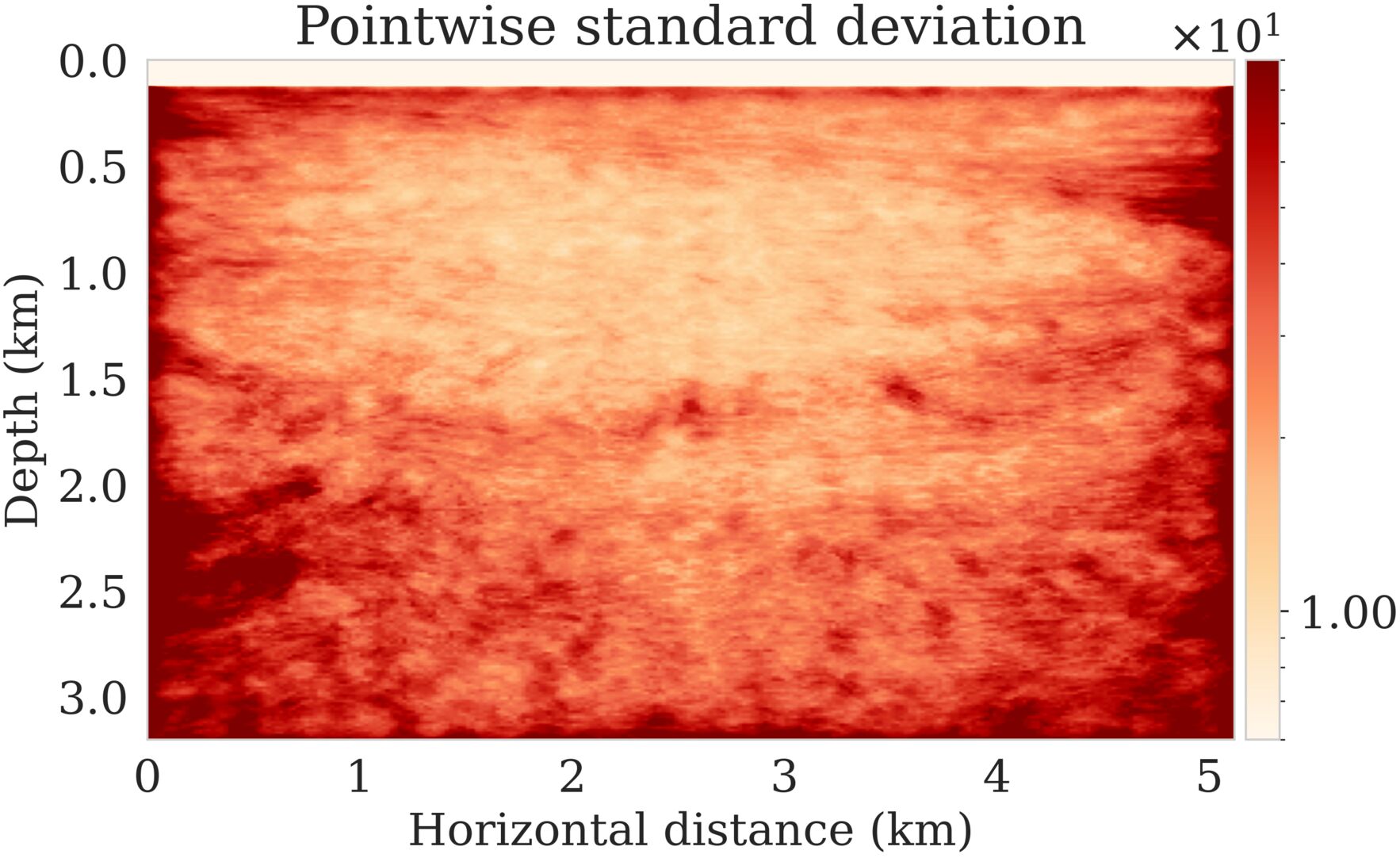}
        \vspace{0ex}\caption{}
        \label{fig:sup_idx-1_pointwise_std}
        \end{subfigure}\hspace{0em}
        \\
        \begin{subfigure}[b]{0.31\textwidth}
            \includegraphics[width=\textwidth]{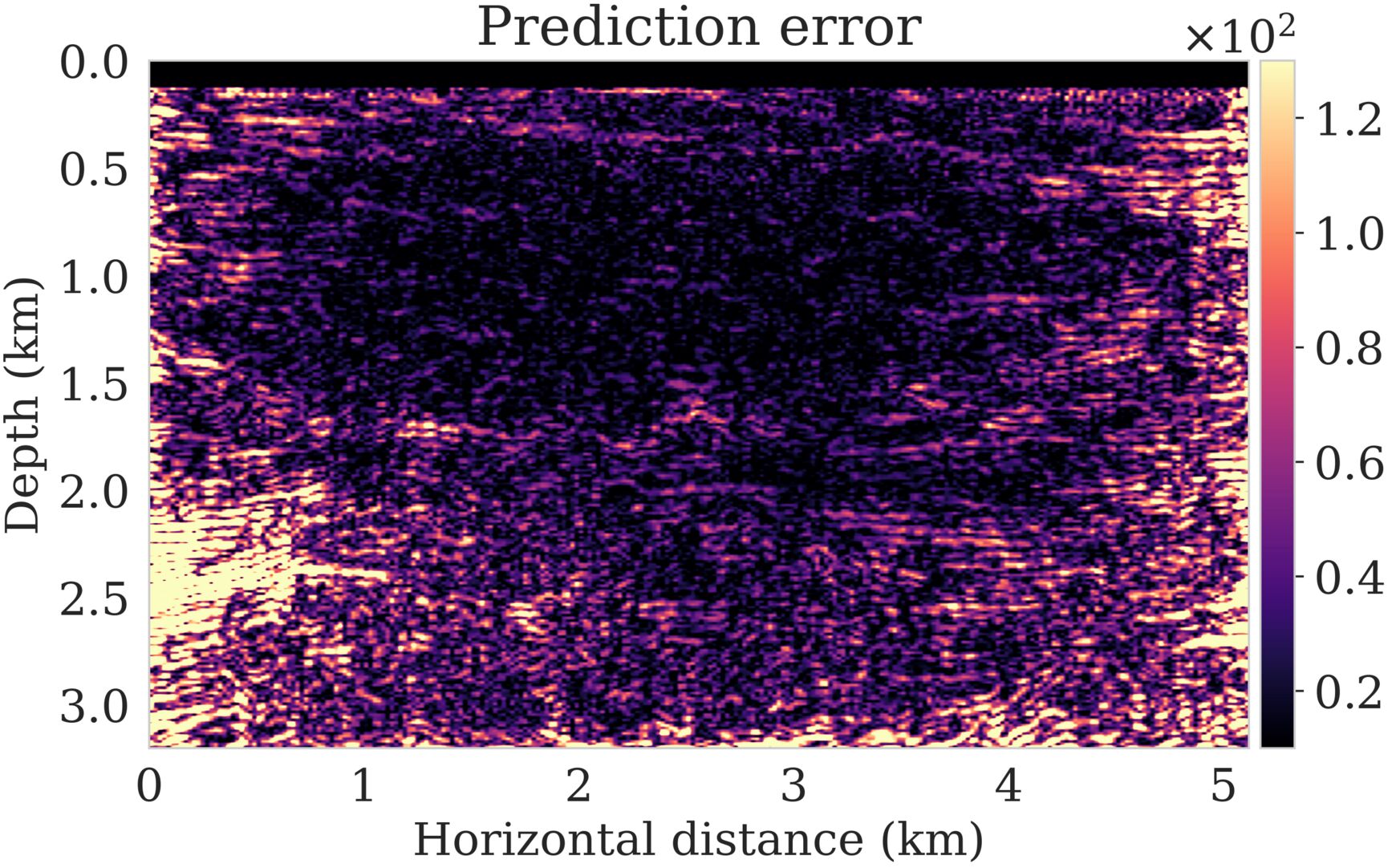}
        \vspace{0ex}\caption{}
        \label{fig:sup_idx-1_error}
        \end{subfigure}\hspace{0em}
        \begin{subfigure}[b]{0.31\textwidth}
            \includegraphics[width=\textwidth]{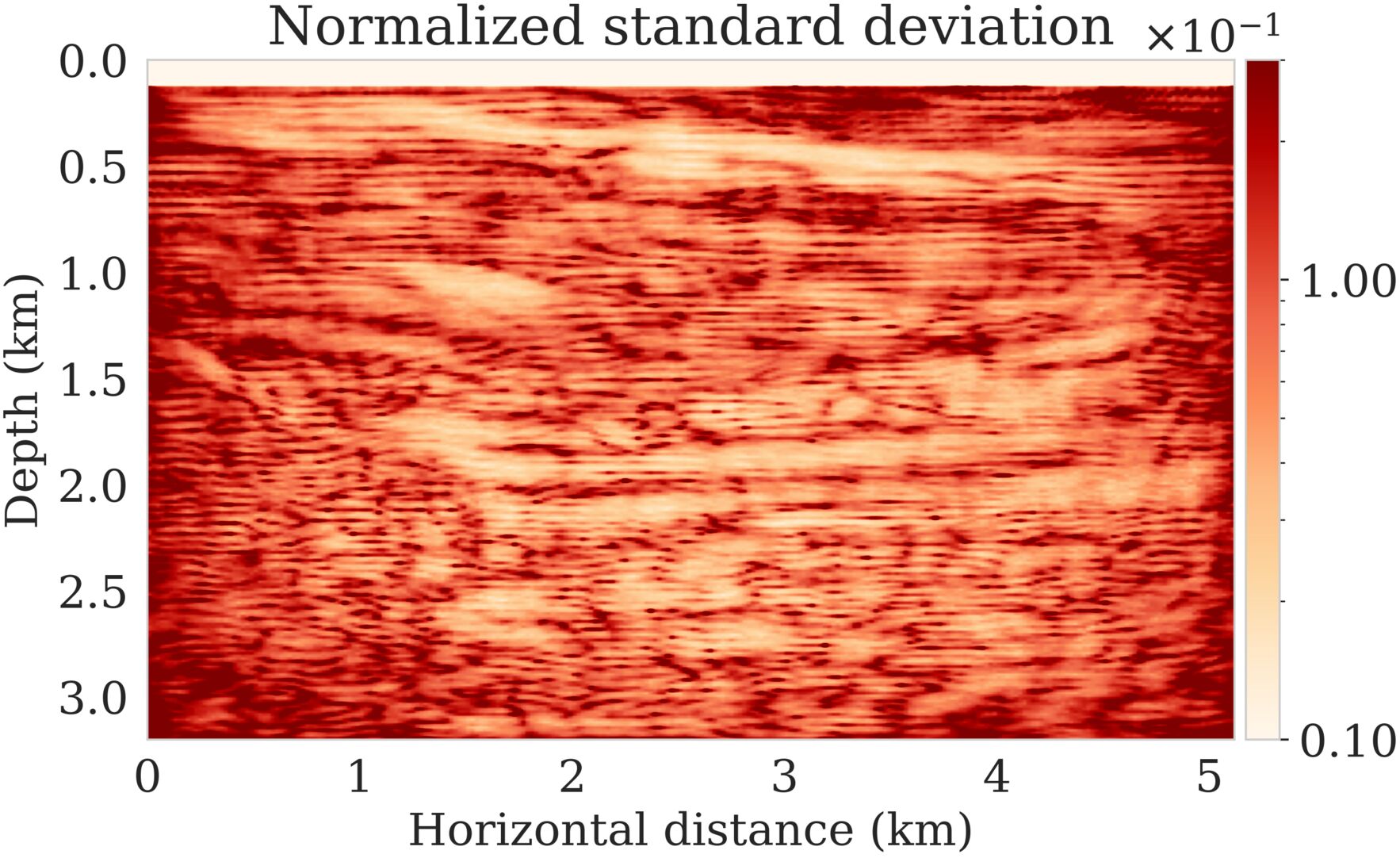}
        \vspace{0ex}\caption{}
        \label{fig:sup_idx-1_normalized_pointwise_std}
        \end{subfigure}\hspace{0em}
        \end{tabular}
    &
    \hspace{-2.5em}
    \begin{tabular}{c}
        \begin{subfigure}[b]{0.343\textwidth}
            \includegraphics[width=\textwidth]{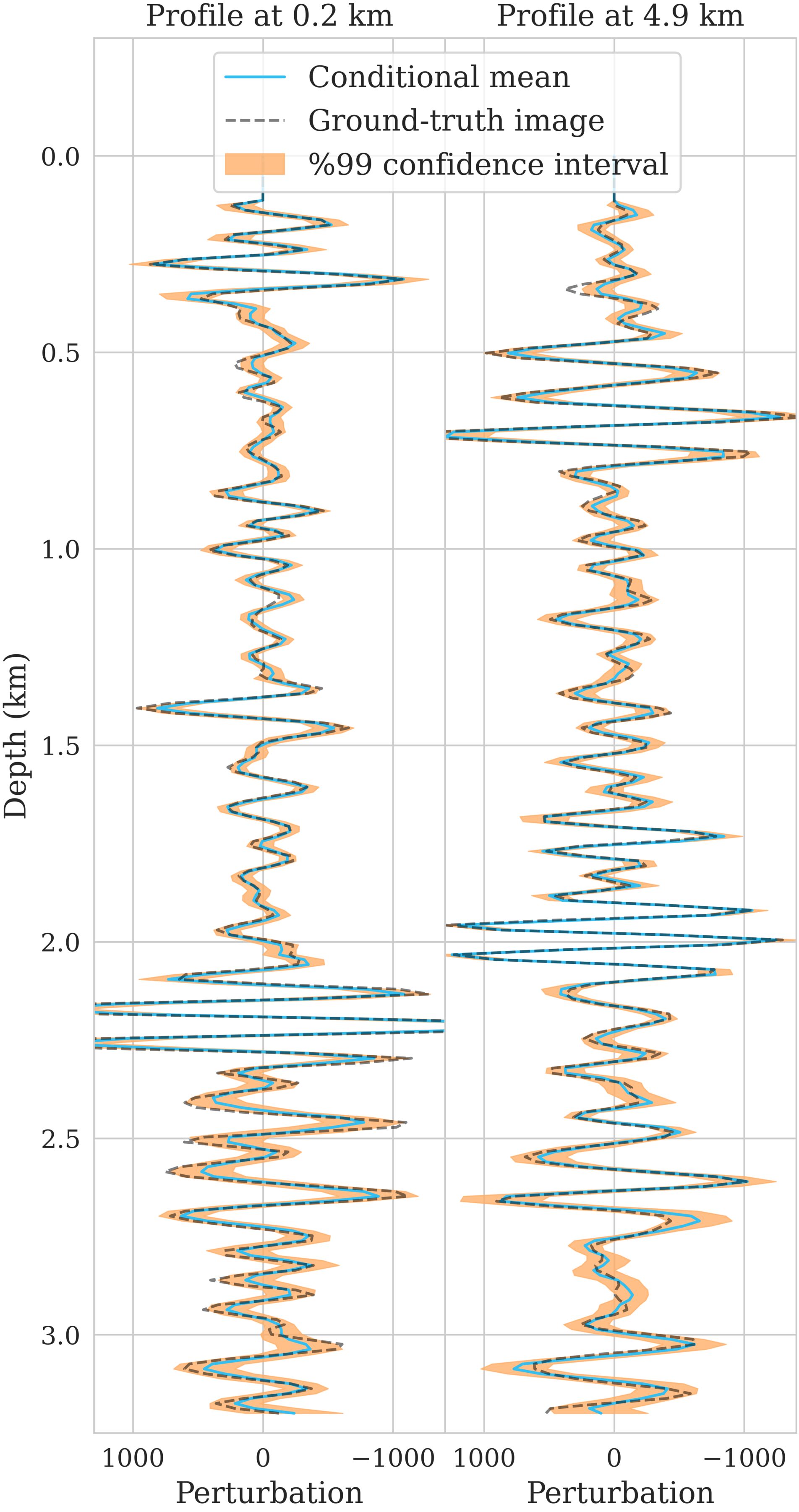}
        \vspace{0ex}\caption{}
        \label{fig:sup_idx-1_vertical_profile_at_10}
        \end{subfigure}\hspace{0em}
            \end{tabular}
    \end{tabular}
    \caption{Seismic imaging and uncertainty quantification. (a) Ground-truth seismic image. (b) Data after applying the adjoint Born operator (known as the reverse-time migrated image). (c) Conditional (posterior) mean. (d) Pointwise standard deviation. (e) Absolute error between Figures~\ref{fig:sup_idx-1_true_model} and~\ref{fig:sup_idx-1_conditional_mean}. (f) Normalized pointwise standard deviation by the envelope of the conditional mean. (g) Vertical profiles of the ground-truth image, conditional mean estimate, and the $99\%$ confidence interval at two lateral positions in the image.}
        \label{figs:sup_idx-1}
    \end{figure}

\begin{figure}[!htb]
    \centering
    \captionsetup[subfigure]{skip=-11pt}
    \begin{tabular}[t]{cc}
        \begin{tabular}{c}
        \begin{subfigure}[b]{0.31\textwidth}
            \includegraphics[width=\textwidth]{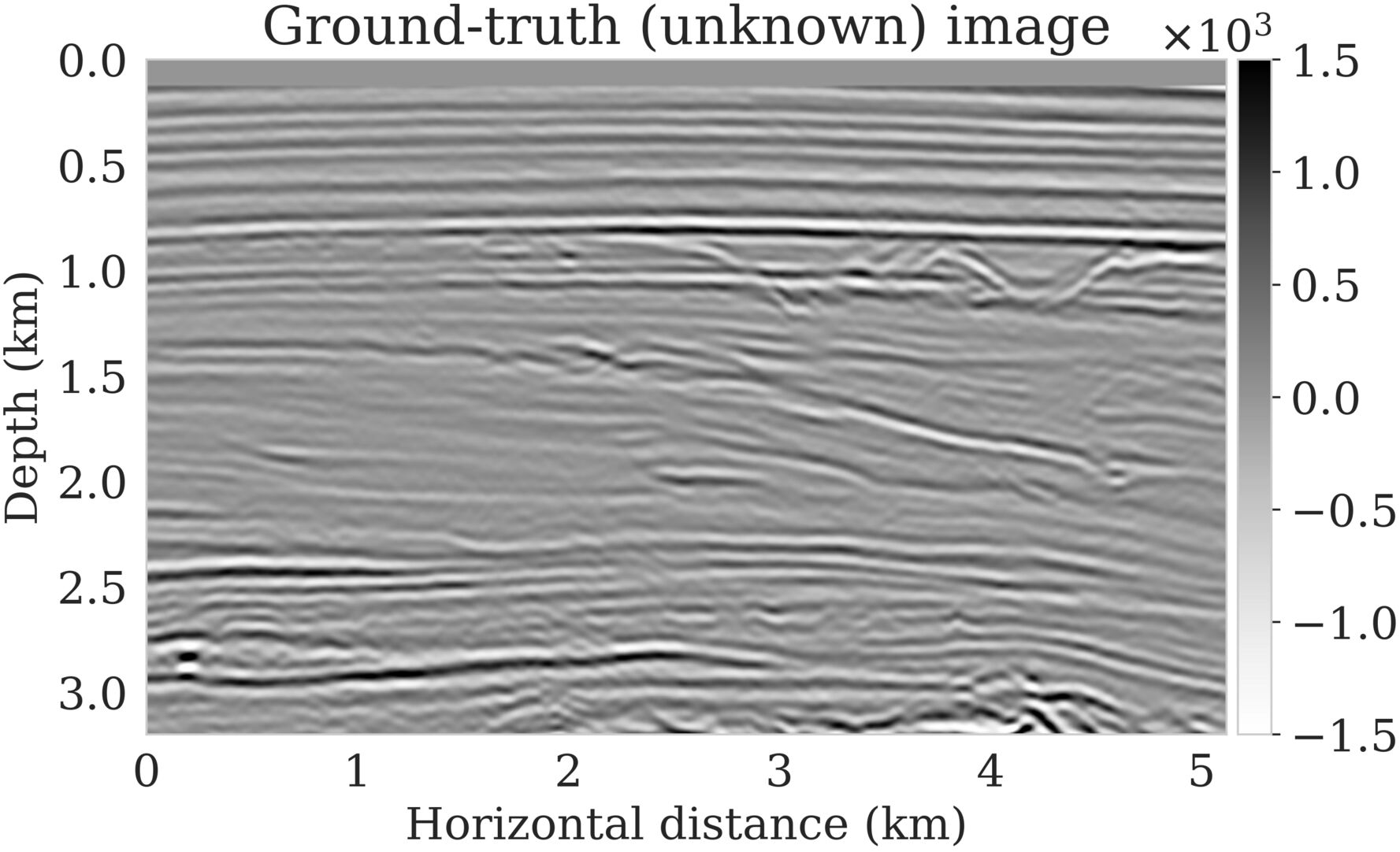}
        \vspace{0ex}\caption{}
        \label{fig:sup_idx-16_true_model}
        \end{subfigure}\hspace{0em}
        \begin{subfigure}[b]{0.31\textwidth}
            \includegraphics[width=\textwidth]{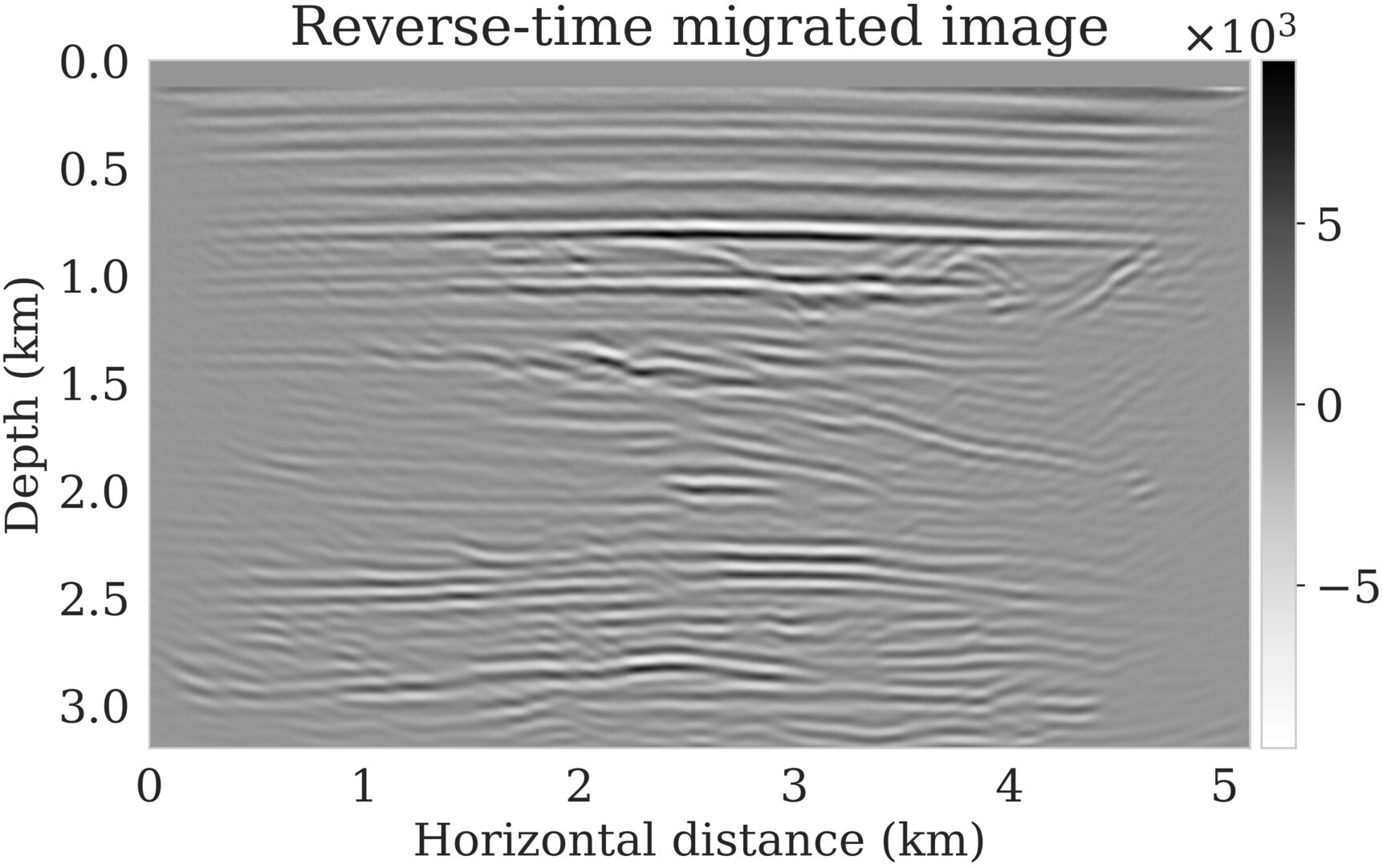}
        \vspace{0ex}\caption{}
        \label{fig:sup_idx-16_observed_data}
        \end{subfigure}\hspace{0em}
        \\
        \begin{subfigure}[b]{0.31\textwidth}
            \includegraphics[width=\textwidth]{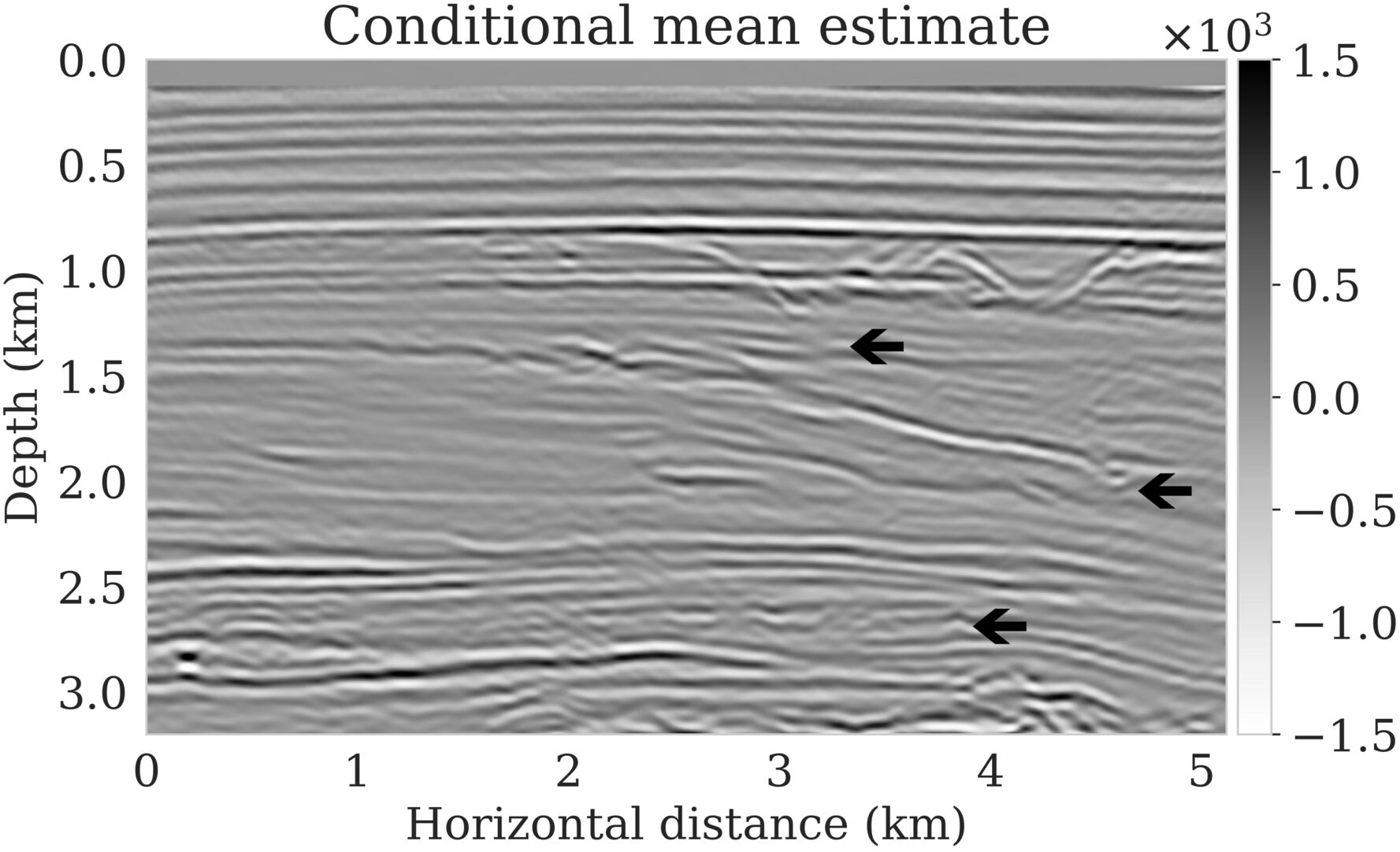}
        \vspace{0ex}\caption{}
        \label{fig:sup_idx-16_conditional_mean}
        \end{subfigure}\hspace{0em}
        \begin{subfigure}[b]{0.31\textwidth}
            \includegraphics[width=\textwidth]{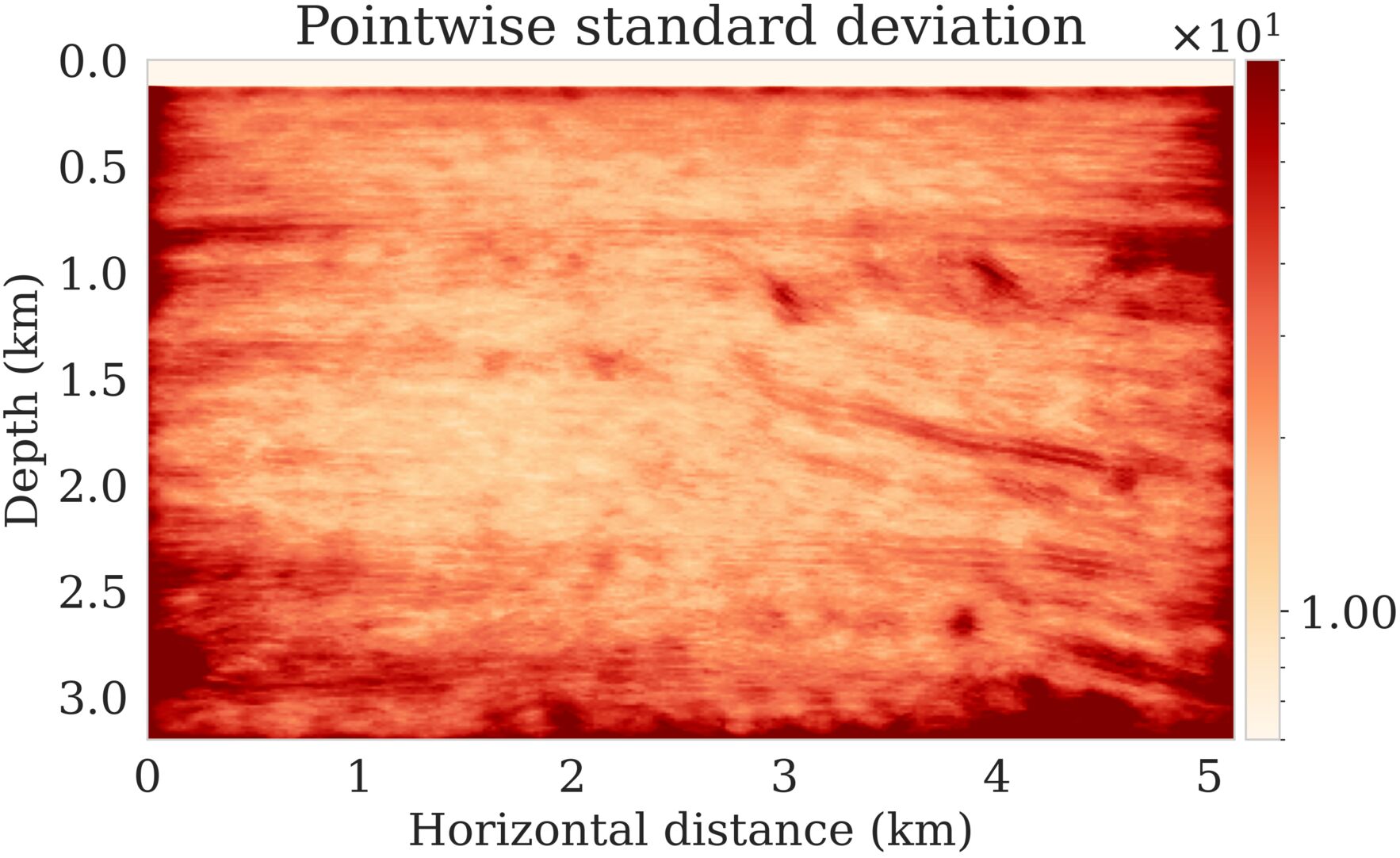}
        \vspace{0ex}\caption{}
        \label{fig:sup_idx-16_pointwise_std}
        \end{subfigure}\hspace{0em}
        \\
        \begin{subfigure}[b]{0.31\textwidth}
            \includegraphics[width=\textwidth]{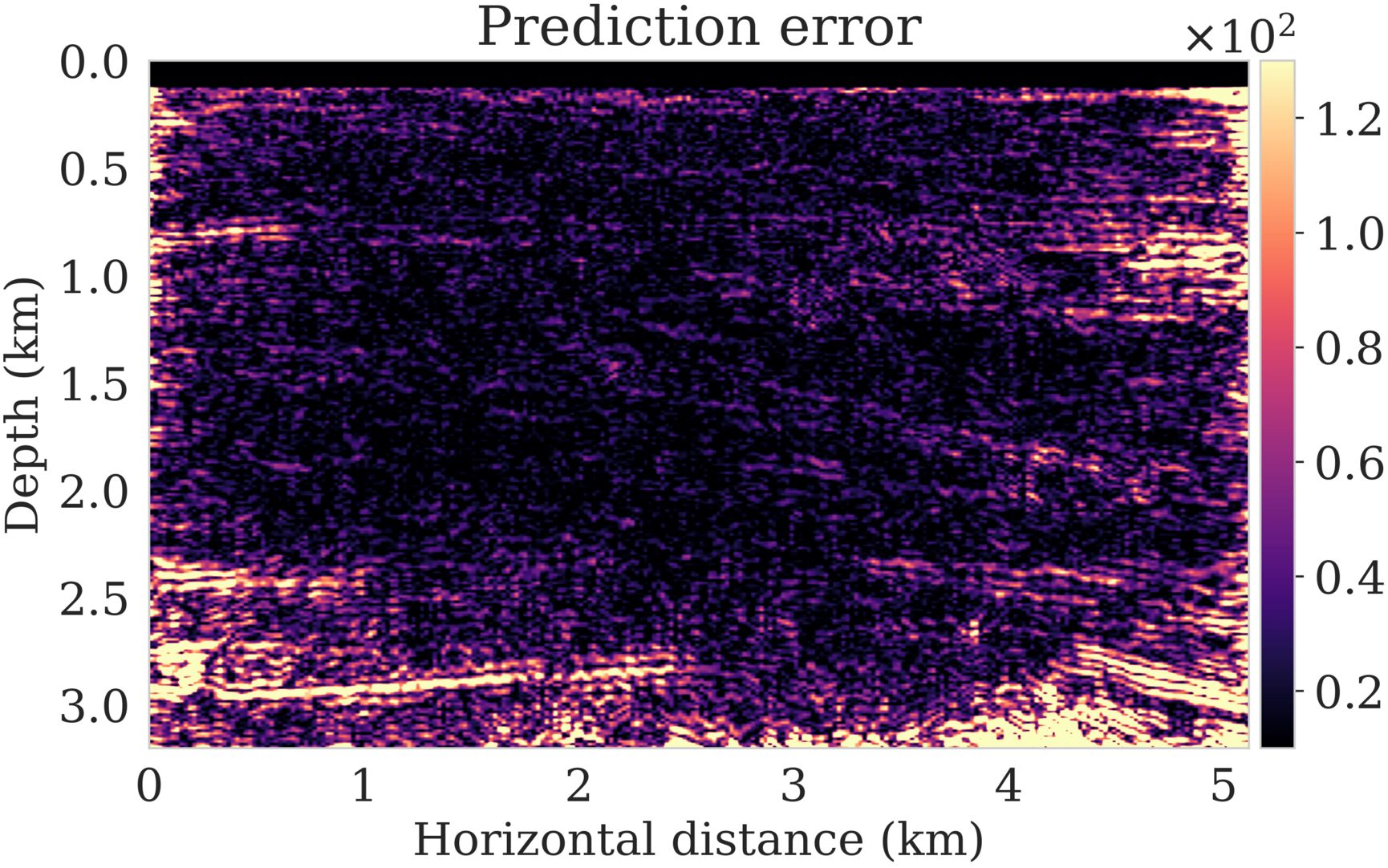}
        \vspace{0ex}\caption{}
        \label{fig:sup_idx-16_error}
        \end{subfigure}\hspace{0em}
        \begin{subfigure}[b]{0.31\textwidth}
            \includegraphics[width=\textwidth]{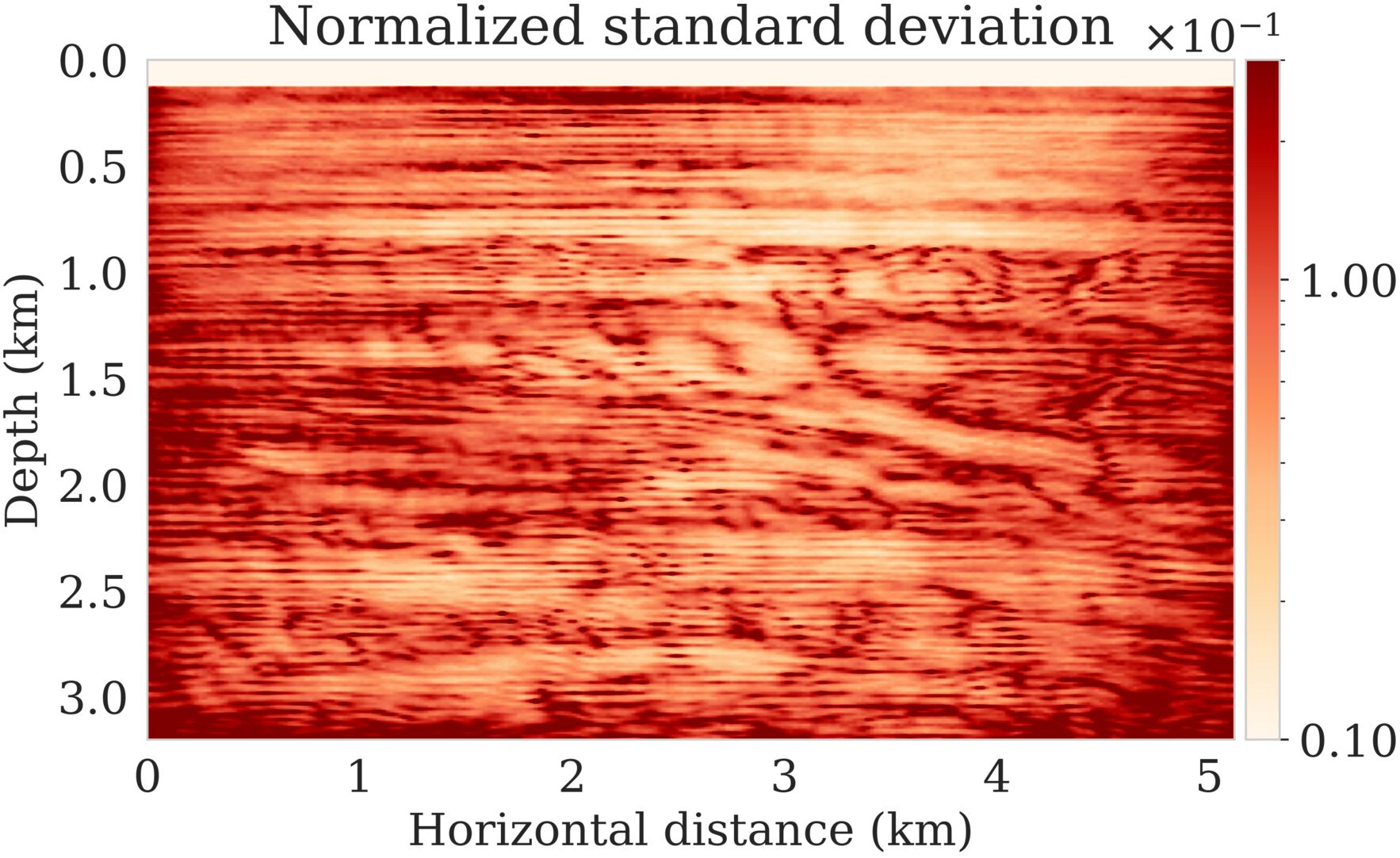}
        \vspace{0ex}\caption{}
        \label{fig:sup_idx-16_normalized_pointwise_std}
        \end{subfigure}\hspace{0em}
        \end{tabular}
    &
    \hspace{-2.5em}
    \begin{tabular}{c}
        \begin{subfigure}[b]{0.343\textwidth}
            \includegraphics[width=\textwidth]{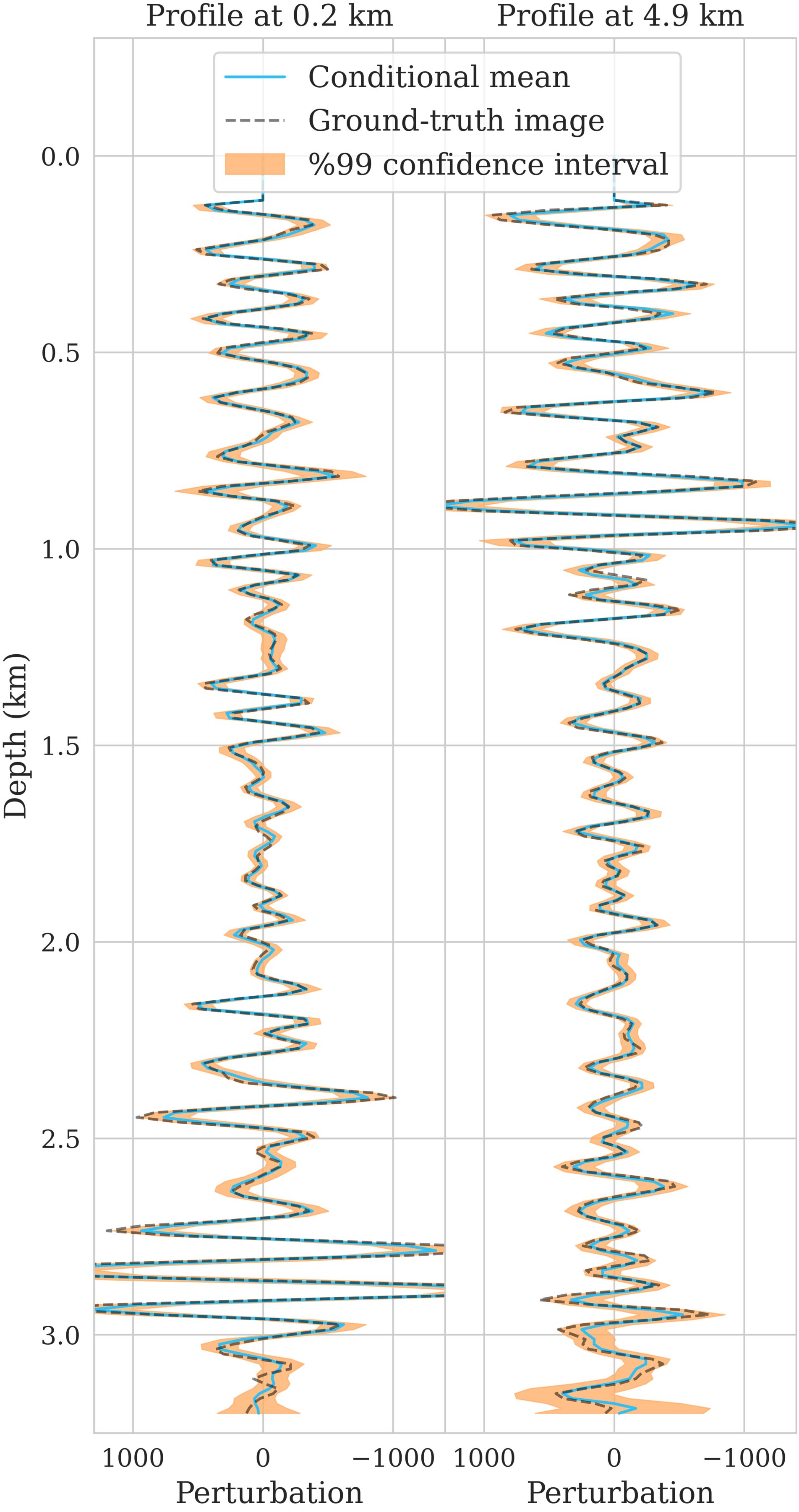}
        \vspace{0ex}\caption{}
        \label{fig:sup_idx-16_vertical_profile_at_10}
        \end{subfigure}\hspace{0em}
            \end{tabular}
    \end{tabular}
    \caption{Seismic imaging and uncertainty quantification. (a) Ground-truth seismic image. (b) Data after applying the adjoint Born operator (known as the reverse-time migrated image). (c) Conditional (posterior) mean. (d) Pointwise standard deviation. (e) Absolute error between Figures~\ref{fig:sup_idx-16_true_model} and~\ref{fig:sup_idx-16_conditional_mean}. (f) Normalized pointwise standard deviation by the envelope of the conditional mean. (g) Vertical profiles of the ground-truth image, conditional mean estimate, and the $99\%$ confidence interval at two lateral positions in the image.}
        \label{figs:sup_idx-16}
    \end{figure}

\begin{figure}[!htb]
    \centering
    \captionsetup[subfigure]{skip=-11pt}
    \begin{tabular}[t]{cc}
        \begin{tabular}{c}
        \begin{subfigure}[b]{0.31\textwidth}
            \includegraphics[width=\textwidth]{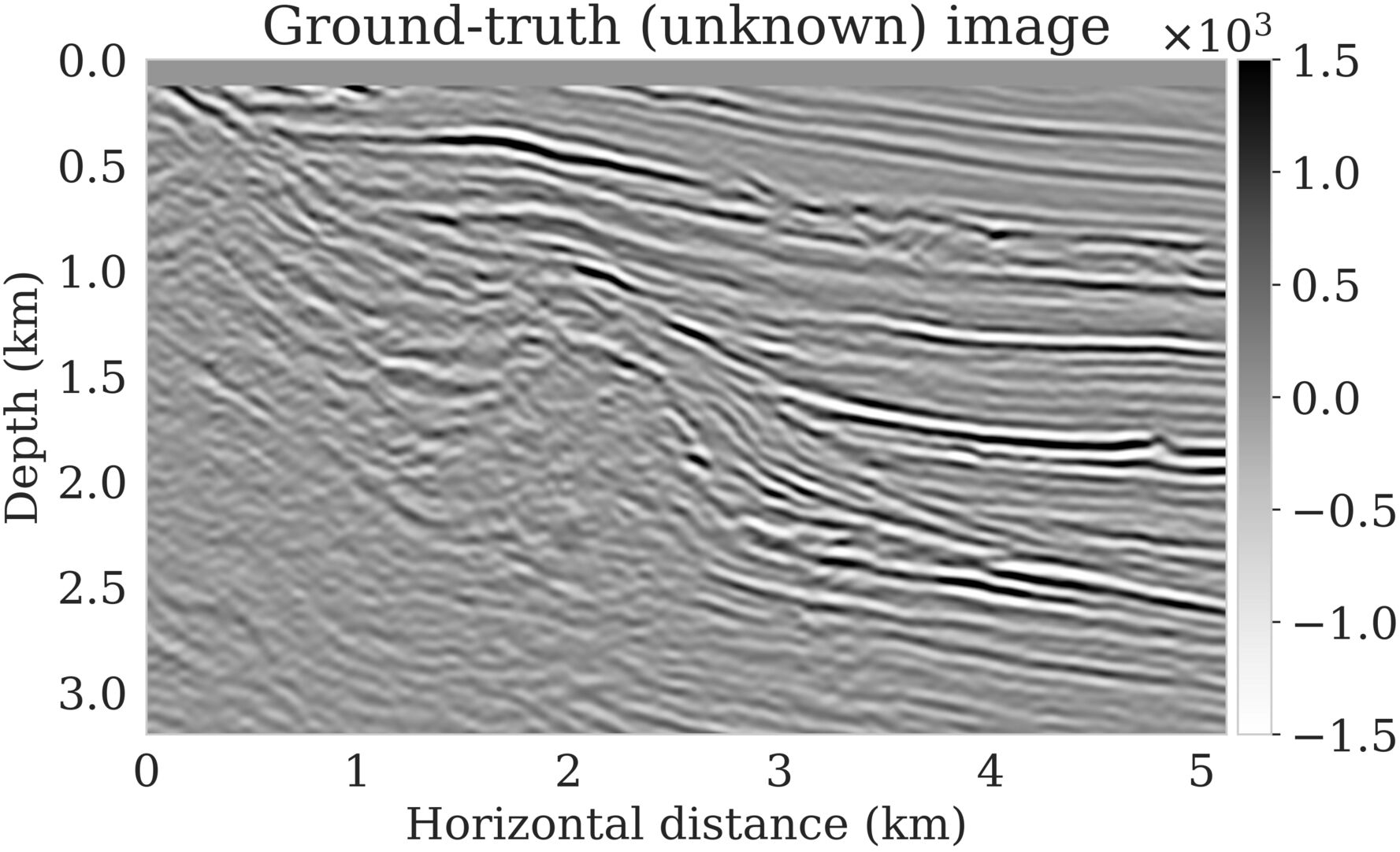}
        \vspace{0ex}\caption{}
        \label{fig:sup_idx-19_true_model}
        \end{subfigure}\hspace{0em}
        \begin{subfigure}[b]{0.31\textwidth}
            \includegraphics[width=\textwidth]{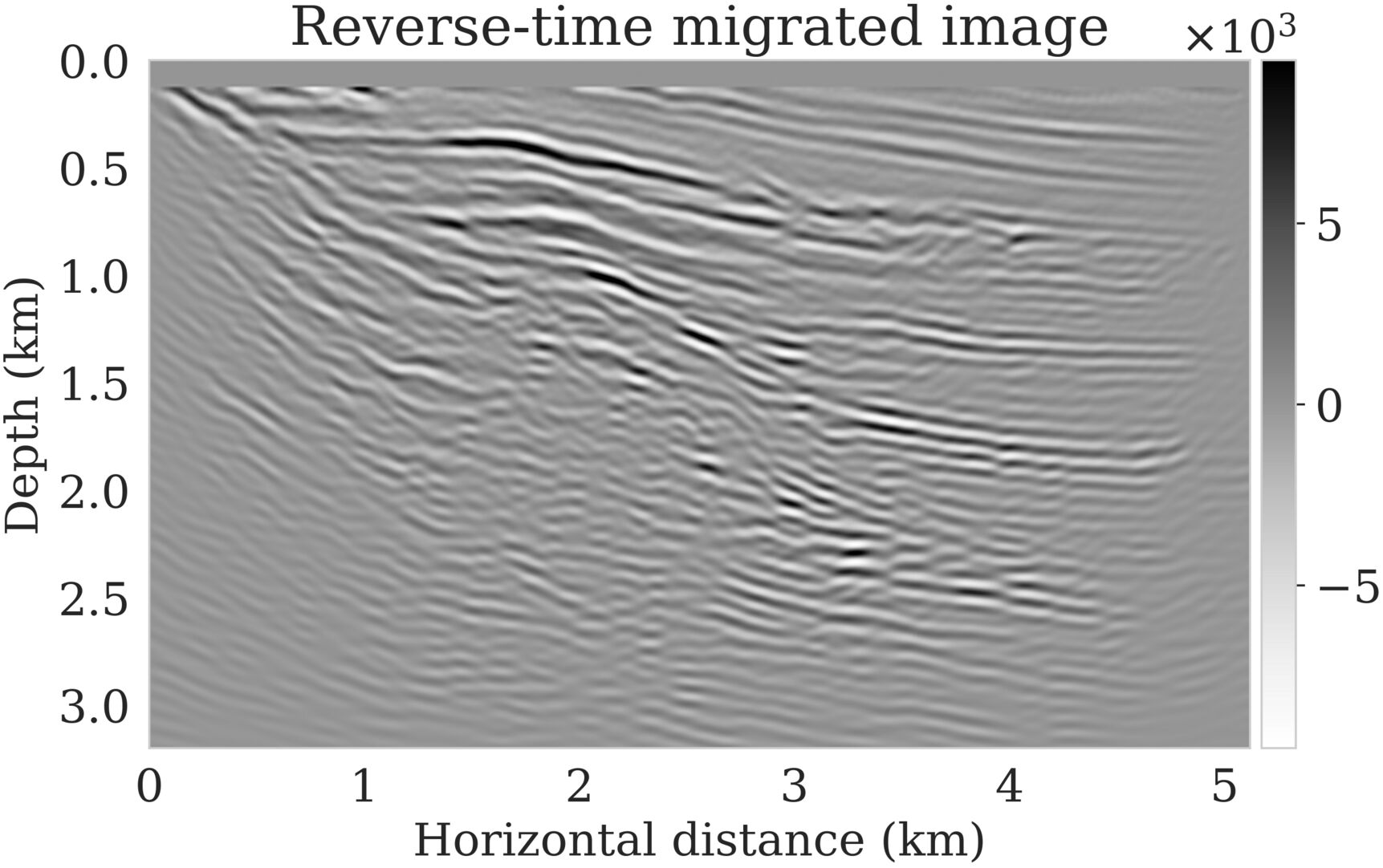}
        \vspace{0ex}\caption{}
        \label{fig:sup_idx-19_observed_data}
        \end{subfigure}\hspace{0em}
        \\
        \begin{subfigure}[b]{0.31\textwidth}
            \includegraphics[width=\textwidth]{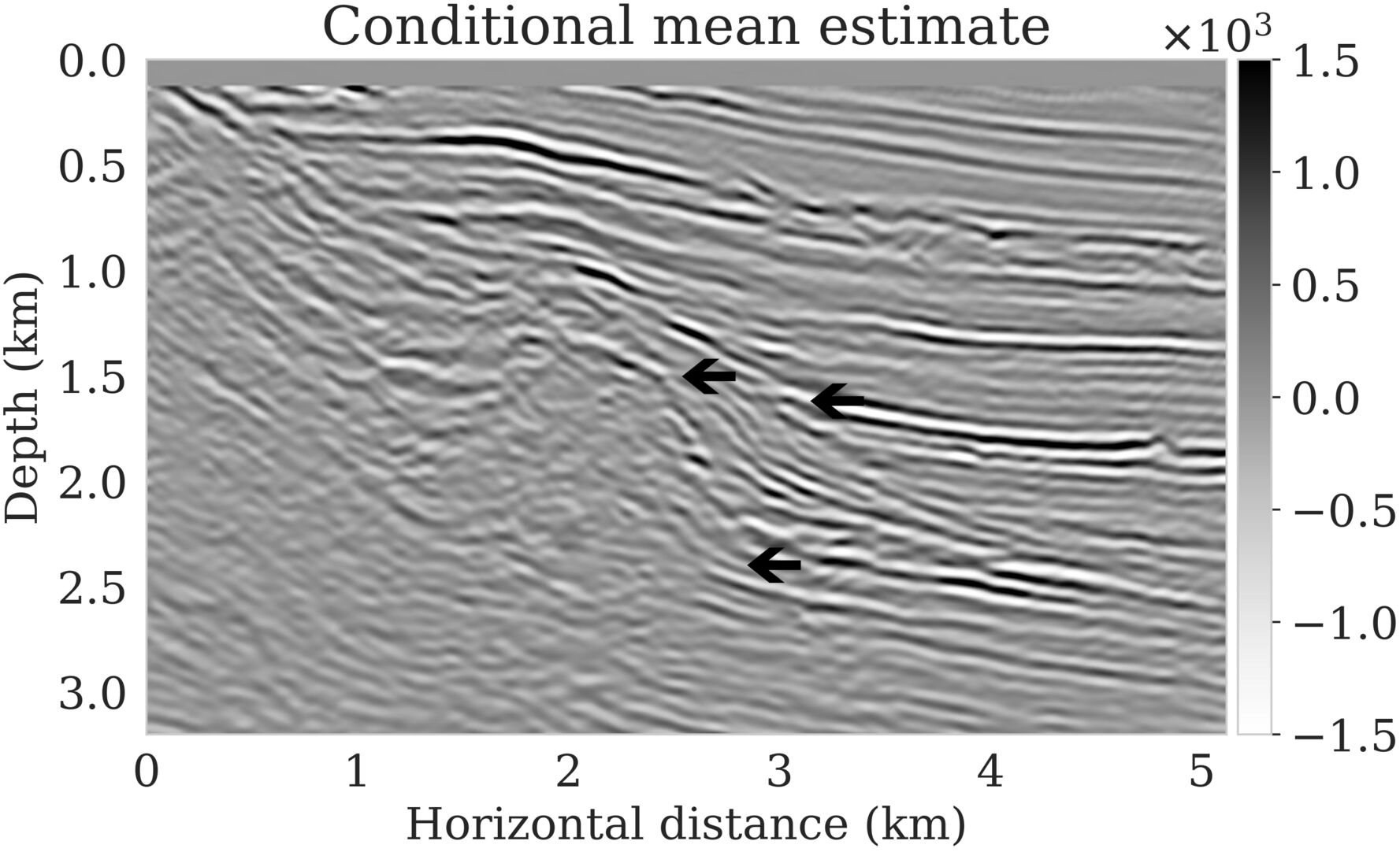}
        \vspace{0ex}\caption{}
        \label{fig:sup_idx-19_conditional_mean}
        \end{subfigure}\hspace{0em}
        \begin{subfigure}[b]{0.31\textwidth}
            \includegraphics[width=\textwidth]{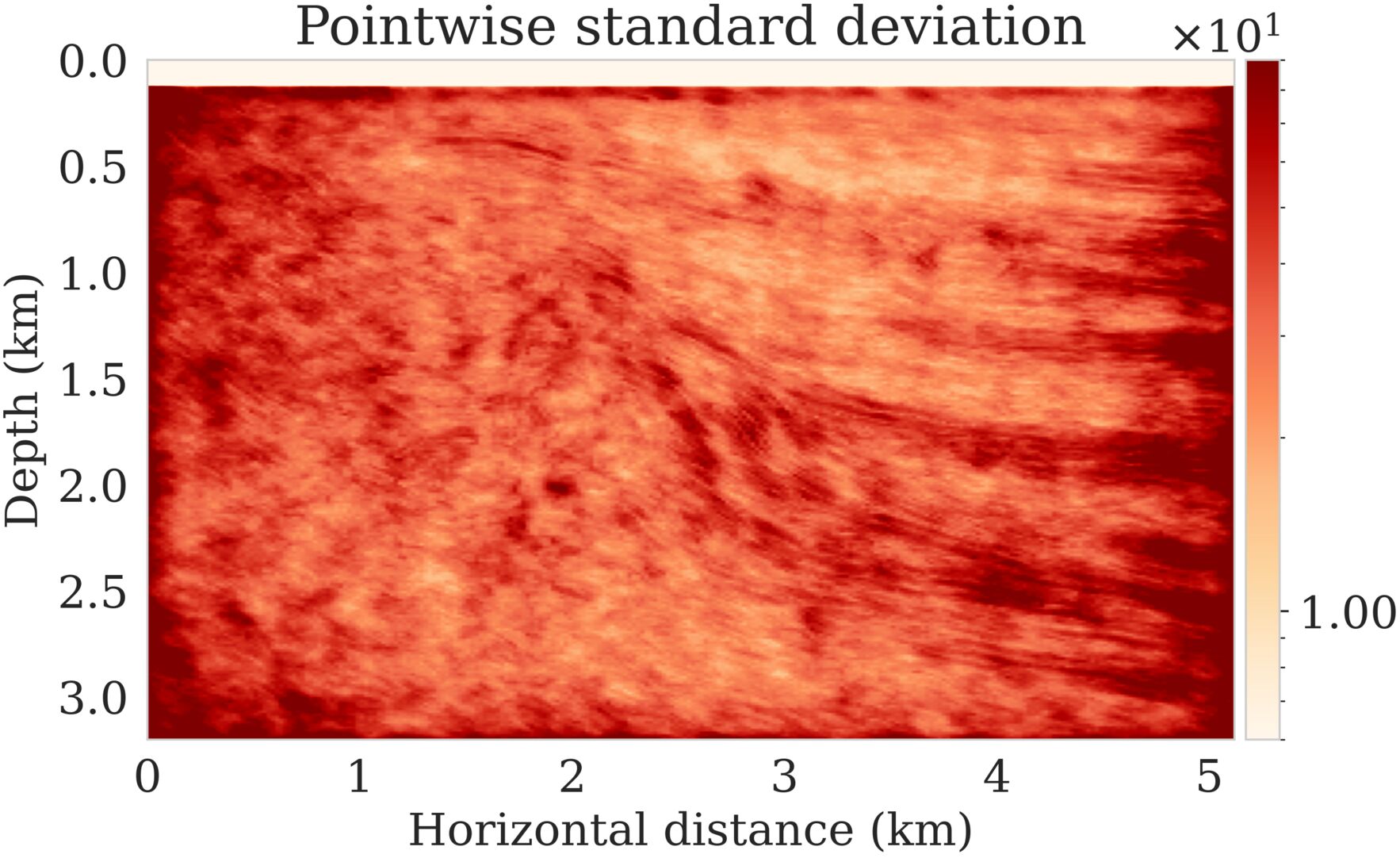}
        \vspace{0ex}\caption{}
        \label{fig:sup_idx-19_pointwise_std}
        \end{subfigure}\hspace{0em}
        \\
        \begin{subfigure}[b]{0.31\textwidth}
            \includegraphics[width=\textwidth]{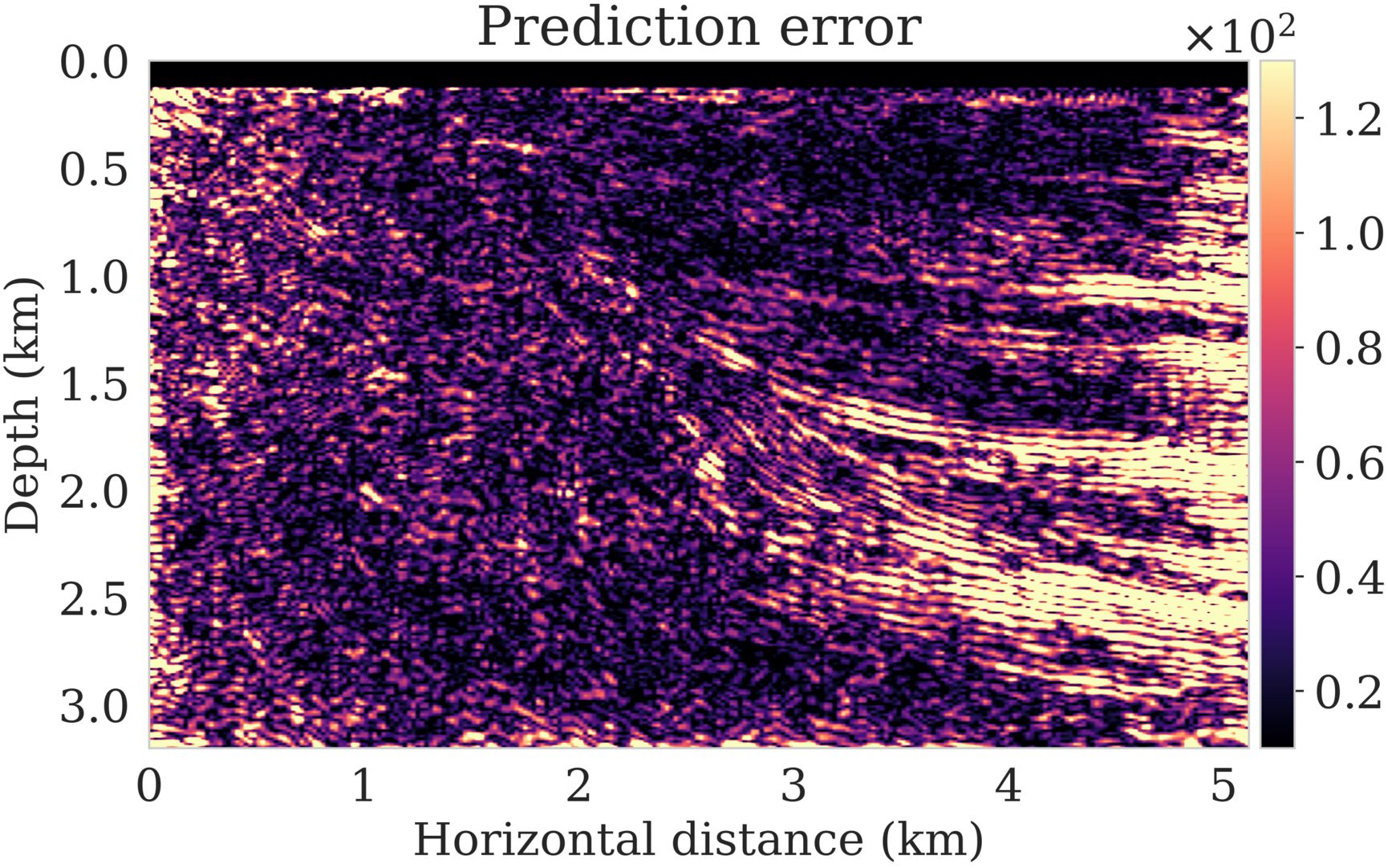}
        \vspace{0ex}\caption{}
        \label{fig:sup_idx-19_error}
        \end{subfigure}\hspace{0em}
        \begin{subfigure}[b]{0.31\textwidth}
            \includegraphics[width=\textwidth]{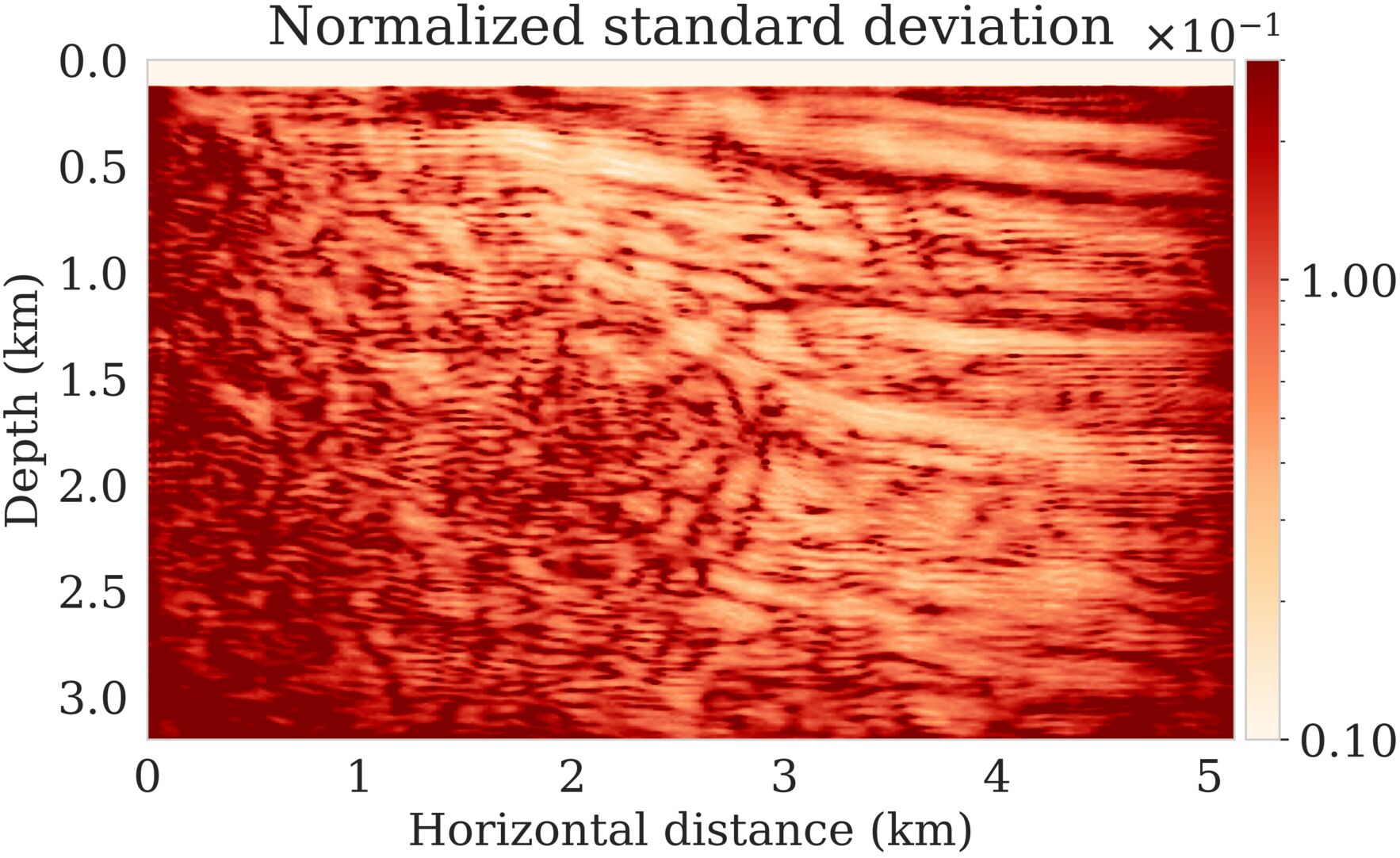}
        \vspace{0ex}\caption{}
        \label{fig:sup_idx-19_normalized_pointwise_std}
        \end{subfigure}\hspace{0em}
        \end{tabular}
    &
    \hspace{-2.5em}
    \begin{tabular}{c}
        \begin{subfigure}[b]{0.343\textwidth}
            \includegraphics[width=\textwidth]{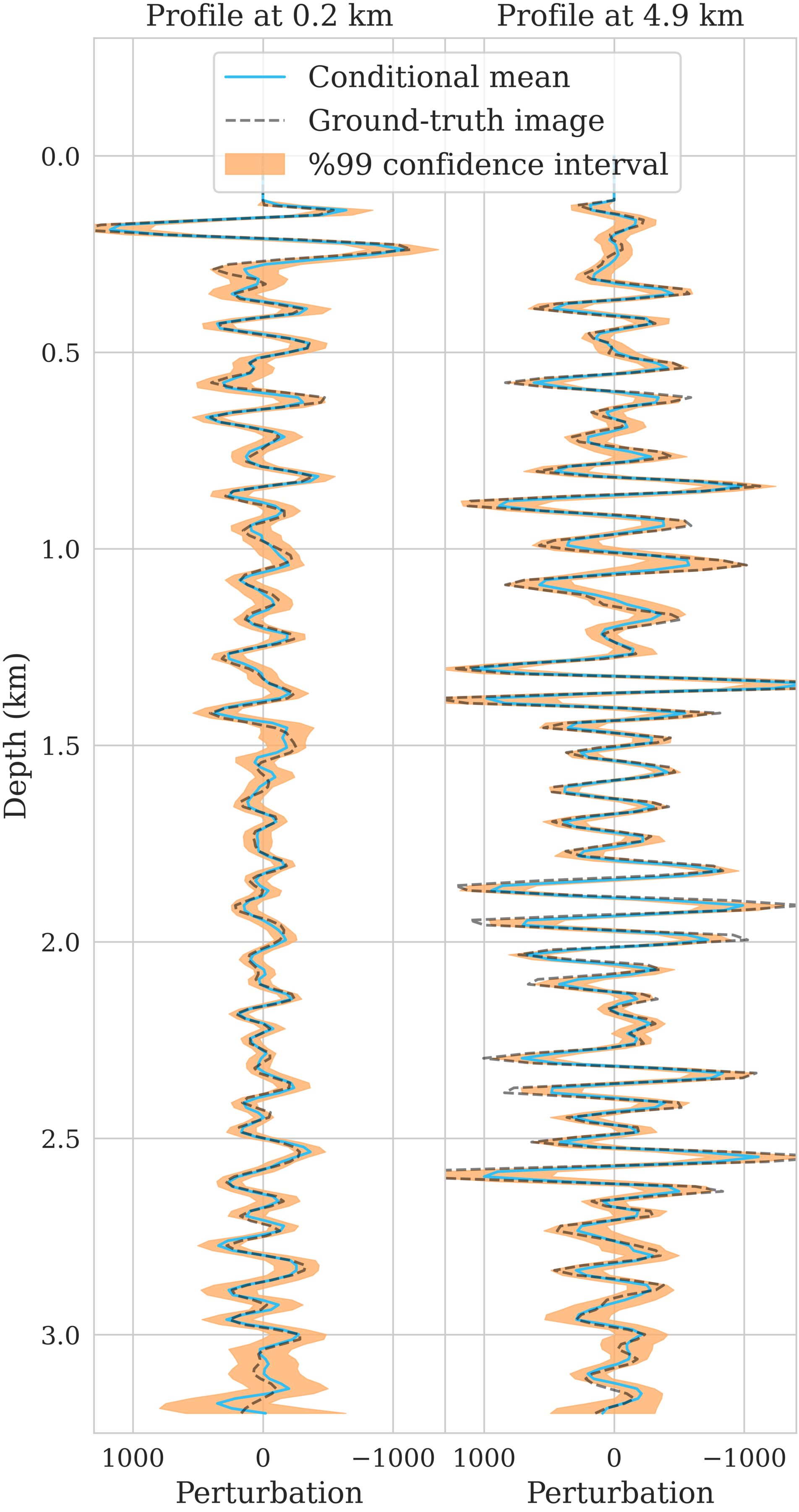}
        \vspace{0ex}\caption{}
        \label{fig:sup_idx-19_vertical_profile_at_10}
        \end{subfigure}\hspace{0em}
            \end{tabular}
    \end{tabular}
    \caption{Seismic imaging and uncertainty quantification. (a) Ground-truth seismic image. (b) Data after applying the adjoint Born operator (known as the reverse-time migrated image). (c) Conditional (posterior) mean. (d) Pointwise standard deviation. (e) Absolute error between Figures~\ref{fig:sup_idx-19_true_model} and~\ref{fig:sup_idx-19_conditional_mean}. (f) Normalized pointwise standard deviation by the envelope of the conditional mean. (g) Vertical profiles of the ground-truth image, conditional mean estimate, and the $99\%$ confidence interval at two lateral positions in the image.}
        \label{figs:sup_idx-19}
    \end{figure}

\end{document}